\documentclass[twoside,11pt]{article}

\usepackage{times}
\usepackage{fullpage}

\usepackage{graphicx,tikz} 
\usepackage{subfigure}

\usepackage[round]{natbib}

\usepackage{algorithm}
\usepackage{algorithmic}
\usepackage{hyperref}


\usepackage{amsmath,amsthm,amssymb,mathtools}
\usepackage{Definitions}

\theoremstyle{plain}

\newtheorem{claim}{Claim}
\newtheorem{fact}{Fact}

\theoremstyle{definition}
\newtheorem{defn}{Definition}

\theoremstyle{remark}

\newcommand{\littleheader}[1]{\noindent \textbf{#1} }
\newenvironment{proofsketch}{\par\noindent{\bfseries\upshape Proof Sketch:}}{\hfill \qed}

\newtheorem*{rep@theorem}{\rep@title}
\newenvironment{oneshot}[1]{\def\rep@title{#1} \begin{rep@theorem}}{\end{rep@theorem}} 

\newcommand{\x}{\mathcal{L}}
\newcommand{\R}{\RR}
\newcommand{\N}{\NN}
\newcommand{\MP}{\mathbf{P}} 

\newcommand{\MA}{\mathbf{A}}
\newcommand{\MB}{\mathbf{B}}
\newcommand{\ME}{\mathbf{E}}
\newcommand{\MU}{\mathbf{U}}
\newcommand{\MD}{\mathbf{\Sigma}}
\newcommand{\MV}{\mathbf{V}}
\newcommand{\MX}{\mathbf{X}}
\newcommand{\MY}{\mathbf{Y}}
\newcommand{\MQ}{\mathbf{Q}}
\newcommand{\Mbf}{\mathbf{M}}
\newcommand{\MS}{\mathbf{S}}
\newcommand{\MH}{\mathbf{H}}
\newcommand{\MI}{\mathbf{I}}
\newcommand{\ML}{\mathbf{L}}
\newcommand{\MR}{\mathbf{R}}
\newcommand{\MT}{\mathbf{T}}
\newcommand{\MZ}{\mathbf{Z}}
\newcommand{\MOmega}{\mathbf{\Omega}}
\newcommand{\MPhi}{\mathbf{\Phi}}
\newcommand{\tc}[2]{{#1}^{(#2)}}
\newcommand{\tcd}[1]{\widehat{#1}}
\newcommand{\nnz}{\mathrm{nnz}}
\newcommand{\lspan}[1]{L_{#1}}

\newcommand{\gpca}[1]{\mathbf{\tilde{#1}}}
\newcommand{\gpcap}[1]{\tilde{#1}}

\newcommand{\proj}{\pi}

\newcommand{\nc}{k}
\newcommand{\dimr}{r}

\newcommand{\mybullet}{\rhd}


\begin{document}

\title{Improved Distributed Principal Component Analysis}
\author{
Maria-Florina Balcan\\
School of Computer Science\\
Carnegie Mellon University\\
\texttt{ninamf@cs.cmu.edu}\\
\and
Vandana Kanchanapally\\
School of Computer Science\\ 
Georgia Institute of Technology\\
\texttt{vvandana@gatech.edu}\\
\and 
Yingyu Liang\\
Department of Computer Science\\
Princeton University\\
\texttt{yingyul@cs.princeton.edu}\\
\and
David Woodruff\\
Almaden Research Center\\
IBM Research\\
\texttt{dpwoodru@us.ibm.com}
}

\date{}

\maketitle

\begin{abstract}
We study the distributed computing setting in which there are multiple servers, each holding a set of points, who wish to compute functions on the union of their point sets. A key task in this setting is Principal Component Analysis (PCA), in which the servers would like to compute a low dimensional subspace capturing as much of the variance of the union of their point sets as possible. Given a procedure for approximate PCA, one can use it to approximately solve $\ell_2$-error fitting problems such as $k$-means clustering and subspace clustering. The essential properties of an approximate distributed PCA algorithm are its communication cost and computational efficiency for a given desired accuracy in downstream applications. We give new algorithms and analyses for distributed PCA which lead to improved communication and computational costs for $k$-means clustering and related problems. Our empirical study on real world data shows a speedup of orders of magnitude, preserving communication with only a negligible degradation in solution quality.
%
%
%
Some of these techniques we develop, such as a general transformation from a constant success probability subspace embedding to a high success probability subspace embedding with a dimension and sparsity independent of the success probability, may be of independent interest.
\end{abstract}

\section{Introduction}

Since data is often partitioned across multiple servers~\citep{olston2003adaptive,corbett2012spanner,mitra2011characterizing}, there is an increased interest in computing on it in the distributed model. A basic tool for distributed data analysis is Principal Component Analysis (PCA). The goal of PCA is to find an $\dimr$-dimensional (affine) subspace that captures as much of the variance of the data as possible. Hence, it can reveal low-dimensional structure in very high dimensional data. Moreover, it can serve as a preprocessing step to reduce the data dimension in various machine learning tasks, such as $k$-means, Non-Negative Matrix Factorization (NNMF)~\citep{seung2001algorithms} and Latent Dirichlet Allocation (LDA)~\citep{blei2003latent}. 

In the distributed model, approximate PCA was used by \citet{dan2013tiny} for solving a number of shape fitting problems such as $k$-means clustering, where the approximation PCA solution is computed based on a summary of the data called {\it coreset}.  The coreset has the property that local coresets can be easily combined across servers into a global coreset, which then leads to an approximate PCA solution to the union of the data sets. Designing small coresets therefore leads to communication-efficient protocols. Coresets have the nice property that their size typically does not depend on the number $n$ of points being approximated. A beautiful property of the coresets developed in \citep{dan2013tiny} is that for approximate PCA their size also only depends linearly on the dimension $d$, whereas previous coresets depended quadratically on $d$ \citep{feldman2011unified}. This gives the best known communication protocols for approximate PCA and $k$-means clustering. 

Despite this recent exciting progress, several important questions remain. First, can we improve the communication further as a function of the number of servers, the approximation error, and other parameters of the downstream applications (such as the number $k$ of clusters in $k$-means clustering)? Second, while preserving optimal or nearly-optimal communication, can we improve the computational costs of the protocols? We note that in the protocols of Feldman et al. each server has to run a singular value decomposition (SVD) on her local data set, while additional work needs to be performed to combine the outputs of each server into a global approximate PCA. Third, are these algorithms practical and do they scale well with large-scale datasets?
In this paper we give answers to the above questions. To state our results more precisely, we first define the model and the problems.

In the distributed setting, we consider a set of $s$ servers each of which can communicate with a central coordinator. 
The global data $\MP \in \RR^{n \times d}$, consisting of $n$ points in $d$ dimension, is arbitrarily partitioned on the  servers, where the server $i$ holds $n_i$ points $\MP_i$. The PCA problem is to find an $\dimr$-dimensional subspace which minimizes the sum of the $\ell_2$ distances of the points to their projections on the subspace.

For approximate distributed PCA, the following protocol is implicit in \citep{dan2013tiny}: each server $i$ computes
its top $O(r/\epsilon)$ principal components $\MY_i$ of $\MP_i$ and sends them to the coordinator. The coordinator stacks
the matrices $\MY_i$ on top of each other, forming an $O(sr/\epsilon) \times d$ matrix $\MY$, 
and computes
the top $r$ principal components of $\MY$, and returns these to the servers. This provides a relative-error
approximation to the PCA problem. We refer to this algorithm as Algorithm {\sf disPCA}. 

\littleheader{Our Contributions.}
Our results are summarized as follows.

{\it Improved Communication:} We improve the communication cost for using distributed PCA for $k$-means clustering and similar $\ell_2$-fitting problems. 
The best previous approach is to use Corollary 4.5 in \citep{dan2013tiny}, which shows
that given a data matrix $\MP$, if we project the rows onto the space spanned by the top $O(k/\epsilon^2)$ principal components,
and solve the $k$-means problem in this subspace, we obtain a $(1+\epsilon)$-approximation. In the distributed setting,
this would require first running Algorithm {\sf disPCA} to compute the $O(k/\epsilon^2)$ global principal components, which has communication $O(skd/\epsilon^3)$.
Then one can solve a distributed $k$-means problem in this subspace, and an $\alpha$-approximation in it 
translates to an overall $\alpha(1+\epsilon)$ approximation.

Our Theorem \ref{thm:coreset_gen} shows that it suffices to run Algorithm {\sf disPCA} while only incurring $O(skd/\epsilon^2)$ communication to compute a set of $O(k/\epsilon^2)$ vectors, which can preserve up to a $(1+\epsilon)$ factor the $k$-means cost (and the cost in similar $\ell_2$ error problems).  This set of vectors, though not the optimal $O(k/\epsilon^2)$ global principal components, enjoy a property called \emph{close projection}.\footnote{The close projection property is as follows. Let $\widetilde{\MP}$ denote the projection of $\MP$ on subspace spanned by these vectors. Then for any subspace $\MX$ of dimension $k$, the projections of $\MP$ and $\widetilde{\MP}$ on $\MX$ are close. See Lemma~\ref{thm:DisPCA_P_l2} for the details.} This property guarantees that the $k$-means cost (and the cost in similar $\ell_2$ error problems) of the data  projected on these vectors approximates that of the original data up to a multiplicative $(1+\epsilon)$ factor, which suffices for solving these downstream applications. 
Our communication is thus a $1/\epsilon$ factor better, and illustrates that for downstream applications it is sometimes important to ``open up the box'' rather than to directly use the guarantees of a generic PCA algorithm (which would give $O(skd/\epsilon^3)$  communication). One feature of this approach is that by using the distributed $k$-means algorithm in \citep{disClustering13} on the projected data, the coordinator can sample points from the servers proportional to their local $k$-means cost solutions, which reduces the communication roughly by a factor of $s$ in the $k$-means step, which would come from each server sending their local $k$-means coreset to the coordinator. 
Furthermore, before applying {\sf disPCA} and distributed $k$-means algorithms, one can first run any other dimension reduction to dimension $d'$ so that the $k$-means cost is preserved up to certain accuracy. For example, if we want a $(1+\epsilon)$ approximation factor, we can set $d'=O(\log n /\epsilon^2)$ by a Johnson-Lindenstrauss transform; if we want a larger $2+\epsilon$ approximation factor, we can set $d' =  O(k/\epsilon^2)$ using~\citep{bzmd11}. In this way the parameter $d$ in the above communication cost  bound can be replaced by $d'$.
Note that unlike these dimension reduction methods, our algorithm for projecting onto principal components is deterministic and does not incur error probability. 

{\it Improved Computation:} We turn to the computational cost of Algorithm {\sf disPCA}, which to the best of our knowledge has not
been addressed. A major bottleneck is that each server is computing a singular value decomposition (SVD)
of its point set $\MP_i$, which takes $\min(n_i d^2, n_i^2 d)$ time. We change Algorithm {\sf disPCA} to instead have each server first
sample an oblivious subspace embedding (OSE) \citep{s06,clarkson2013low,nelson2012osnap,meng2013low} matrix $\MH_i$, and
instead run the algorithm on the point set defined by the rows of $\MH_i \MP_i$. Using known OSEs, one can choose
$\MH_i$ to have only a single non-zero entry per column and thus $\MH_i \MP_i$ can be computed in $\nnz(\MP_i)$ time. Moreover,
the number of rows of $\MH_i$ is $O(d^2/\epsilon^2)$, which may be significantly less than the original $n_i$ number of rows. This
number of rows can be further reducted to $O(d \log^{O(1)} d / \epsilon^2)$ if one is willing to spend $O(\nnz(\MP_i) \log^{O(1)}d/\epsilon)$
time \citep{nelson2012osnap}. 
We note that the number of non-zero entries of $\MH_i \MP_i$ is no more than that of $\MP_i$.

One technical issue is that each of $s$ servers is locally performing a subspace embedding, which succeeds with only constant
probability. If we want a single non-zero entry per column of $\MH_i$, to achieve success probability $1-O(1/s)$ so that we
can union bound over all $s$ servers succeeding, we naively would need to increase the number of rows of $\MH_i$ by a factor
linear in $s$. 
We give a general technique, which takes a subspace embedding that succeeds with constant probability as a black box, and show how to perform a procedure which applies it $O(\log 1/\delta)$ times independently and from these applications finds one which is guaranteed to succeed with probability $1-\delta$. Thus, in this setting the players can compute a subspace embedding of their data in $\nnz(\MP_i)$ time, for which the number of non-zero entries of $\MH_i \MP_i$ is no larger than that of $\MP_i$, and without incurring this additional factor of $s$. This may be of independent interest.

It may still be expensive to perform the SVD of
$\MH_i \MP_i$ and for the coordinator to perform an SVD on $\MY$ in Algorithm {\sf disPCA}. We therefore replace the SVD computation with
a randomized approximate SVD computation with spectral norm error. Our contribution here is to analyze the error in distributed
PCA and $k$-means after performing these speedups. 

{\it Empirical Results:}
Our speedups result in significant computational savings. 
The randomized techniques we use reduce the time by orders of magnitude on medium and large-scal data sets, 
while preserving the communication cost. Although the theory predicts a new small additive error because of our speedups,
in our experiments the solution quality was only negligibly affected. 

\littleheader{Related Work}
A number of algorithms for approximate distributed PCA have been proposed~\citep{qu2002principal,bai2005principal,le2008distributed,macua2010consensus,dan2013tiny}, but either without theoretical guarantees, or without considering communication. 
\citet{qu2002principal} proposed an algorithm but provided no analysis on the tradeoff between communication and approximation. Most closely related to our work is~\citep{dan2013tiny}, which observes that the top singular vectors of the local point set can be viewed as its summary and the union of the local summaries can be viewed as a summary of the global data, i.e., Algorithm {\sf disPCA} discussed above.

In \citep{kannan2013nimble} the authors study algorithms in the arbitrary partition model in which each server holds a matrix $\MP_i$ and $\MP = \sum_{i=1}^s \MP_i$. Thus, each row of $\MP$ is additively shared across the $s$ servers, whereas in our model each row of $\MP$ belongs to a single server, though duplicate rows are allowed. Our model is motivated by applications in which points are indecomposable entities. As our model is a special case of the arbitrary partition model, we can achieve more efficient algorithms. For instance, our distributed PCA algorithms provide much stronger guarantees, see, e.g., Lemma \ref{thm:DisPCA_P_l2}, which are needed for the downstream $k$-means application. 
Moreover, our $k$-means algorithms are more general, in the sense that they do not make a well-separability assumption, and more efficient in that the communication of \citep{kannan2013nimble} is $O(sd^2) + s (k/\epsilon)^{O(1)}$ words as opposed to our 
$O(sdk/\epsilon^2) + sk + (k/\epsilon)^{O(1)}$.  

After the announce of this work, \cite{cohen14dim} improve the guarantee for the $k$-means application in two ways. First, they tighten the result in~\citep{dan2013tiny}, showing that projecting to just the $O(k/\epsilon)$ rather than $O(k/\epsilon^2)$ top singular vectors is sufficient to approximate $k$-means with $(1+ \epsilon)$ error. Second, they show that performing a Johnson-Lindenstrauss transformation down to $O(k/\epsilon^2)$ dimension gives $(1+\epsilon)$ approximation without requiring a $\log(n)$ dependence. This can be used as a preprocessing step before our algorithm, replacing $d$ with $O(k/\epsilon^2)$ in our communication bounds. They further show how to reduce the dimension to $O(k/\epsilon)$ using only $O(sk/\epsilon)$vectors, but by a technique different from distributed PCA.

Other related work includes the recent \citep{gp13} (see also the references therein), who give a deterministic streaming algorithm for low rank approximation in which each point of $\MP$ is seen one at a time and uses $O(dk/\epsilon)$ words of communication. Their algorithm naturally gives an $O(sdk/\epsilon)$ communication algorithm for low rank approximation in the distributed model. However, their algorithm for PCA doesn't satisfy the stronger guarantees of Lemma \ref{thm:DisPCA_P_l2}, and therefore it is unclear how to use it for $k$-means clustering. It also involves an SVD computation for each point, making the overall computation per server $O(n_i d r^2/\epsilon^2)$, which is slower than what we achieve, and it is not clear how their algorithm can exploit sparsity.

Speeding up large scale PCA using different versions of subspace embeddings was also considered in \citep{KM13a}, though not in a distributed setting and not for $\ell_2$-error shape fitting problems. Also, their error guarantees are in terms of the $r$-th singular value gap, and are incomparable to ours.

\section{Preliminaries}\label{sec:form}
\littleheader{Communication Model.}
In the distributed setting, we consider a set of $s$ nodes $\Vcal=\{v_i, 1\leq i\leq
s\}$, each of which can communicate with a central coordinator
$v_0$.
On each node $v_i$, there is a local data matrix $\MP_i \in \R^{n_i\times d}$ having $n_i$ data points in $d$ dimension ($n_i > d$).
The global data $\MP \in \R^{n \times d}$ is then a concatenation of the local data matrix, i.e.\
$\MP^\top = \left[\MP_1^\top, \MP_2^\top, \dots, \MP_s^\top\right]$ and $n=\sum_{i=1}^s n_i$.
Let $p_i$ denote the $i$-th row of $\MP$.
Throughout the paper, we assume that the data points are centered to have zero mean,
i.e., \ $\sum_{i=1}^n p_i = 0$.
Uncentered data requires a rank-one modification to the algorithms, whose communication and computation costs are dominated
by those in the other steps.

\littleheader{Approximate PCA and $\ell_2$-Error Fitting.}
For a matrix $\MA=[a_{ij}]$, let $\|\MA\|^2_F = {\sum_{i,j}a_{ij}^2}$ be its Frobenius
norm, and let $\sigma_i(\MA)$ be the $i$-th singular value of $\MA$.
Let $\tc{\MA}{t}$ denote the matrix that contains the first $t$ columns of $\MA$.
Let $\lspan{\MX}$ denote the linear subspace spanned by the columns of $\MX$.
Note that for an orthonormal matrix $\MX$,
the projection of a point $p$ to $\lspan{\MX}$ will be $p\MX$ using the coordinates with respect to the column space of $\MX$, and will be $p\MX\MX^\top$ using the original coordinates.
Let $\proj_L(p)$ be its projection onto subspace $L$
and let $\proj_\MX(p)$ be shorthand for $\proj_{\lspan{\MX}}(p) = p\MX\MX^\top$.

For a point $p \in \R^d$ and a subspace $L \subseteq \R^d$, we denote
the squared distance between $p$ and $L$ by
$$d^2(p,L) := \min_{q\in L} \|p-q\|_2^2 = \|p-\proj_L(p)\|_2^2.$$


%
\begin{defn}
The linear (or affine) $\dimr$-Subspace $\nc$-Clustering on  $\MP \in \R^{n \times d}$ is
\begin{eqnarray}
\min_{\x } d^2(\MP, \x) := \sum_{i=1}^n \min_{L \in \x} d^2(p_i,L)\label{pro:subspace_clustering}
\end{eqnarray}
where $\MP$ is an $n\times d$ matrix whose rows are $p_1, \dots, p_n$,
and $\x=\{L_j\}_{j=1}^{\nc}$ is a set of $\nc$ centers, each of which is an $\dimr$-dimensional linear (or affine) subspace.
\end{defn}
PCA is a special case when $\nc=1$ and the center is an $\dimr$-dimensional subspace. It is well known that the optimal $\dimr$-dimensional subspace is spanned by the top $r$ eigen-vectors of the covariance matrix $\MP^\top \MP$, also known as the principal components. Equivalently, these vectors are the right singular vectors of $\MP$,  and can be found using the singular value decomposition (SVD) on $\MP$.  

Another special case of $\dimr$-Subspace $\nc$-Clustering is $k$-means clustering when the centers are points ($\dimr=0$).
Constrained versions of this problem include NNMF where the $\dimr$-dimensional subspace should be spanned by positive vectors,
and LDA which assumes a prior distribution defining a probability for each $\dimr$-dimensional subspace.
We will primarily be concerned with relative-error approximation algorithms, for which we would like to output a set
$\x'$ of $k$ centers for which $d^2(\MP, \x') \leq (1+\epsilon) \min_{\x} d^2(\MP, \x)$.

\section{Tradeoff between Communication and Solution Quality}\label{sec:review}

Algorithm {\sf disPCA} for distributed PCA is suggested in~\citep{qu2002principal,dan2013tiny}, which consists of a local stage and a global stage. In the local stage, each node performs SVD on its local data matrix, and communicates the first $t_1$ singular values $\tc{\MD_i}{t_1}$ and the first $t_1$ right singular vectors $\tc{\MV_i}{t_1}$ to the central coordinator. Then in the global stage, the coordinator concatenates $\tc{\MD_i}{t_1}(\tc{\MV_i}{t_1})^\top$ to form a matrix $\MY$, and performs SVD on it to get the first $t_2$ right singular vectors.
See Algorithm~\ref{alg:disPCA} for the details and see Figure~\ref{fig:algo} for an illustration. 

\begin{algorithm}[t]
\caption{Distributed PCA algorithm {\sf disPCA}}
\label{alg:disPCA}
\begin{algorithmic}[1]
\REQUIRE{local data $\{\MP_i\}_{i=1}^s$ and parameter $t_1, t_2 \in \N_+$.}
\FOR{each node $v_i \in \Vcal$ }
    \STATE{Compute local SVD: $\MP_i = \MU_i \MD_i \MV_i^\top$.}
    \STATE{Send $\tc{\MD_i}{t_1}, \tc{\MV_i}{t_1}$ to the central coordinator.}
    \STATE{\quad $\mybullet$ {dimension: $[\MP_i]_{n_i\times d}, [\MU_i]_{n_i\times n_i}, [\MD_i]_{n_i\times d}, [\MV_i]_{d\times d}, [\tc{\MD_i}{t_1}]_{n_i\times t_1}, [\tc{\MV_i}{t_1}]_{d\times t_1}$} }
\ENDFOR
\FOR{the central coordinator}
    \STATE{Set $\MY_i= \tc{\MD_i}{t_1}(\tc{\MV_i}{t_1})^\top$, $\MY^\top = [\MY_1^\top,\dots,\MY_s^\top]$.}
    \STATE{Compute global SVD: $\MY = \MU \MD \MV^\top$.}
    \STATE{\quad $\mybullet$ {dimension: $[\MY_i]_{n_i\times d}, [\MY]_{n\times d}, [\MU]_{n\times n}, [\MD]_{n\times d}, [\MV]_{d\times d}, [\tc{\MV}{t_2}]_{d\times t_2}$} }
\ENDFOR
\ENSURE{$\tc{\MV}{t_2}$.}
\end{algorithmic}
\end{algorithm}

\begin{figure}[t]
\begin{center}
\begin{eqnarray*}
\MP = 
\left [
\begin{array}{c}
\MP_1 \\
\vdots \\
\MP_s
\end{array} 
\right ]
\begin{array}{c}
\xrightarrow{\textrm{Local PCA}} \\
\vdots \\
\xrightarrow{\textrm{Local PCA}}
\end{array} 
\left [
\begin{array}{c}
\MD^{(t_1)}_1 \rbr{\MV_1^{(t_1)}}^\top \\
\vdots \\
\MD^{(t_1)}_s \rbr{\MV_s^{(t_1)}}^\top
\end{array} 
\right ]
=
\left [
\begin{array}{c}
\MY_1 \\
\vdots \\
\MY_s
\end{array} 
\right ]
= \MY \xrightarrow{\textrm{Global PCA}} \MV^{(t_2)}
\end{eqnarray*}
\end{center}
\caption{
{
The key points of the algorithm {\sf disPCA}. 
}
}\label{fig:algo}

\end{figure}

To get some intuition, consider the easy case when the data points actually lie in an $\dimr$-dimensional subspace. We can run Algorithm {\sf disPCA}  
with $t_1=t_2=\dimr$. Since $\MP_i$ has rank $\dimr$, its projection to the subspace spanned by its first $t_1=\dimr$ right singular vectors,  $\tcd{\MP}_i = \MU_i \tc{\MD_i}{\dimr} (\tc{\MV_i}{\dimr})^\top$,  is identical to $\MP_i$. Then we only need to do PCA on $\tcd{\MP}$, the concatenation of $\tcd{\MP}_i$. Observing that $\tcd{\MP} = \widetilde{\MU} \MY$ where $\widetilde{\MU}$ is orthonormal, it suffices to compute SVD on $\MY$, and only $\tc{\MD_i}{\dimr} \tc{\MV_i}{\dimr}$ needs to be communicated.  
In the general case when the data may have rank higher than $\dimr$, it turns out that one needs to set $t_1$ sufficiently large, so that $\tcd{\MP}_i$ approximates $\MP_i$ well enough and does not introduce too much error into the final solution. In particular, the following \emph{close projection} property about SVD is the key for the analysis:

\begin{lemma}\label{lem:pca_app}
Suppose $\MA$ has SVD $\MA=\MU \MD \MV$ and let $\tcd{\MA} = \MA \tc{\MV}{t} (\tc{\MV}{t})^\top$ denote its SVD truncation. 
If $t=O(\dimr/\epsilon)$, then for any $d \times \dimr$ matrix $\MX$ with orthonormal columns,
\begin{eqnarray*}
0 \leq \|\MA \MX - \tcd{\MA} \MX\|_F^2 \leq  \epsilon d^2(\MA,\lspan{\MX}), \text{~~and~~}
0 \leq \|\MA \MX\|_F^2 - \|\tcd{\MA} \MX\|_F^2 \leq  \epsilon d^2(\MA,\lspan{\MX}).
\end{eqnarray*}
\end{lemma}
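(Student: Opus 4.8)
\noindent\emph{Proof plan.} The plan is to write $\MA-\tcd{\MA}$ as the tail of the SVD of $\MA$, control it in spectral norm, and then play this bound against a lower bound on $d^2(\MA,\lspan{\MX})$ coming from the tail singular values; the second pair of inequalities then drops out of a Pythagorean identity. Let $P=\tc{\MV}{t}(\tc{\MV}{t})^\top$ be the orthogonal projection onto the span of the top $t$ right singular vectors of $\MA$ (equivalently, the span of the top $t$ eigenvectors of $\MA^\top\MA$), so that $\tcd{\MA}=\MA P$. Then $\MA-\tcd{\MA}=\MA(\MI-P)$ has the same left singular vectors as $\MA$ and singular values $\sigma_{t+1}(\MA),\sigma_{t+2}(\MA),\dots$ (the top $t$ zeroed out); in particular $\|\MA-\tcd{\MA}\|_2=\sigma_{t+1}(\MA)$. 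Since $\MX$ has $\dimr$ orthonormal columns, $\|\MX\|_F^2=\dimr$, hence
\begin{eqnarray*}
\|\MA\MX-\tcd{\MA}\MX\|_F^2 = \|(\MA-\tcd{\MA})\MX\|_F^2 \leq \|\MA-\tcd{\MA}\|_2^2\,\|\MX\|_F^2 = \dimr\,\sigma_{t+1}^2(\MA),
\end{eqnarray*}
and the lower bound $0\leq\|\MA\MX-\tcd{\MA}\MX\|_F^2$ is trivial.

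Next I would lower bound $d^2(\MA,\lspan{\MX})$. Because $\MX^\top\MX=\MI$, expanding gives $d^2(\MA,\lspan{\MX})=\|\MA-\MA\MX\MX^\top\|_F^2=\|\MA\|_F^2-\|\MA\MX\|_F^2$, and since $\|\MA\MX\|_F^2=\mathrm{tr}(\MX^\top\MA^\top\MA\MX)\leq\sum_{i=1}^{\dimr}\sigma_i^2(\MA)$ by the Ky Fan inequality (an arbitrary $\dimr$-dimensional subspace cannot capture more Frobenius mass than the top $\dimr$ singular directions),
\begin{eqnarray*}
d^2(\MA,\lspan{\MX}) \geq \sum_{i>\dimr}\sigma_i^2(\MA) \geq \sum_{i=\dimr+1}^{t}\sigma_i^2(\MA) \geq (t-\dimr)\,\sigma_{t+1}^2(\MA),
\end{eqnarray*}
using that $\sigma_i(\MA)$ is nonincreasing. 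Taking $t=\dimr+\lceil\dimr/\epsilon\rceil=O(\dimr/\epsilon)$ (assuming $\epsilon\leq 1$ without loss of generality) makes $t-\dimr\geq\dimr/\epsilon$, so $\dimr\,\sigma_{t+1}^2(\MA)\leq\epsilon(t-\dimr)\sigma_{t+1}^2(\MA)\leq\epsilon\,d^2(\MA,\lspan{\MX})$, which combined with the previous display yields the first claimed inequality.

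For the second pair, the key point is that $P$ commutes with $\MA^\top\MA$ (it projects onto a set of eigenvectors of $\MA^\top\MA$), so $\tcd{\MA}^\top(\MA-\tcd{\MA})=P\MA^\top\MA(\MI-P)=\MA^\top\MA\,(P-P^2)=0$, and therefore $\langle\tcd{\MA}\MX,(\MA-\tcd{\MA})\MX\rangle=\mathrm{tr}\big(\MX^\top\tcd{\MA}^\top(\MA-\tcd{\MA})\MX\big)=0$. By the Pythagorean theorem in Frobenius norm,
\begin{eqnarray*}
\|\MA\MX\|_F^2 = \|\tcd{\MA}\MX+(\MA-\tcd{\MA})\MX\|_F^2 = \|\tcd{\MA}\MX\|_F^2 + \|\MA\MX-\tcd{\MA}\MX\|_F^2,
\end{eqnarray*}
so $0\leq\|\MA\MX\|_F^2-\|\tcd{\MA}\MX\|_F^2=\|\MA\MX-\tcd{\MA}\MX\|_F^2\leq\epsilon\,d^2(\MA,\lspan{\MX})$ by the first part.

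The only real obstacle is the tail-singular-value lower bound on $d^2(\MA,\lspan{\MX})$, together with calibrating $t$ so that $\dimr\,\sigma_{t+1}^2(\MA)$ is dominated by $\epsilon$ times that tail; once that is in hand, the rest is bookkeeping with the SVD and the commutation of $P$ with $\MA^\top\MA$.

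\hfill\qed
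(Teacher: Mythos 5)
Your proof is correct and follows essentially the same route as the paper's (via its Lemma~\ref{lem:pca_app_gen}): bound $\|(\MA-\tcd{\MA})\MX\|_F^2$ by $\dimr\,\sigma_{t+1}^2(\MA)$ using the spectral norm of the SVD tail, absorb this into $\epsilon\sum_{i>\dimr}\sigma_i^2(\MA)$ by the choice of $t$, and lower bound $d^2(\MA,\lspan{\MX})$ by that tail sum. The only cosmetic differences are that you derive the identity $\|\MA\MX\|_F^2-\|\tcd{\MA}\MX\|_F^2=\|(\MA-\tcd{\MA})\MX\|_F^2$ via orthogonality of $\tcd{\MA}$ and $\MA-\tcd{\MA}$ rather than the paper's explicit diagonal-matrix computation, and you invoke Ky Fan where the paper invokes Eckart--Young; both are equivalent.
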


This means that the projections of $\tcd{\MA}$ and $\MA$ on any $\dimr$-dimensional subspace are close, when the projected dimension $t$ is sufficiently large compared to $\dimr$. Now, note that the difference between $\|\MP - \MP\MX\MX^\top\|_F^2$ and $\|\tcd{\MP} - \tcd{\MP} \MX \MX^\top\|_F^2$ is only related to $\|\MP \MX\|_F^2 - \|\tcd{\MP} \MX\|_F^2 = \sum_i [\|\MP_i \MX\|_F^2 - \|\tcd{\MP}_i \MX\|_F^2 ]$, each term in which is bounded by the lemma. So we can use $\tcd{\MP}$ as a proxy for $\MP$ in the PCA task. Again, computing PCA on $\tcd{\MP}$ is equivalent to computing SVD on $\MY$, as done in Algorithm {\sf disPCA}.  
These lead to the following theorem, which is implicit in~\citep{dan2013tiny}, stating that the algorithm 
can produce a $(1+\epsilon)$-approximation for the distributed PCA problem. 

\begin{theorem}\label{thm:lr_app}
Suppose Algorithm {\sf disPCA} 
takes parameters $t_1 \geq \dimr + \lceil 4\dimr/\epsilon \rceil -1$ and $t_2=\dimr$. Then
\begin{eqnarray*}
\|\MP - \MP\tc{\MV}{\dimr}(\tc{\MV}{\dimr})^\top\|_F^2 \leq (1+\epsilon) \min_{\MX} \|\MP - \MP\MX\MX^\top\|_F^2
\end{eqnarray*}
where the minimization is over $d\times \dimr$ orthonormal matrices $\MX$. The communication is $O(\frac{s\dimr d}{\epsilon})$ words.
\end{theorem}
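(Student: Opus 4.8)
The plan is to make the ``proxy'' argument sketched above precise. Write $\widetilde{\MP}_i := \MP_i\tc{\MV_i}{t_1}(\tc{\MV_i}{t_1})^\top$ for the local SVD truncation kept by server $i$, and let $\widetilde{\MP}$ be the matrix obtained by stacking the $\widetilde{\MP}_i$. Two facts will be established: (i) the coordinator's global SVD step returns exactly the top-$\dimr$ PCA subspace of $\widetilde{\MP}$; and (ii) the PCA objective at any candidate subspace $\MX$ differs between $\MP$ and $\widetilde{\MP}$ by a fixed ($\MX$-independent) constant plus an error controlled by Lemma~\ref{lem:pca_app}. The constant then cancels when we compare the value at the algorithm's output to the value at the true optimum, turning the additive-looking bound of Lemma~\ref{lem:pca_app} into a relative $(1+\epsilon)$ guarantee.

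For (i): since $\MP_i\tc{\MV_i}{t_1}=\MU_i\MD_i(\MV_i^\top\tc{\MV_i}{t_1})=\MU_i\tc{\MD_i}{t_1}$, we get $\widetilde{\MP}_i=\MU_i\tc{\MD_i}{t_1}(\tc{\MV_i}{t_1})^\top=\MU_i\MY_i$; stacking, $\widetilde{\MP}=\widetilde{\MU}\MY$ with $\widetilde{\MU}=\mathrm{diag}(\MU_1,\dots,\MU_s)$ orthogonal, whence $\widetilde{\MP}^\top\widetilde{\MP}=\MY^\top\MY$ and the top-$\dimr$ right singular vectors $\tc{\MV}{\dimr}$ of $\MY$ (returned since $t_2=\dimr$) span the top-$\dimr$ PCA subspace of $\widetilde{\MP}$, i.e. $\tc{\MV}{\dimr}\in\arg\min_{\MX}\|\widetilde{\MP}-\widetilde{\MP}\MX\MX^\top\|_F^2$, the minimum being over $d\times\dimr$ orthonormal $\MX$. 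For (ii): set $f(\MX):=\|\MP-\MP\MX\MX^\top\|_F^2=d^2(\MP,\lspan{\MX})$ and $g(\MX):=\|\widetilde{\MP}-\widetilde{\MP}\MX\MX^\top\|_F^2$. By the Pythagorean identity $f(\MX)=\|\MP\|_F^2-\|\MP\MX\|_F^2$ and $g(\MX)=\|\widetilde{\MP}\|_F^2-\|\widetilde{\MP}\MX\|_F^2$, so
\[
f(\MX)-g(\MX) = C - \bigl(\|\MP\MX\|_F^2-\|\widetilde{\MP}\MX\|_F^2\bigr),\qquad C:=\|\MP\|_F^2-\|\widetilde{\MP}\|_F^2\ge 0,
\]
with $C$ independent of $\MX$. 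Summing the second bound of Lemma~\ref{lem:pca_app}, applied to each $\MP_i$ with $t=t_1\ge\dimr+\lceil 4\dimr/\epsilon\rceil-1$, over $i$ yields $0\le\|\MP\MX\|_F^2-\|\widetilde{\MP}\MX\|_F^2=\sum_i\bigl(\|\MP_i\MX\|_F^2-\|\widetilde{\MP}_i\MX\|_F^2\bigr)\le\epsilon\sum_i d^2(\MP_i,\lspan{\MX})=\epsilon f(\MX)$. Hence $f(\MX)-C\le g(\MX)\le(1+\epsilon)f(\MX)-C$ for every $d\times\dimr$ orthonormal $\MX$.

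To finish, let $\MX^\star\in\arg\min_{\MX}f(\MX)$. Combining the lower bound on $g$ at $\tc{\MV}{\dimr}$, the optimality of $\tc{\MV}{\dimr}$ for $g$ from (i), and the upper bound on $g$ at $\MX^\star$ gives
\[
f(\tc{\MV}{\dimr})-C \;\le\; g(\tc{\MV}{\dimr}) \;\le\; g(\MX^\star) \;\le\; (1+\epsilon)f(\MX^\star)-C,
\]
so $f(\tc{\MV}{\dimr})\le(1+\epsilon)f(\MX^\star)=(1+\epsilon)\min_{\MX}\|\MP-\MP\MX\MX^\top\|_F^2$, the claimed approximation. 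The communication is immediate: each of the $s$ servers sends $\tc{\MD_i}{t_1}$ and $\tc{\MV_i}{t_1}$, namely $O(d t_1)=O(d\dimr/\epsilon)$ words, and receiving $\tc{\MV}{\dimr}$ costs a further $O(d\dimr)$ per server, for $O(s d\dimr/\epsilon)$ in total.

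The only delicate point — the step I would state most carefully — is the cancellation of $C$ in the last display: Lemma~\ref{lem:pca_app} by itself relates $g$ to $f$ only up to this potentially large additive term, and it is precisely because $C$ appears identically in the value at the algorithm's output and in the value at the optimum that a purely multiplicative bound survives. A secondary bookkeeping point is the verification in (i) that the coordinator's SVD of $\MY$ really recovers the principal components of $\widetilde{\MP}$ rather than of some distorted matrix; this is routine once one writes $\widetilde{\MP}=\widetilde{\MU}\MY$ with $\widetilde{\MU}$ orthogonal. All remaining steps are elementary, with the analytic content carried entirely by Lemma~\ref{lem:pca_app}.
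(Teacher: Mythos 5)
Your proof is correct and follows essentially the same route as the paper's: the paper also writes $\widetilde{\MP}=\widetilde{\MU}\MY$ to conclude that $\tc{\MV}{\dimr}$ is optimal for $\widetilde{\MP}$, establishes your identity $f(\MX)-g(\MX)=C-\Delta(\MX)$ with $0\le\Delta(\MX)\le\epsilon f(\MX)$ as its Claim~\ref{cla:diff}, and cancels the constant $C=c_0$ by comparing the algorithm's output against the optimum exactly as in your final chain of inequalities. No gaps.
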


\subsection{Guarantees for Distributed $\ell_2$-Error Fitting}\label{sec:l2}



Algorithm {\sf disPCA} 
can also be used as a pre-processing step for applications such as $\ell_2$-error fitting. In this section, we prove the correctness of Algorithm {\sf disPCA} 
as pre-processing for these applications. In particular, we show that by setting $t_1, t_2$ sufficiently large, the objective value of any solution merely changes when the original data $\MP$ is replaced the projected data $\gpca{\MP} =\MP\tc{\MV}{t_2}(\tc{\MV}{t_2})^\top$. Therefore,  the projected data serves as a proxy of the original data, i.e.\ , any distributed algorithm can be applied on the projected data to get a solution on the original data. As the dimension is lower, the communication cost is reduced. 
Formally,

\begin{theorem}\label{thm:coreset_gen}
Let $t_1=t_2 = O(\dimr k/ \epsilon^2)$ in Algorithm {\sf disPCA}   
for $\epsilon \in (0, 1/3)$, and let $\gpca{\MP} =\MP\tc{\MV}{t_2}(\tc{\MV}{t_2})^\top$. Then there exists a constant $c_0 \geq 0$ such that for any set of $k$ centers $\x$ in $\dimr$-Subspace $k$-Clustering,
$$
	(1-\epsilon) d^2(\MP,\x) \leq d^2(\gpca{\MP}, \x) + c_0 \leq (1+\epsilon) d^2(\MP,\x).
$$
Consequently,  any $\alpha$-approximate solution $\x$ on $\gpca{P}$ is a $\frac{1+\epsilon}{1-\epsilon}\alpha$-approximation on $\MP$. 
\end{theorem}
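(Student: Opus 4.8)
The plan is to reduce the claim to a uniform ``close-projection'' estimate for the subspace $\tc{\MV}{t_2}$ produced by Algorithm~{\sf disPCA}, and then pay for the minimum over the $\nc$ centers with a short stability argument. Fix a candidate $\x=\{L_1,\dots,L_\nc\}$ and let $\MB\in\R^{d\times\rho}$ have orthonormal columns spanning $\lspan{\MB}:=\mathrm{span}(\bigcup_j L_j)$, so $\rho\le\nc\dimr$ (for affine centers, homogenize into $\R^{d+1}$, giving $\rho\le\nc(\dimr+1)$; this is the $\dimr=0$, $k$-means, case). Three elementary facts drive the reduction: (a) since $L_j\subseteq\lspan{\MB}$, Pythagoras gives $d^2(p,L_j)=d^2(p,\lspan{\MB})+d^2(\proj_{\MB}(p),L_j)$ for every $p$ and $j$, hence $d^2(\MP,\x)=d^2(\MP,\lspan{\MB})+d^2(\MP\MB\MB^\top,\x)$ and the same identity with $\MP$ replaced by $\gpca{\MP}$; (b) $d^2(\MP,\lspan{\MB})\le d^2(\MP,\x)$, since each point is no farther from $\lspan{\MB}$ than from its assigned center; (c) $c_0:=\|\MP-\gpca{\MP}\|_F^2=\|\MP\|_F^2-\|\gpca{\MP}\|_F^2\ge0$ is independent of $\x$ (the cross term vanishes because $\gpca{\MP}$ is $\MP$ times an orthogonal projection). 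Using (a) and $d^2(\MA,\lspan{\MB})=\|\MA\|_F^2-\|\MA\MB\|_F^2$, the ``$\lspan{\MB}$ part plus $c_0$'' telescopes:
$$d^2(\gpca{\MP},\x)+c_0-d^2(\MP,\x)=\bigl(\|\MP\MB\|_F^2-\|\gpca{\MP}\MB\|_F^2\bigr)+\bigl(d^2(\gpca{\MP}\MB\MB^\top,\x)-d^2(\MP\MB\MB^\top,\x)\bigr),$$
so it suffices to bound the two parenthesized quantities by $O(\epsilon)\,d^2(\MP,\x)$.

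The analytic heart, which I would isolate as Lemma~\ref{thm:DisPCA_P_l2}, is that $t_1=t_2=O(\dimr\nc/\epsilon^2)$ makes
$$\bigl|\,\|\MP\MB\|_F^2-\|\gpca{\MP}\MB\|_F^2\,\bigr|\le\epsilon^2\,d^2(\MP,\lspan{\MB})\quad\text{and}\quad\|\MP\MB-\gpca{\MP}\MB\|_F^2\le\epsilon^2\,d^2(\MP,\lspan{\MB})$$
hold \emph{uniformly over all orthonormal $\MB\in\R^{d\times\rho}$}. I would prove this by composing Lemma~\ref{lem:pca_app} across the two stages of Algorithm~{\sf disPCA}: applied on each server (with $\MA=\MP_i$, truncation to $t_1$ coordinates, and the $\rho$-dimensional $\MB$ playing the role of $\MX$) it controls the local stage, and one sums the per-server bounds using $\sum_i d^2(\MP_i,\lspan{\MB})=d^2(\MP,\lspan{\MB})$; applied to the stacked matrix $\MY$, equivalently to $\tcd{\MP}$ (whose top-$t_2$ right singular space is exactly $\lspan{\tc{\MV}{t_2}}$), it controls the global stage, after noting $d^2(\tcd{\MP},\lspan{\MB})\le(1+\epsilon^2)d^2(\MP,\lspan{\MB})$. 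I expect this composition to be the main obstacle: $\tc{\MV}{t_2}$ is the leading right singular space of the \emph{locally truncated} $\tcd{\MP}$ rather than of $\MP$, while $\gpca{\MP}=\MP\tc{\MV}{t_2}(\tc{\MV}{t_2})^\top$ projects the \emph{original} $\MP$, so one must separately bound the mismatch $\|(\MP-\tcd{\MP})\tc{\MV}{t_2}(\tc{\MV}{t_2})^\top\MB\|_F$ by yet another appeal to Lemma~\ref{lem:pca_app} (the rank-$\rho$ factor $\tc{\MV}{t_2}(\tc{\MV}{t_2})^\top\MB$ keeps the effective dimension at $\rho$). The double use of Lemma~\ref{lem:pca_app}, combined with needing an $\epsilon^2$- rather than $\epsilon$-level error at this step, is exactly what raises $t_1,t_2$ to $\Theta(\dimr\nc/\epsilon^2)$.

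With the close-projection bound in hand the rest is routine. The first parenthesized quantity is $\le\epsilon^2 d^2(\MP,\lspan{\MB})\le\epsilon^2 d^2(\MP,\x)$ by (b). For the second, set $\MQ:=\MP\MB\MB^\top$ and $\gpca{\MQ}:=\gpca{\MP}\MB\MB^\top$, so $\|\MQ-\gpca{\MQ}\|_F=\|\MP\MB-\gpca{\MP}\MB\|_F\le\epsilon\sqrt{d^2(\MP,\x)}$; the clustering cost is $1$-Lipschitz in Frobenius norm in the sense that $\bigl|\sqrt{d^2(\MQ,\x)}-\sqrt{d^2(\gpca{\MQ},\x)}\bigr|\le\|\MQ-\gpca{\MQ}\|_F$ (triangle inequality for $d(\cdot,L_{\sigma(i)})$ followed by Minkowski over $i$), and since also $d^2(\MQ,\x)\le d^2(\MP,\x)$, the second quantity is at most $\|\MQ-\gpca{\MQ}\|_F\bigl(\sqrt{d^2(\MQ,\x)}+\sqrt{d^2(\gpca{\MQ},\x)}\bigr)=O(\epsilon)\,d^2(\MP,\x)$. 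Summing and rescaling $\epsilon$ by a constant (harmless for $\epsilon\in(0,1/3)$, and absorbed into the constant in $t_1=t_2=O(\dimr\nc/\epsilon^2)$) gives the two-sided inequality. The ``Consequently'' statement follows by applying the two-sided bound to the returned $\alpha$-approximate solution $\x$ (left inequality) and to a minimizer $\x^\star$ of $d^2(\MP,\cdot)$ (right inequality), together with $\min_{\x'}d^2(\gpca{\MP},\x')\le d^2(\gpca{\MP},\x^\star)$: the additive constant $c_0$ appears on both sides and cancels out of the ratio, leaving $d^2(\MP,\x)\le\frac{1+\epsilon}{1-\epsilon}\alpha\,\min_{\x}d^2(\MP,\x)$.
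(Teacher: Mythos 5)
Your proposal follows essentially the same route as the paper's proof: the same Pythagorean decomposition along the orthonormal span of the centers, the same constant $c_0=\|\MP\|_F^2-\|\gpca{\MP}\|_F^2$, and the same reduction to the close-projection property of Lemma~\ref{thm:DisPCA_P_l2} invoked at accuracy $\epsilon^2$, which is exactly what forces $t_1=t_2=O(\dimr k/\epsilon^2)$. The only difference is cosmetic: where you control the in-subspace cost difference by a global Lipschitz/Minkowski argument, the paper applies the pointwise weak triangle inequality $|a^2-b^2|\le\frac{3(a-b)^2}{\epsilon}+2\epsilon a^2$ (Fact~\ref{fac:weakTri}) and sums over points, and both yield the same $O(\epsilon)\,d^2(\MP,\x)$ bound.
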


\noindent
\textbf{Remark 1} 
To see the consequence, let $\x^*$ denote the optimal solution. Then 
$$
	(1-\epsilon) d^2(\MP,\x) \leq d^2(\gpca{P}, \x) + c_0 \leq \alpha d^2(\gpca{P}, \x^*) + c_0 \leq \alpha (1+\epsilon)d^2(\MP, \x^*).
$$
Thus, the distributed PCA step only introduces a small multiplicative approximation factor of $(1+O(\epsilon))$.

\noindent
\textbf{Remark 2} 
The vectors $\tc{\MV}{t_2}$ are not the optimal $O(\dimr k/\epsilon^2)$ global principal components. They only satisfy a weaker property called close projection stated in Lemma~\ref{thm:DisPCA_P_l2}. On the other hand, as computing the first $\dimr$ global principal components (or equivalently rank-$\dimr$ approximation) itself is a special case of $\dimr$-Subspace $k$-Clustering (with $k=1$), Theorem~\ref{thm:coreset_gen} can also be applied. 
However, this will only lead to weaker guarantee than Theorem~\ref{thm:lr_app}.

More specifically, in this setting,  $\x$ is just one $\dimr$ subspace. Let $\MX$ be the othonormal matrix whose columns span $\x$, then $d^2(\MP,\x) = \|\MP - \MP\MX\MX^\top\|_F^2$. 
By Theorem~\ref{thm:coreset_gen}, the optimal $\dimr$ principal components for $\gpca{P}$,  which is just $\MV^{(r)}$ (the first $\dimr$ vectors in $\MV^{(t_2)}$), is a $(1+O(\epsilon))$-approximation for $\MP$:
$$
	 \|\MP - \MP\MV^{(r)}(\MV^{(r)})^\top\|_F^2 \leq \frac{1+\epsilon}{1-\epsilon} \min_{\MX}\|\MP - \MP\MX\MX^\top\|_F^2.
$$
Note that Theorem~\ref{thm:coreset_gen} requires setting $t_1 = O(r/\epsilon^2)$, which is larger than that in Theorem~\ref{thm:lr_app}.
Therefore, we focus on using Theorem~\ref{thm:coreset_gen} for other applications such as $k$-means.

\noindent
\textbf{Proof Sketch of Theorem~\ref{thm:coreset_gen}} 
The key to prove the theorem is the following close projection property of Algorithm {\sf disPCA} in Lemma~\ref{thm:DisPCA_P_l2}. Intuitively, it means that for any low dimensional subspace spanned by $\MX$,  the projections of $\MP$ and $\gpca{\MP}$ on the subspace are close.
To prove Theorem~\ref{thm:coreset_gen} by this,  we choose $\MX$ to be the orthonormal basis of the subspace spanning the centers. Since the problem only involves $l_2$ error, the difference between the objective values of $\MP$ and $\gpca{\MP}$ can be decomposed into two terms depending only on $ \|\MP \MX - \gpca{\MP} \MX\|_F^2$ and $\|\MP \MX \|_F^2 - \| \gpca{\MP} \MX\|_F^2$ respectively, which are small as shown by the lemma. The complete proof of Theorem~\ref{thm:coreset_gen} is provided in Appendix~\ref{app:disPCA_coreset}.

\begin{lemma}\label{thm:DisPCA_P_l2}
Let $t_1=t_2 =O(k/\epsilon)$ in Algorithm {\sf disPCA}.     
Then for any $d \times k$ matrix $\MX$ with orthonormal columns,
$
0 \leq \|\MP \MX - \gpca{\MP} \MX\|_F^2   \leq \epsilon d^2(\MP, \lspan{\MX})$, and
$0 \leq  \|\MP \MX\|_F^2 - \|\gpca{\MP} \MX\|_F^2  \leq \epsilon d^2(\MP, \lspan{\MX}).
$
\end{lemma}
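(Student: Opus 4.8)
\noindent The plan is to obtain Lemma~\ref{thm:DisPCA_P_l2} as the two-stage (local, then global) analogue of the single-matrix close-projection property of Lemma~\ref{lem:pca_app}: I would apply that lemma once at each stage of Algorithm~{\sf disPCA} and splice the two error estimates, then separately reconcile $\gpca{\MP}=\MP\MZ$ with the object actually produced by the two truncations. Throughout write $\MZ := \tc{\MV}{t_2}(\tc{\MV}{t_2})^\top$, so $\gpca{\MP}=\MP\MZ$, and let $\tcd{\MP}$ be the row-concatenation of the local truncations $\tcd{\MP}_i := \MP_i\tc{\MV_i}{t_1}(\tc{\MV_i}{t_1})^\top$.

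At the local stage, $\tcd{\MP}_i$ is exactly the SVD truncation of $\MP_i$ to its top $t_1$ right singular vectors, so since $t_1=O(k/\epsilon)$, Lemma~\ref{lem:pca_app} (with $\MA=\MP_i$, $\dimr=k$, $t=t_1$) gives, for the given orthonormal $\MX$, $0\le\|\MP_i\MX-\tcd{\MP}_i\MX\|_F^2\le\epsilon_1 d^2(\MP_i,\lspan{\MX})$ and $0\le\|\MP_i\MX\|_F^2-\|\tcd{\MP}_i\MX\|_F^2\le\epsilon_1 d^2(\MP_i,\lspan{\MX})$ with $\epsilon_1=\Theta(\epsilon)$; since Frobenius norms and the costs $d^2(\cdot,\lspan{\MX})$ are additive over the row partition, summing over $i$ gives the same two bounds for $\MP,\tcd{\MP}$, and combining with $\|\tcd{\MP}\|_F^2\le\|\MP\|_F^2$ also yields $d^2(\tcd{\MP},\lspan{\MX})\le(1+\epsilon_1)d^2(\MP,\lspan{\MX})$. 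At the global stage, $\tcd{\MP}=\widetilde{\MU}\MY$ with $\widetilde{\MU}=\mathrm{diag}(\MU_1,\dots,\MU_s)$ orthonormal, so $\tcd{\MP}$ and $\MY$ have the same right singular vectors $\MV$ and $\tcd{\MP}\MZ$ is precisely the SVD truncation of $\tcd{\MP}$ to its top $t_2$ right singular vectors; applying Lemma~\ref{lem:pca_app} to $\tcd{\MP}$ (with $t=t_2=O(k/\epsilon)$) gives the analogous pair of bounds between $\tcd{\MP}$ and $\tcd{\MP}\MZ$, now with right-hand side $\epsilon_2 d^2(\tcd{\MP},\lspan{\MX})\le\epsilon_2(1+\epsilon_1)d^2(\MP,\lspan{\MX})$. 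Chaining the two stages (triangle inequality for the first assertion, addition for the second) and taking $\epsilon_1,\epsilon_2$ a small constant times $\epsilon$ proves the lemma with $\tcd{\MP}\MZ$ in place of $\gpca{\MP}$.

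It then remains to pass from $\tcd{\MP}\MZ$ to $\gpca{\MP}=\MP\MZ$; these differ by $(\MP-\tcd{\MP})\MZ$. One direction of the energy bound is free: with $\ME:=\MP^\top\MP-\tcd{\MP}^\top\tcd{\MP}\succeq 0$ (per server the truncated tail of the spectrum), $\|\gpca{\MP}\MX\|_F^2-\|\tcd{\MP}\MZ\MX\|_F^2=\mathrm{tr}(\MX^\top\MZ\ME\MZ\MX)\ge 0$, hence $\|\MP\MX\|_F^2-\|\gpca{\MP}\MX\|_F^2\le\|\MP\MX\|_F^2-\|\tcd{\MP}\MZ\MX\|_F^2\le\epsilon d^2(\MP,\lspan{\MX})$. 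The distance assertion and the two nonnegativity claims for $\gpca{\MP}$ instead require bounding $(\MP-\tcd{\MP})\MZ$ when projected onto $\lspan{\MX}$; for this I would use that $\MZ$ is the top-$t_2$ eigenspace of $\tcd{\MP}^\top\tcd{\MP}$, that the range of each tail $\MP_i-\tcd{\MP}_i$ is orthogonal to $\lspan{\tc{\MV_i}{t_1}}$, and the per-server estimate $\sigma_{t_1+1}(\MP_i)^2\le\frac{1}{t_1-k+1}d^2(\MP_i,\lspan{\MX})$ that already underlies Lemma~\ref{lem:pca_app}. I expect this reconciliation to be the main obstacle: it is the one step that is not itself an instance of Lemma~\ref{lem:pca_app}, and the one place where the combination of $t_1$ and $t_2$ (rather than either alone) is essential — everything preceding it is bookkeeping layered on that lemma.
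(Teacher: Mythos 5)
Your proposal follows essentially the same route as the paper: the same intermediate objects $\tcd{\MP}$ and $\overline{\MP}=\tcd{\MP}\MZ$, the same three-term splice with Lemma~\ref{lem:pca_app} applied once per stage, and the same final reconciliation of $\tcd{\MP}\MZ$ with $\MP\MZ$ server by server. The step you flag as the main obstacle is exactly the paper's Lemma~\ref{lem:pca_app_gen}, and it is immediate rather than delicate: the proof of Lemma~\ref{lem:pca_app} never uses orthonormality of the test matrix, only $\|\cdot\|_F^2\le r$ together with the spectral bound $\|\MP_i-\tcd{\MP}_i\|_S=\sigma_{t_1+1}(\MP_i)$ and the averaging estimate you cite, so applying it verbatim to $\MZ\MX$ (which satisfies $\|\MZ\MX\|_F^2\le k$ since $\MZ$ is an orthogonal projection) closes the argument without any appeal to the eigenspace structure of $\tcd{\MP}^\top\tcd{\MP}$ or to orthogonality of the tails.
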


\begin{proofsketch}
We first introduce some auxiliary variables for the analysis, which act as intermediate connections between $\MP$ and $\gpca{\MP}$.
Imagine we perform two kinds of projections: first project $\MP_i$ to $\tcd{\MP}_i = \MP_i \tc{\MV_i}{t_1} (\tc{\MV_i}{t_1})^\top$,
then project $\tcd{\MP}_i$ to $\overline{\MP}_i= \tcd{\MP}_i \tc{\MV}{t_2} (\tc{\MV}{t_2})^\top$.
Let $\tcd{\MP}$ denote the vertical concatenation of $\tcd{\MP}_i$ and let $\overline{\MP}$ denote the vertical concatenation of $\overline{\MP}_i$. 
These variables are designed so that the difference between $\MP$ and $\tcd{\MP}$ and that between $\tcd{\MP}$ and $\overline{\MP}$ are easily bounded. 

Our proof then proceeds by first bounding these differences, and then bounding that between $\overline{\MP}$ and $\gpca{\MP}$.
Take the second statement as an example. We have the following decomposition:
\begin{eqnarray*}
\|\MP \MX\|_F^2 - \|\gpca{\MP} \MX\|_F^2 
&= & 
\left[ \|\MP \MX\|_F^2 - \|\tcd{\MP} \MX\|_F^2 \right]+ \left[ \|\tcd{\MP} \MX\|_F^2 - \|\overline{\MP} \MX\|_F^2 \right] + \left[ \|\overline{\MP} \MX\|_F^2 - \|\gpca{\MP} \MX\|_F^2 \right].
\end{eqnarray*}
The first term is just $\sum_{i=1}^s \left[\|\MP_i \MX\|_F^2 - \|\tcd{\MP}_i \MX\|_F^2 \right]$, each of which can be bounded by Lemma~\ref{lem:pca_app}, since $\tcd{\MP}_i$ is the SVD truncation of $\MP$. The second term can be bounded similarly.
The more difficult part is the third term. Note that $\overline{\MP}_i = \tcd{\MP}_i \MZ$, $\gpca{\MP}_i = \MP_i \MZ$ where $\MZ :=\tc{\MV}{t_2}(\tc{\MV}{t_2})^\top \MX$, leading to
\begin{eqnarray*}
	\|\overline{\MP} \MX\|_F^2 - \|\gpca{\MP} \MX\|_F^2 = \sum_{i=1}^s \left[ \|  \tcd{\MP}_i \MZ \|_F^2 -  \| \MP_i \MZ \|_F^2 \right].
\end{eqnarray*}
Although $\MZ$ is not orthonormal as required by Lemma~\ref{lem:pca_app}, we prove a generalization (Lemma~\ref{lem:pca_app_gen} in the appendix) which can be applied to show that the third term is indeed small.

The bound on $ \|\MP \MX - \gpca{\MP} \MX\|_F^2$ can be proved by a similar argument. See Appendix~\ref{app:disPCA_closeProj} for details.
\end{proofsketch}

\littleheader{Application to $k$-Means Clustering}
To see the implication of Theorem~\ref{thm:coreset_gen}, consider the $k$-means clustering problem.
We can first perform any other possible dimension reduction to dimension $d'$ so that the $k$-means cost is preserved up to accuracy $\epsilon$,  and then run Algorithm {\sf disPCA} 
and finally run any distributed $k$-means clustering algorithm on the projected data
to get a good approximate solution.  
For example, in the first step we can set $d' = O(\log n / \epsilon^2)$  using a Johnson-Lindenstrauss transform,
or we can perform no reduction and simply use the original data.

As a concrete example, we can use original data ($d'=d$),
then run Algorithm {\sf disPCA}, 
and finally run the distributed clustering algorithm in~\citep{disClustering13}
which uses any non-distributed $\alpha$-approximation algorithm as a subroutine
and computes a $(1+\epsilon)\alpha$-approximate solution.
The resulting algorithm is presented in Algorithm~\ref{alg:disKmeans}.

\begin{algorithm}[t]
\caption{Distributed $k$-means clustering}
\label{alg:disKmeans}
\begin{algorithmic}[1]
\REQUIRE{$\{\MP_i\}_{i=1}^s$, $k \in \N_+$ and $\epsilon \in (0,1/3)$, a non-distributed $\alpha$-approximation algorithm $\mathcal{A}_\alpha$}
\STATE{Run Algorithm {\sf disPCA} 
with $t_1=t_2=O(k/\epsilon^2)$ to get $\ME=\MV^{(t_2)}$, and send $\ME$ to all nodes.}
\STATE{Run the distributed $k$-means clustering algorithm in~\citep{disClustering13} on $\{\MP_i\ME\ME^\top\}_{i=1}^s$, using $\mathcal{A}_\alpha$ as a subroutine, to get $k$ centers $\x$.}
\ENSURE{$\x$.}
\end{algorithmic}
\end{algorithm}

\begin{theorem}\label{cor:clustering}
With probability at least $1-\delta$,
Algorithm~\ref{alg:disKmeans} outputs a $(1+\epsilon)^2\alpha$-approximate solution for distributed $k$-means clustering.
The total communication cost of Algorithm~\ref{alg:disKmeans} is $O(\frac{sk}{\epsilon^2})$ vectors in $\R^d$
plus $O\left(\frac{1}{\epsilon^4}(\frac{k^2}{\epsilon^2} + \log\frac{1}{\delta}) + sk\log\frac{sk}{\delta}\right)$ vectors in $\R^{O(k/\epsilon^2)}$.
\end{theorem}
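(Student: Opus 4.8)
The idea is to chain together the two guarantees out of which Algorithm~\ref{alg:disKmeans} is assembled: the ``proxy dataset'' property of Theorem~\ref{thm:coreset_gen} and the approximation guarantee of the distributed clustering algorithm of~\citep{disClustering13}. First observe that $k$-means is precisely $\dimr$-Subspace $k$-Clustering with $\dimr=0$, so Theorem~\ref{thm:coreset_gen} applies with this choice of $\dimr$. Running Algorithm~{\sf disPCA} with $t_1=t_2=O(k/\epsilon^2)$ and writing $\gpca{\MP}=\MP\ME\ME^\top$ (equivalently, the data $\{\MP_i\ME\ME^\top\}$ spread over the nodes), the theorem supplies a constant $c_0\ge 0$, independent of the set of centers, with $(1-\epsilon)d^2(\MP,\x)\le d^2(\gpca{\MP},\x)+c_0\le(1+\epsilon)d^2(\MP,\x)$ for every set $\x$ of $k$ centers. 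Since $c_0$ is the same for all $\x$, an $\alpha$-approximate $k$-means solution on $\gpca{\MP}$ is a $\tfrac{1+\epsilon}{1-\epsilon}\alpha$-approximate solution on $\MP$, exactly as in Remark~1.

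Step~2 of Algorithm~\ref{alg:disKmeans} runs the distributed clustering protocol of~\citep{disClustering13} on $\{\MP_i\ME\ME^\top\}$ with $\mathcal{A}_\alpha$ as its non-distributed subroutine; with probability at least $1-\delta$ this outputs a $(1+\epsilon)\alpha$-approximate $k$-means solution $\x$ for $\gpca{\MP}$. Composing with the previous paragraph, $\x$ is a $\tfrac{(1+\epsilon)^2}{1-\epsilon}\alpha$-approximate solution for $\MP$. For $\epsilon\in(0,1/3)$ one has $\tfrac{(1+\epsilon)^2}{1-\epsilon}\le(1+c\epsilon)^2$ for a suitable absolute constant $c$, so running both steps with $\epsilon/c$ in place of $\epsilon$ — which changes $t_1,t_2$ and all communication bounds only by constant factors — gives the claimed $(1+\epsilon)^2\alpha$ guarantee. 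The only randomness in the procedure lives inside the subroutine of~\citep{disClustering13}; Algorithm~{\sf disPCA} is deterministic (it only computes SVDs), so the overall failure probability is at most $\delta$.

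For the communication, account for the two steps separately. Step~1 is Algorithm~{\sf disPCA} with $t_1=t_2=O(k/\epsilon^2)$: each of the $s$ nodes sends $\tc{\MD_i}{t_1}$ and $\tc{\MV_i}{t_1}$, i.e.\ $O(k/\epsilon^2)$ vectors in $\R^d$ (together with $O(k/\epsilon^2)$ scalars, which are dominated), and the coordinator then broadcasts $\ME=\tc{\MV}{t_2}$, another $O(k/\epsilon^2)$ vectors in $\R^d$, to every node; this totals $O(sk/\epsilon^2)$ vectors in $\R^d$. Step~2 runs the protocol of~\citep{disClustering13} with ambient dimension $d'=O(k/\epsilon^2)$, so substituting $d'$ into its communication bound contributes $O\!\left(\tfrac{1}{\epsilon^4}\left(\tfrac{k^2}{\epsilon^2}+\log\tfrac{1}{\delta}\right)+sk\log\tfrac{sk}{\delta}\right)$ vectors in $\R^{O(k/\epsilon^2)}$. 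Adding the two gives the stated total.

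The conceptual content is entirely carried by Theorem~\ref{thm:coreset_gen}; the rest is bookkeeping. The one spot that needs care is merging the three error sources — the $(1+\epsilon)$ multiplicative projection distortion, the solution-independent additive $c_0$, and the $(1+\epsilon)\alpha$ factor of the distributed clustering routine — into a single clean $(1+\epsilon)^2\alpha$ bound, which is handled by the constant rescaling of $\epsilon$ above. One should also verify that the communication guarantee of~\citep{disClustering13} is stated (or re-derivable) with the ambient dimension as an explicit parameter $d'$ that may be set to $O(k/\epsilon^2)$, so that its cost is correctly counted in vectors of $\R^{O(k/\epsilon^2)}$ rather than $\R^d$.
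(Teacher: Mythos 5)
Your proposal is correct and follows exactly the route the paper intends: the paper gives no separate proof of this theorem, treating it as the immediate composition of Theorem~\ref{thm:coreset_gen} (with $\dimr=0$) with the $(1+\epsilon)\alpha$ guarantee and communication bound of the distributed clustering protocol of \citep{disClustering13} run in the $O(k/\epsilon^2)$-dimensional span of $\ME$. Your additional care in folding $\frac{(1+\epsilon)^2}{1-\epsilon}$ into $(1+\epsilon)^2$ by a constant rescaling of $\epsilon$, and in noting that the points must be represented in coordinates relative to $\ME$ to charge Step~2 in vectors of $\R^{O(k/\epsilon^2)}$, is consistent with (indeed slightly more explicit than) the paper's treatment.
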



\section{Fast Distributed PCA}\label{sec:fastl2}

\littleheader{Subspace Embeddings}
One can significantly improve the time of the distributed PCA algorithms by using subspace embeddings,
while keeping similar guarantees as in Lemma~\ref{thm:DisPCA_P_l2}, which suffice for $l_2$-error fitting.
More precisely, a subspace embedding matrix $\MH \in \R^{\ell\times n}$ for a matrix $\MA \in \R^{n\times d}$ has the property that
for all vectors $y \in \R^d$,
$\|\MH \MA y\|_2 = (1 \pm \epsilon) \|\MA y\|_2.$
Suppose independently, each node $v_i$ chooses a random subspace embedding matrix $\MH_i$ for its local data $\MP_i$.
Then, they run Algorithm {\sf disPCA} 
on the embedded data $\{\MH_i\MP_i\}_{i=1}^s$ instead of
on the original data $\{\MP_i\}_{i=1}^s$.
%

The work of \citep{s06} pioneered subspace embeddings. 
The recent fast sparse subspace
embeddings~\citep{clarkson2013low} and its optimizations~\citep{meng2013low,nelson2012osnap}
are particularly suitable for large scale sparse data sets, since
their running time is linear in the number of non-zero entries
in the data matrix, and they also preserve the sparsity of the data.
The algorithm takes as input an $n\times d$ matrix $\MA$ and a parameter $\ell$,
and outputs an $\ell \times d$ embedded matrix $\MA'=\MH\MA$
(the embedded matrix $\MH$ does need to be built explicitly).
The embedded matrix is constructed as follows: initialize $\MA'=\mathbf{0}$;
for each row in $\MA$, multiply it by $+1$ or $-1$ with equal probability,
then add it to a row in $\MA'$ chosen uniformly at random.

The success probability is constant, while we need to set it to be $1-\delta$ where $\delta=\Theta(1/s)$.
Known results which preserve the number of non-zero entries of $\MH$ to be $1$ per column increase the dimension of $\MH$
by a factor of $s$. To avoid this, 
we propose an approach to boost the success probability by computing $O(\log \frac{1}{\delta})$
independent embeddings, each with only constant success probability,
and then run a cross validation style procedure to find one which succeeds with probability $1-\delta$.
More precisely, we compute the SVD of all embedded matrices $\MH_j\MA = \MU_j \MD_j \MV_j^\top$,
and find a $j \in [r]$ such that for at least half of the indices $j' \neq j$,
all singular values of $\MD_j \MV_j^\top \MV_{j'} \MD_{j'}^\top$ are in $[1 \pm O(\epsilon)]$ (see Algorithm~\ref{alg:success} in the appendix). 
The reason why such an embedding $\MH_j \MA$ succeeds with high probability is as follows.
Any two successful embeddings $\MH_j \MA$ and $\MH_{j'} \MA$, by definition, satisfy that
$\| \MH_j \MA x \|^2_2  = (1 \pm O(\epsilon)) \| \MH_{j'} \MA x \|_2^2$ for all $x$,
which we show is equivalent to passing the test on the singular values.
Since with probability at least $1-\delta$, $9/10$ fraction of the embeddings are successful,
it follows that the one we choose is successful with probability $1-\delta$.

\littleheader{Randomized SVD}
The exact SVD of an $n \times d$ matrix is impractical in the case
when $n$ or $d$ is large.
Here we show that the randomized SVD algorithm from~\citep{halko2011finding} can be applied to speed up the computation
without compromising the quality of the solution much. We need to use their specific form of randomized SVD since the error
is with respect to the spectral norm, rather than the Frobenius norm, and so can be much smaller as needed by our applications.

The algorithm first probes the row space of the $\ell \times d$ input matrix $\MA$ with an $\ell \times 2t$ random matrix $\MOmega$ and 
orthogonalizes the image of $\MOmega$ to get a basis $\MQ$ (i.e.,\ QR-factorize $\MA^\top \MOmega$);
projects the data to this basis and computes the SVD factorization on the smaller matrix $\MA\MQ$. It also performs $q$ power iterations to push the basis towards the top $t$ singular vectors.

\littleheader{Fast Distributed PCA  for $l_2$-Error Fitting}
We modify Algorithm {\sf disPCA}
by first having each node do a subspace embedding locally, then replace each SVD invocation with a randomized SVD invocation.
We thus arrive at Algorithm~\ref{alg:fastDisPCA}.
For $\ell_2$-error fitting problems, by combining approximation guarantees of the randomized techniques with that of distributed PCA, we are
able to prove:
\begin{theorem}\label{thm:fastDisPCA_coreset} 
Suppose Algorithm~\ref{alg:fastDisPCA} takes $\epsilon \in (0,1/2]$, $t_1=t_2 = O(\max\cbr{ \frac{k}{\epsilon^2}, \log \frac{s}{\delta}}), \ell = {O}(\frac{d^2}{\epsilon^2}), q=O(\max\{\log \frac{d}{\epsilon},\log \frac{sk}{\epsilon}\})$ as input, and sets the failure probability of each local subspace embedding to $\delta' = \delta/2s$.
Let $\gpca{P}=\MP\MV \MV^\top$. 
Then with probability at least $1-\delta$,
there exists a constant $c_0 \geq 0$, such that
for any set of $k$ points $\x$,
\begin{eqnarray*} (1-\epsilon) d^2(\MP,\x) - \epsilon \|\MP\MX\|_F^2 \leq  d^2(\gpca{P}, \x) + c_0 \leq  (1+\epsilon) d^2(\MP,\x) + \epsilon \|\MP\MX\|_F^2
\end{eqnarray*}
where $\MX$ is an orthonormal matrix whose columns span $\x$.
The total communication is $O(skd/\epsilon^2)$ and the total time is
$O\left(\nnz(\MP) + s\left[\frac{d^3k}{\epsilon^4} + \frac{k^2 d^2}{\epsilon^6}\right] \log \frac{d}{\epsilon}\log\frac{sk}{\delta\epsilon}\right)$.
\end{theorem}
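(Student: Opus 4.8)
The plan is to re-run the argument behind Theorem~\ref{thm:coreset_gen}, carefully tracking the two new sources of error that Algorithm~\ref{alg:fastDisPCA} introduces on top of {\sf disPCA}: (i) each node works with the sketch $\MH_i\MP_i$ instead of $\MP_i$, and (ii) every exact truncated SVD is replaced by the randomized SVD of~\citep{halko2011finding}. Everything reduces to establishing an \emph{approximate} close projection property for the output $\MV=\tc{\MV}{t_2}$ of Algorithm~\ref{alg:fastDisPCA}: for every $d\times k$ orthonormal $\MX$, both $\|\MP\MX-\gpca{P}\MX\|_F^2$ and $\bigl|\,\|\MP\MX\|_F^2-\|\gpca{P}\MX\|_F^2\,\bigr|$ are at most $\epsilon_1\, d^2(\MP,\lspan{\MX}) + \epsilon_2\|\MP\MX\|_F^2$, with $\epsilon_1=O(\epsilon^2)$ (forced by $t_1=t_2=O(k/\epsilon^2)$ and Lemma~\ref{thm:DisPCA_P_l2}) and $\epsilon_2=O(\epsilon)$ (the sketch distortion). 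Once this is in hand, the two-sided bound in the statement follows by exactly the $\ell_2$-clustering manipulation used to derive Theorem~\ref{thm:coreset_gen} from Lemma~\ref{thm:DisPCA_P_l2}. Before the analysis I would settle the probabilities: use the boosting procedure of Algorithm~\ref{alg:success} so that each of the $s$ local subspace embeddings succeeds with probability $1-\delta'=1-\delta/2s$ while keeping one nonzero per column and only $O(d^2/\epsilon^2)$ rows (no factor-$s$ blowup), set the failure probability of each of the $O(s\log(1/\delta'))$ node randomized SVDs and of the one coordinator randomized SVD to an inverse polynomial in $skd/\epsilon$, and union bound so that all the deterministic estimates below hold simultaneously on a single event of probability $\ge 1-\delta$.

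\textbf{Passing to the sketched data.} On the good event $(1-\epsilon)\MP_i^\top\MP_i \preceq \MP_i^\top\MH_i^\top\MH_i\MP_i \preceq (1+\epsilon)\MP_i^\top\MP_i$, so summing over columns, $\|\MH_i\MP_i\MW\|_F^2 = (1\pm\epsilon)\|\MP_i\MW\|_F^2$ for \emph{every} matrix $\MW$ with $d$ rows; stacking the blocks, $\|\MH\MP\MW\|_F^2=(1\pm\epsilon)\|\MP\MW\|_F^2$ where $\MH=\mathrm{diag}(\MH_1,\dots,\MH_s)$. In particular $d^2(\MH\MP,\lspan{\MX})=(1\pm\epsilon)\,d^2(\MP,\lspan{\MX})\pm 2\epsilon\|\MP\MX\|_F^2$. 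So the plan is to prove the approximate close projection property for $\MH\MP$ first and then transfer it to $\MP$: writing $\MP\MX-\gpca{P}\MX=\MP(\MI-\MV\MV^\top)\MX$ and applying the embedding to the columns of $(\MI-\MV\MV^\top)\MX$ gives $\|\MP\MX-\gpca{P}\MX\|_F^2=(1\pm\epsilon)\|\MH\MP\MX-\MH\MP\MV\MV^\top\MX\|_F^2$, and the additive $\epsilon_2\|\MP\MX\|_F^2$ slack we carry absorbs the $O(\epsilon)$-factor cross terms that this conversion and the $d^2$-relation above produce. This is the only place the sketching distortion enters the final statement.

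\textbf{Replacing exact SVD by randomized SVD.} Next I would show that running the idealized {\sf disPCA} on $\MH\MP$, but with each truncated SVD replaced by the randomized SVD of~\citep{halko2011finding} with $q$ power iterations, degrades close projection only by a negligible additive amount. The key point is that the randomized output is still an \emph{orthogonal} projection: node $i$ produces an orthonormal $\MQ_i$ with $O(t_1)$ columns such that $\tcd{\MP}_i:=\MH_i\MP_i\MQ_i\MQ_i^\top$ obeys $\|\MH_i\MP_i-\tcd{\MP}_i\|_2\le (1+\gamma)\sigma_{t_1+1}(\MH_i\MP_i)$, and the coordinator's randomized SVD of $\MY$ likewise yields a projection onto an approximately-optimal subspace; with $q=O(\max\{\log(d/\epsilon),\log(sk/\epsilon)\})$ the slack $\gamma$ is an arbitrarily small inverse polynomial in $d,s,k,1/\epsilon$. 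I would then reprove Lemma~\ref{lem:pca_app} and its generalization Lemma~\ref{lem:pca_app_gen} with ``SVD truncation $\MA\tc{\MV}{t}(\tc{\MV}{t})^\top$'' replaced by ``orthogonal projection $\MA\MQ\MQ^\top$ onto a subspace with $\|\MA-\MA\MQ\MQ^\top\|_2\le(1+\gamma)\sigma_{t+1}(\MA)$'': following the same argument, the right-hand sides pick up only an extra $O(\gamma)\bigl(\|\MA\|_F^2+\|\MA\MX\|_F^2\bigr)$ term, which the choice of $q$ makes smaller than the slack already present. Feeding these approximate lemmas through the intermediate variables $\tcd{\MP}$ and $\overline{\MP}$ exactly as in the proof sketch of Lemma~\ref{thm:DisPCA_P_l2}---the only non-orthonormal object, $\MZ=\tc{\MV}{t_2}(\tc{\MV}{t_2})^\top\MX$, being handled by the generalized lemma---then yields the approximate close projection property for $\MH\MP$, hence for $\MP$. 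I expect this to be the main obstacle: reproving Lemma~\ref{lem:pca_app_gen} (already the technical heart of the exact analysis) for projections onto only approximately-optimal subspaces, and, more delicately, checking that the spectral-norm errors of the randomized SVD---which are naturally calibrated to $\sigma_{t+1}$ of various \emph{sketched} matrices---aggregate over the $O(s\log(1/\delta))$ invocations into something dominated by $\epsilon_1 d^2(\MP,\lspan{\MX})+\epsilon_2\|\MP\MX\|_F^2$ rather than the much weaker $\epsilon\|\MP\|_F^2$; keeping straight which error is multiplicative on $d^2(\MP,\lspan{\MX})$ versus additive on $\|\MP\MX\|_F^2$ is exactly why $q$ only needs to be logarithmic.

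\textbf{Assembling and accounting.} With the approximate close projection property, the derivation of Theorem~\ref{thm:coreset_gen} applies essentially verbatim: take $\MX$ to be an orthonormal basis of the span of the $k$ centers, split $d^2(\gpca{P},\x)-d^2(\MP,\x)$ into a term governed by $\|\MP\MX-\gpca{P}\MX\|_F^2$ and a term governed by $\|\MP\MX\|_F^2-\|\gpca{P}\MX\|_F^2$, use $d^2(\MP,\lspan{\MX})\le d^2(\MP,\x)$ to turn the $\epsilon_1 d^2(\MP,\lspan{\MX})$ contributions into $O(\epsilon)\,d^2(\MP,\x)$, retain $\epsilon_2\|\MP\MX\|_F^2=O(\epsilon)\|\MP\MX\|_F^2$ as the additive error, and take $c_0$ to be the (up to that slack, center-independent) residual $\|\MP-\gpca{P}\|_F^2$, using only that the objective is purely $\ell_2$. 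For resources: the communication is each node sending its $O(t_1)=O(k/\epsilon^2)$ sketched-then-truncated singular vectors in $\R^d$ plus the coordinator broadcasting $\MV\in\R^{d\times t_2}$, i.e.\ $O(skd/\epsilon^2)$ words; the time is $\sum_i\nnz(\MP_i)$ for the sketches, plus per node $O(\log(1/\delta'))$ randomized SVDs of an $O(d^2/\epsilon^2)\times d$ matrix to rank $O(k/\epsilon^2)$ with $q$ power iterations, each costing $O\bigl((\tfrac{d^3k}{\epsilon^4}+\tfrac{d^2k^2}{\epsilon^6})\,q\bigr)$ (the cross-validation SVDs in Algorithm~\ref{alg:success} being lower order), plus the analogous but smaller coordinator randomized SVD (smaller since $d\ge k$), which sum to $O\bigl(\nnz(\MP)+s[\tfrac{d^3k}{\epsilon^4}+\tfrac{d^2k^2}{\epsilon^6}]\log\tfrac{d}{\epsilon}\log\tfrac{sk}{\delta\epsilon}\bigr)$.
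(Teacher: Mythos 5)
Your proposal follows essentially the same route as the paper: prove an approximate close-projection property for the output of Algorithm~\ref{alg:fastDisPCA} by (i) analyzing {\sf disPCA}-with-randomized-SVD on the sketched data $\MT\MP$ via a randomized-SVD analogue of Lemma~\ref{lem:pca_app_gen} (the paper's Lemma~\ref{lem:randSVD} and Lemma~\ref{lem:fastDisPCA_TP_l2}), (ii) transferring the property from $\MT\MP$ back to $\MP$ using the embedding guarantee (Lemma~\ref{lem:fastDisPCA_P_l2}), and (iii) rerunning the Theorem~\ref{thm:coreset_gen} argument with the weaker lemma; your resource accounting also matches.

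One imprecision worth flagging: you claim that reproving Lemma~\ref{lem:pca_app_gen} for the randomized projection $\MA\MQ\MQ^\top$ costs only an extra $O(\gamma)\bigl(\|\MA\|_F^2+\|\MA\MX\|_F^2\bigr)$, vanishing as $q$ grows. In fact the identity $\|\MA\MX\|_F^2-\|\tcd{\MA}\MX\|_F^2=\|(\MA-\tcd{\MA})\MX\|_F^2$ used in that lemma relies on $\MQ\MQ^\top$ commuting with $\MA^\top\MA$, which fails for any non-SVD projection no matter how spectrally accurate; the deficit must instead be controlled by the weak triangle inequality $|a^2-b^2|\leq \frac{3(a-b)^2}{\epsilon}+2\epsilon a^2$ (the paper's Fact~\ref{fac:weakTri}), which contributes an additive $O(\epsilon)\|\MA\MX\|_F^2$ that does \emph{not} shrink with $q$. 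This does not break your argument — the $\epsilon_2\|\MP\MX\|_F^2$ slack you carry (and the theorem's conclusion) absorbs it — but the additive error in the final statement is due to both the randomized SVD and the sketch, not the sketch alone, and the role of $q$ is only to make the spectral-norm constant in $\|\MA-\tcd{\MA}\|_S\leq 2\sigma_{t+1}(\MA)$ hold, not to eliminate this term.
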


\begin{proofsketch}
It suffices to show that  $\gpca{\MP}$ enjoys the close projection property as in Lemma~\ref{thm:DisPCA_P_l2}, i.e., $\| \MP \MX - \gpca{\MP} \MX \|^2_F \approx 0$ and $\| \MP \MX\|^2_F - \|\gpca{\MP} \MX \|^2_F \approx 0$ for any orthonormal matrix $\MX$ whose columns span a low dimensional subspace. 
Note that Algorithm~\ref{alg:fastDisPCA} is just running Algorithm {\sf disPCA} (with randomized SVD) on $\MT\MP$ where $\MT= \text{diag}(\MH_1, \MH_2, \dots, \MH_s)$, so we first show that $\MT\gpca{\MP}$ enjoys this property. 
But now exact SVD is replaced with randomized SVD, for which we need to use the spectral error bound to argue that the error introduced is small. More precisely, for a matrix $\MA$ and its SVD truncation $\tcd{\MA}$ computed by randomized SVD, it is guaranteed that the spectral norm of $\MA - \tcd{\MA}$ is small, then $\|(\MA - \tcd{\MA}) \MX\|_F$ is small for any $\MX$ with small Frobenius norm, in particular, the orthonormal basis spanning a low dimensional subspace. This then suffices to guarantee $\MT\gpca{\MP}$ enjoys the close projection property.
Given this, it suffices to show that $\gpca{\MP}$ enjoys this property as $\MT\gpca{\MP}$, which follows from the definition of a subspace embedding.
\end{proofsketch}

\begin{algorithm}[t]
\caption{Fast Distributed PCA for $l_2$-Error Fitting}
\label{alg:fastDisPCA}
\begin{algorithmic}[1]
\REQUIRE{$\{\MP_i\}_{i=1}^s$; parameters $ t_1,t_2$ for Algorithm {\sf disPCA}; $\ell, q$ for randomized techniques.}
\FOR{each node $v_i \in \Vcal$ }
    \STATE{Compute subspace embedding $\MP'_i=\MH_i \MP_i$.}
\ENDFOR
\STATE{Run Algorithm {\sf disPCA} on $\{\MP'_i\}_{i=1}^s$ to get $\MV$, where the SVD is randomized.}
\STATE{\quad $\mybullet$ dimension: $[\MP_i]_{n_i \times d}, [\MP'_i]_{\ell \times d}, [\MV]_{d\times 2t_2}$}
\ENSURE{$\MV$.}
\end{algorithmic}
\end{algorithm}

%
%

\section{Experiments}

Our focus is to show the randomized techniques used in Algorithm~\ref{alg:fastDisPCA} reduce the time taken significantly without
compromising the quality of the solution. We perform experiments for three tasks: rank-$r$ approximation,  $k$-means clustering and principal component regression (PCR).



\littleheader{Datasets} We choose the following real world datasets from UCI repository~\citep{Bache+Lichman:2013} for our
experiments. For low rank approximation and $k$-means clustering,  we choose two medium size datasets NewsGroups ($18774 \times 61188$) and MNIST 
($70000 \times 784$), and two large-scale Bag-of-Words datasets: NYTimes news articles  (BOWnytimes) ($300000  \times 102660$)  and PubMed abstracts (BOWpubmed) ($8200000 \times 141043$).
We use $r=10$ for rank-$r$ approximation and $k = 10$ for $k$-means clustering.
For PCR, we use MNIST and further choose YearPredictionMSD ($515345  \times 90$), CTslices ($53500 \times 386$),
and a large dataset MNIST8m ($800000 \times 784$).

\littleheader{Experimental Methodology}
The algorithms are evaluated on a star network. The number of nodes is
$s=25$ for medium-size datasets, and $s=100$ for the larger ones.
We distribute the data over the nodes using a weighted partition,
where each point is distributed to the nodes with probability proportional to the node's weight chosen from the power law with
parameter $\alpha=2$.

For each projection dimension, we first construct the projected data using distributed PCA.
For low rank approximation, we report the ratio between the cost of the obtained solution to that of the solution computed by SVD on the global data.
For $k$-means, we run the algorithm in~\citep{disClustering13} (with Lloyd's method as a subroutine) on the projected data to get a solution. Then we report the ratio between the cost of the above solution to that of a solution obtained by running Lloyd's method directly on the global data.
For PCR, we perform regression on the projected data to get a solution. Then we report the ratio between the error of the above solution to that of a solution obtained by PCR directly on the global data. We stop the algorihtm if it takes more than 24 hours.
For each projection dimension and each algorithm with randomness, the average ratio over 5 runs is reported.

\littleheader{Results}
 Figure~\ref{fig:lowrank_cost} shows the results for low rank approximation. We observe that the error of the fast distributed PCA is comparable to that of the exact solution computed directly on the global data. This is also observed for distributed PCA with one or none of subspace embedding and randomized SVD. 
Furthermore, the error of the fast PCA is comparable to that of normal PCA, which means that the speedup techniques merely affects the accuracy of the solution. The second row shows the computational time, which suggests a significant decrease in the time taken to run the fast distributed PCA. For example, on NewsGroups, the time of the fast distributed PCA improves over that of normal distributed PCA by a factor between $10$ to $100$. On the large dataset BOWpubmed, the normal PCA takes too long to finish and no results are presented, while the speedup versions produce good results in reasonable time. The use of the randomized techniques gives us a good performance improvement while keeping the solution quality almost the same.

Figure~\ref{fig:kmeans_cost} and Figure~\ref{fig:pcr_cost} show the results for $k$-means clustering and PCR respectively.
Similar to that for low rank approximation, we observe that the distributed solutions are almost as good as that computed directly on the global data, and the speedup merely affects the solution quality. We again observe a huge decrease in the running time by the speedup techniques.



\newcommand{\betweenWidth}{.0in}
\newcommand{\sfigWidth}{0.2\textwidth}
\newcommand{\sfigHeight}{0.15\textwidth}
\newcommand{\ssfigHeight}{0.14\textwidth}

\begin{figure*}[p]
\vspace{-0.3in}
\begin{center}
    \subfigure[NewsGroups]{\includegraphics[width=\sfigWidth,height=\ssfigHeight]{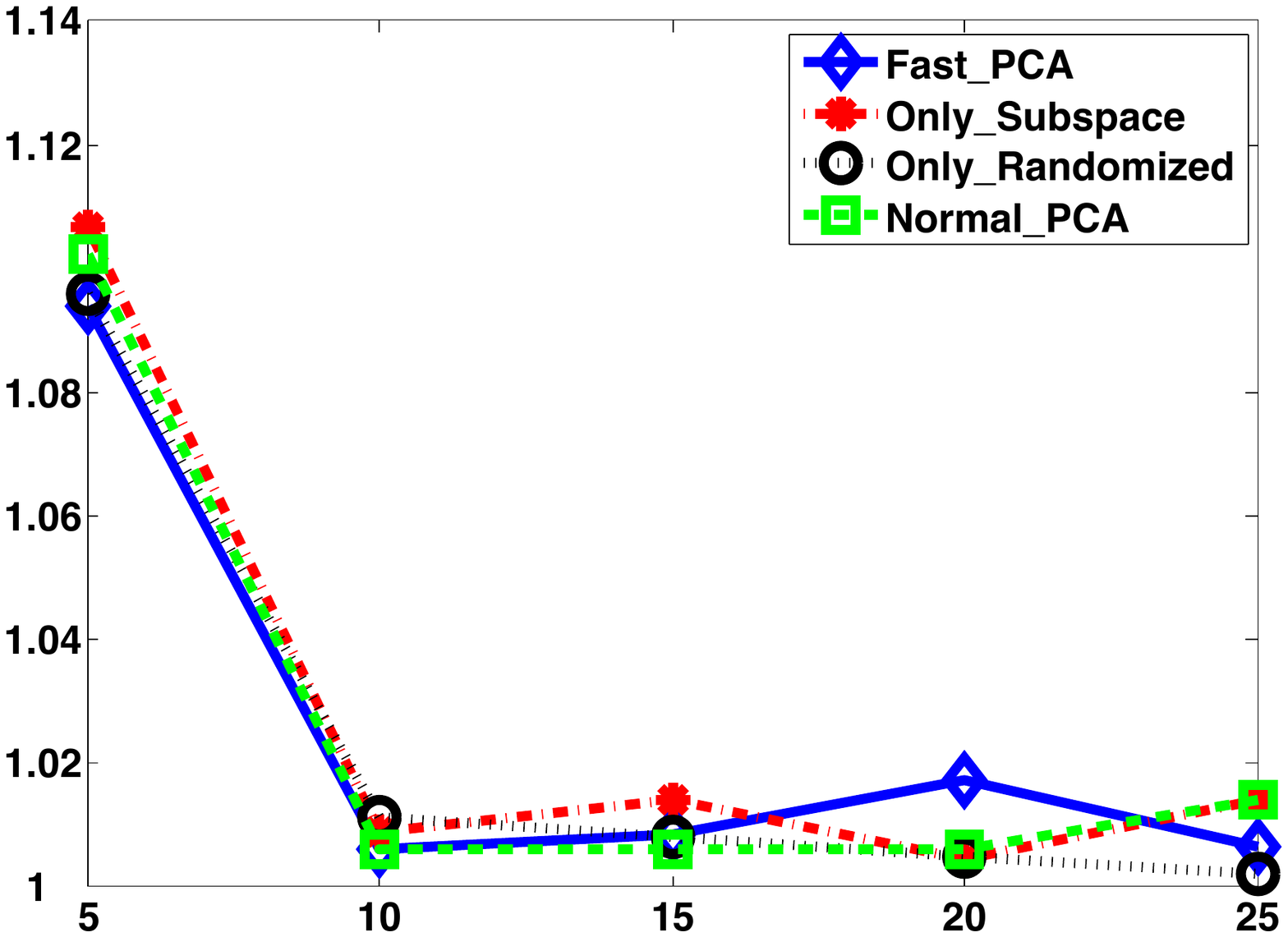}}
    \hspace*{\betweenWidth}
    \subfigure[MNIST]{\includegraphics[width=\sfigWidth,height=\ssfigHeight]{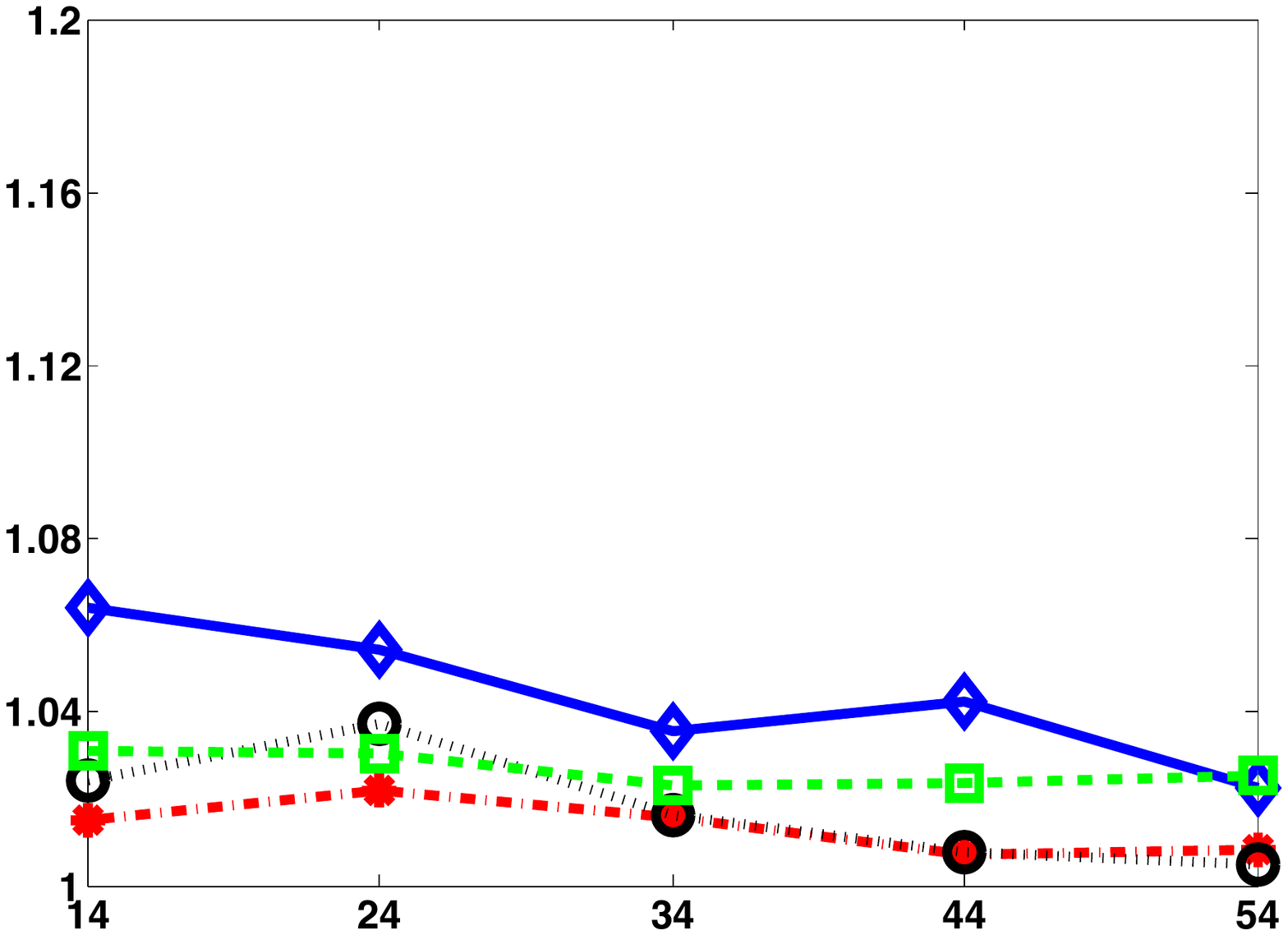}}
    \hspace*{\betweenWidth}
    \subfigure[BOWnytimes]{\includegraphics[width=\sfigWidth,height=\ssfigHeight]{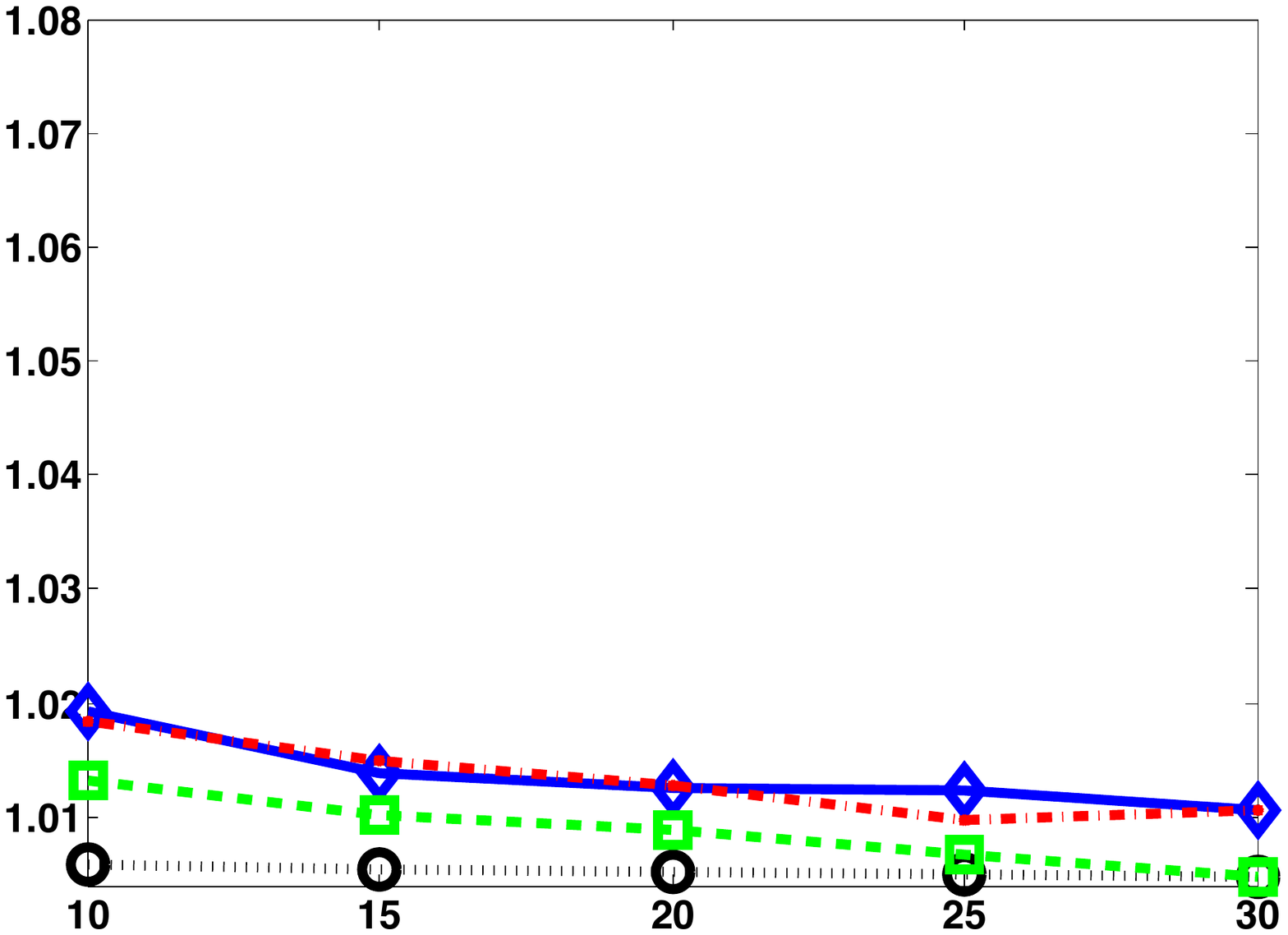}}
    \hspace*{\betweenWidth}
    \subfigure[BOWpubmed]{\includegraphics[width=\sfigWidth,height=\ssfigHeight]{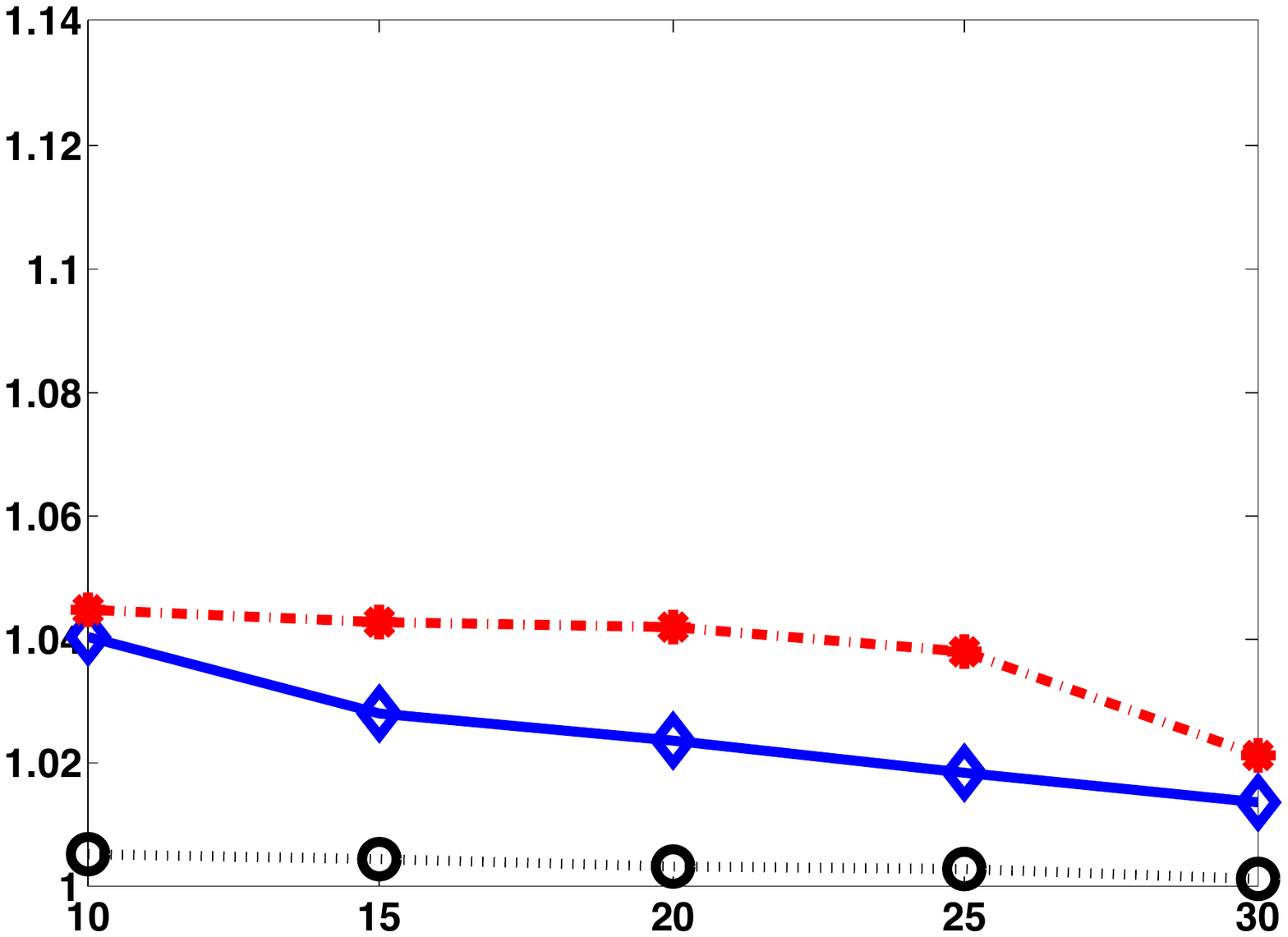}}\\
    \subfigure[NewsGroups]{\includegraphics[width=\sfigWidth,height=\sfigHeight]{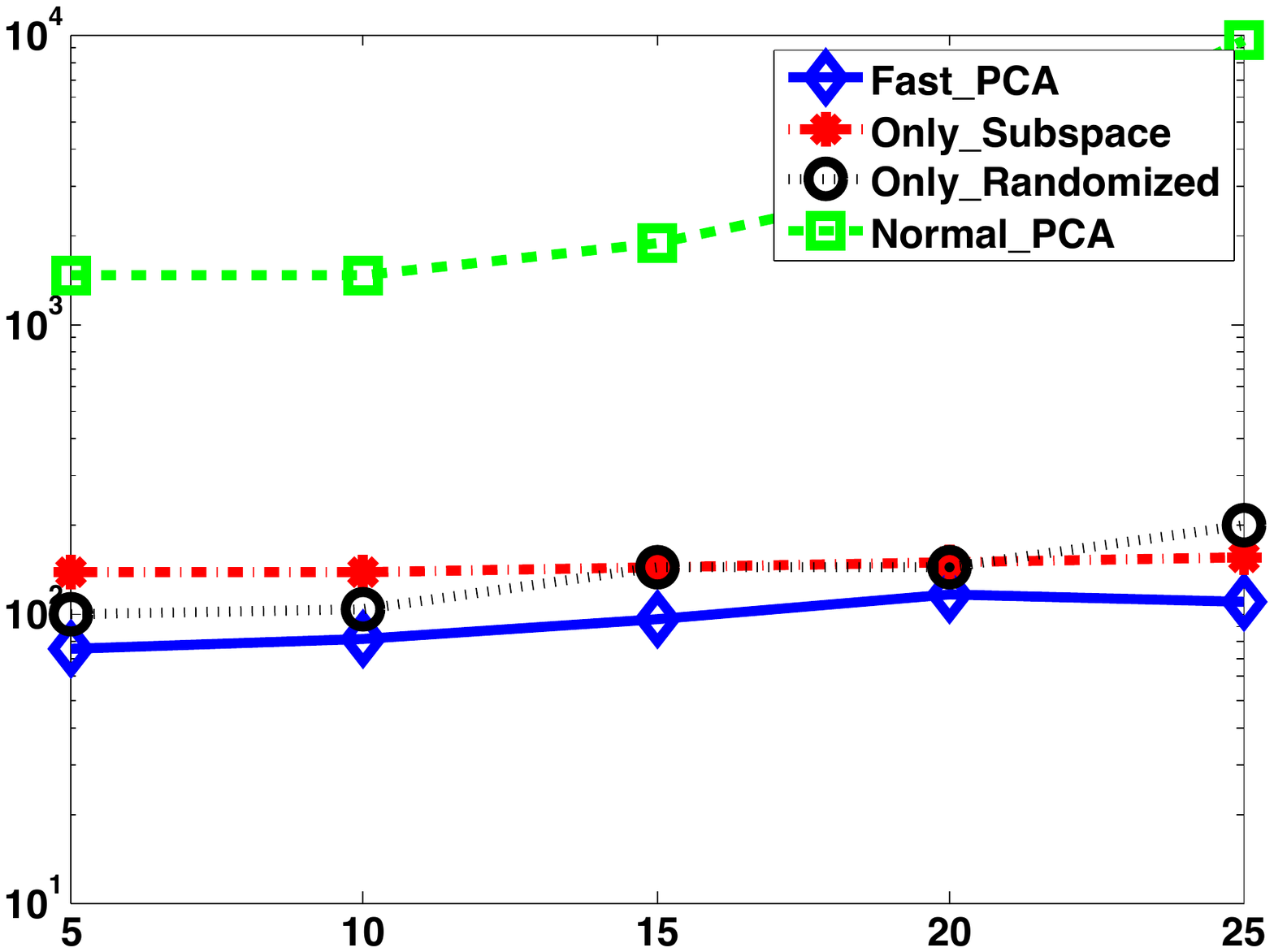}}
    \hspace*{\betweenWidth}
    \subfigure[MNIST]{\includegraphics[width=\sfigWidth,height=\sfigHeight]{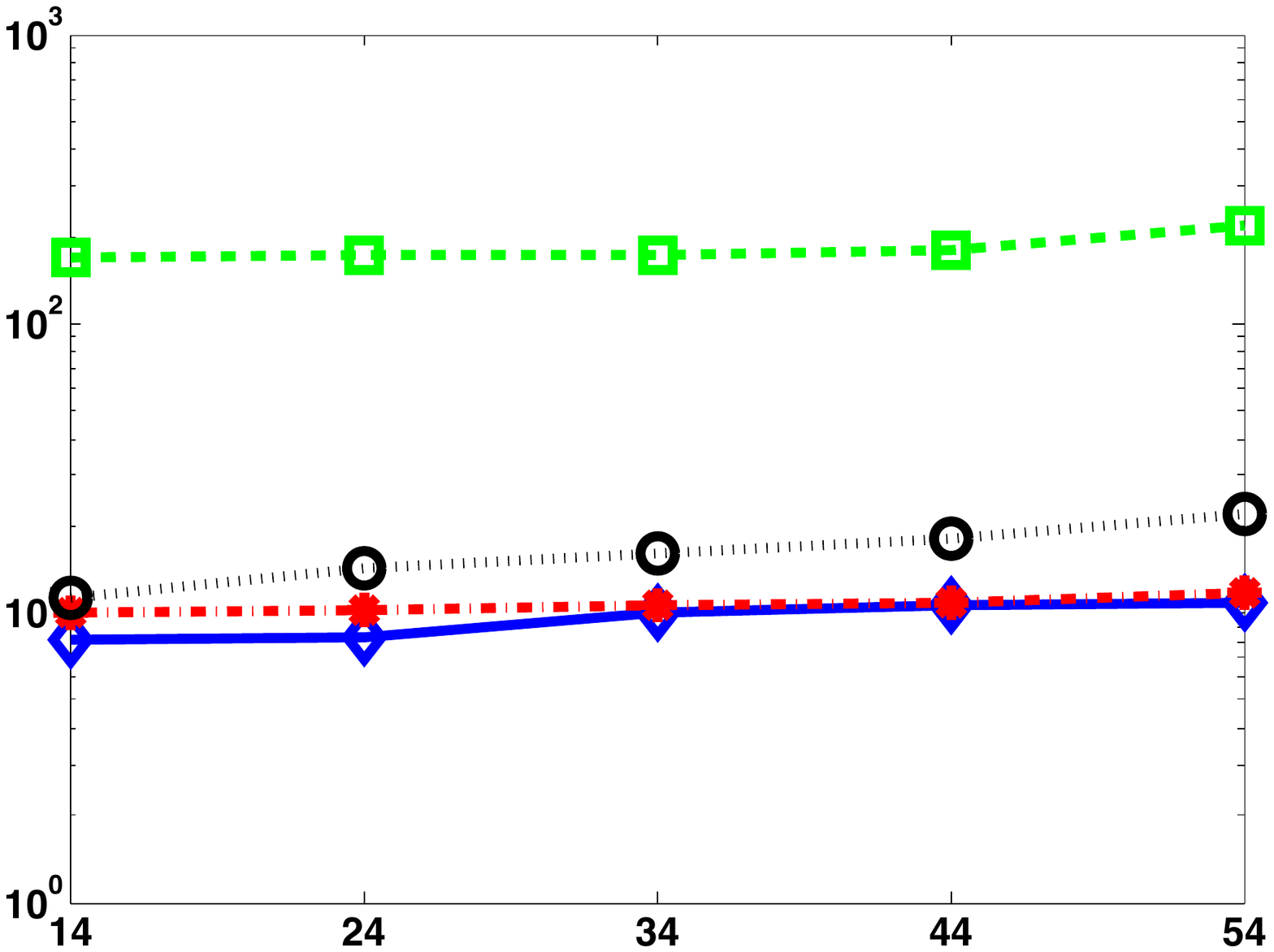}}
    \hspace*{\betweenWidth}
    \subfigure[BOWnytimes]{\includegraphics[width=\sfigWidth,height=\sfigHeight]{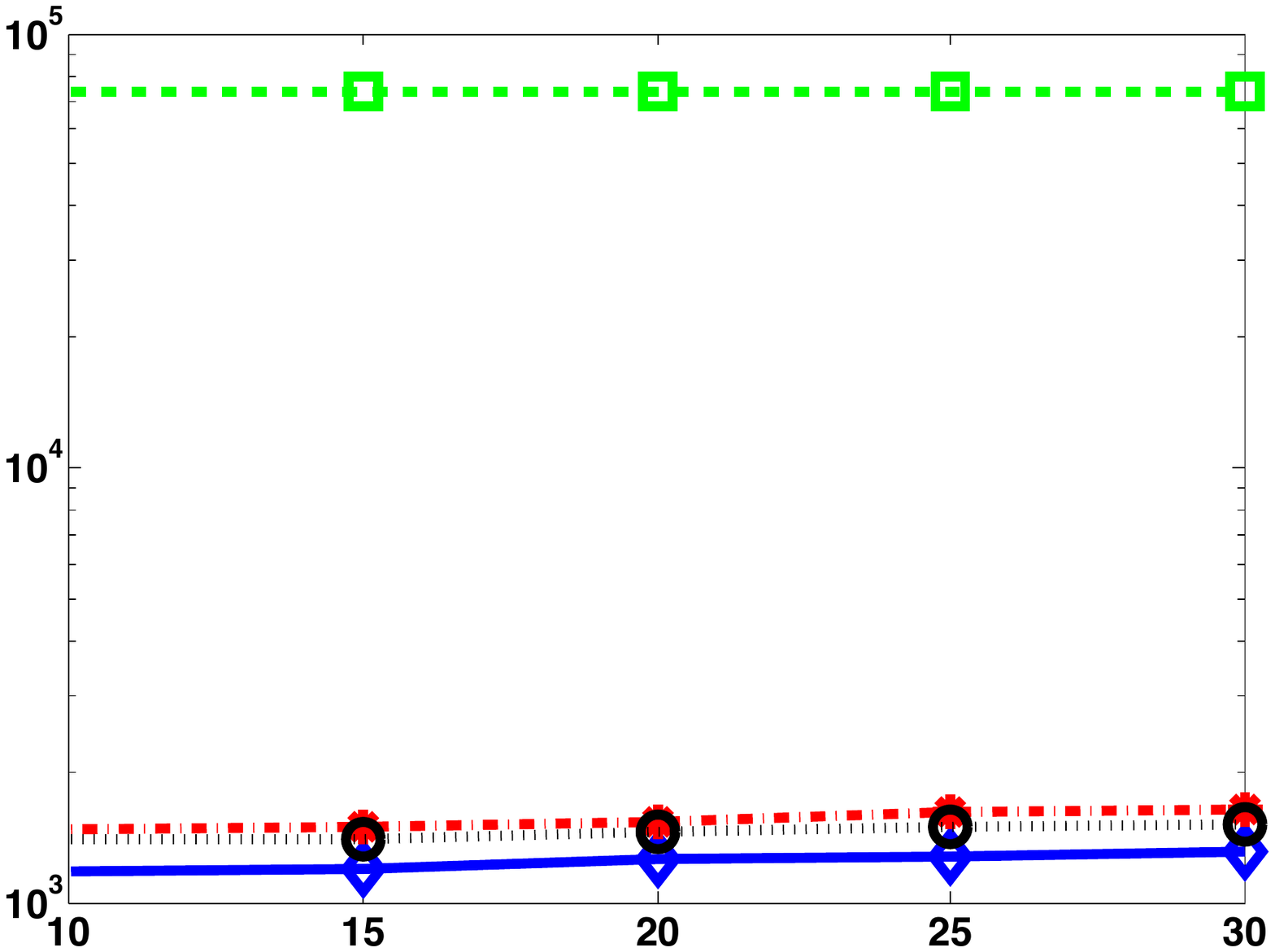}}
    \hspace*{\betweenWidth}
    \subfigure[BOWpubmed]{\includegraphics[width=\sfigWidth,height=\sfigHeight]{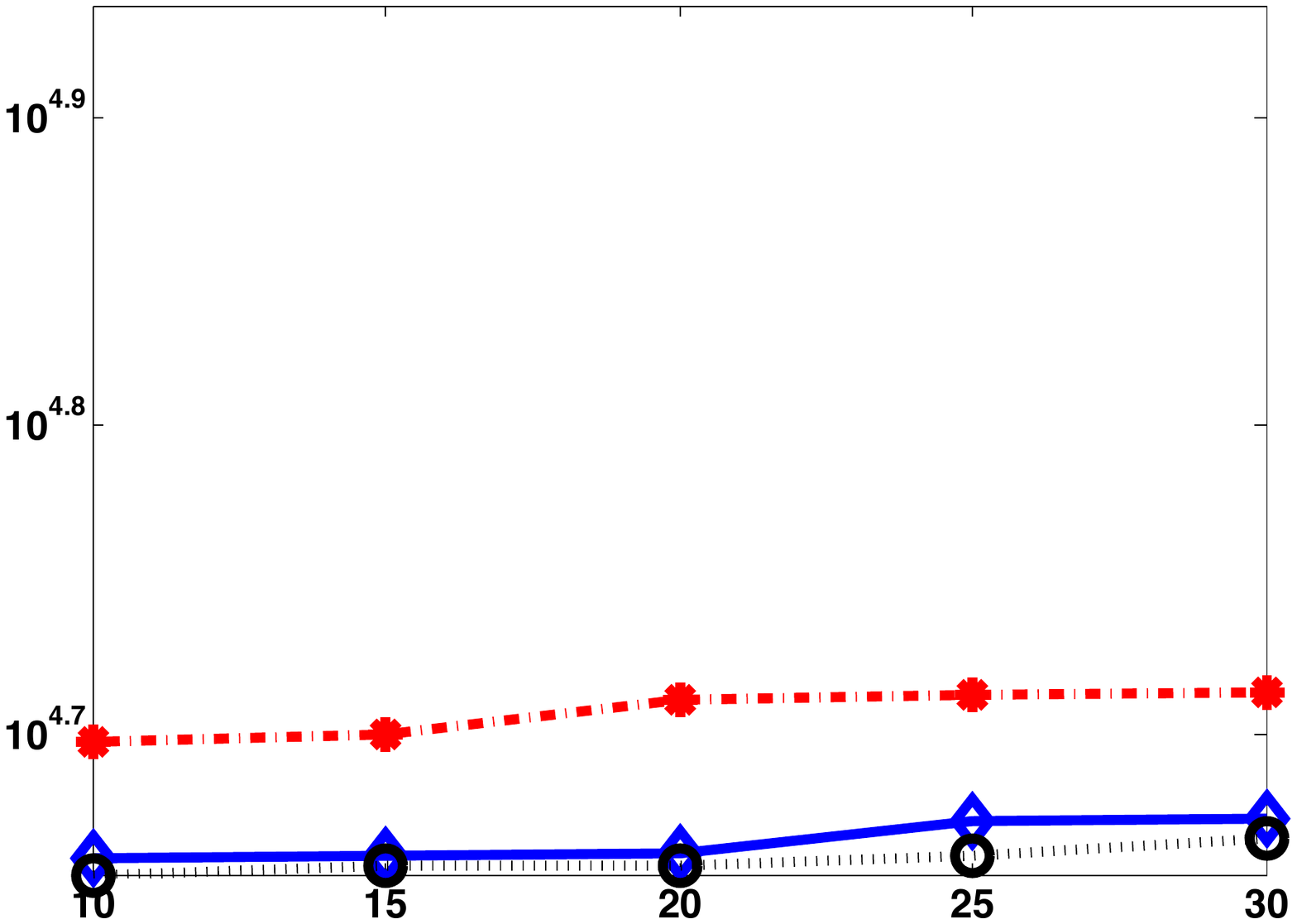}}
\end{center}
\vspace{-0.3in}
\caption{
{
Low rank approximation. First row: error (normalized by baseline) v.s. projection dimension. 
Second row: time v.s. projection dimension.
}\label{fig:lowrank_cost}
}
\vspace{-0.1in}
\begin{center}
    \subfigure[NewsGroups]{\includegraphics[width=\sfigWidth,height=\sfigHeight]{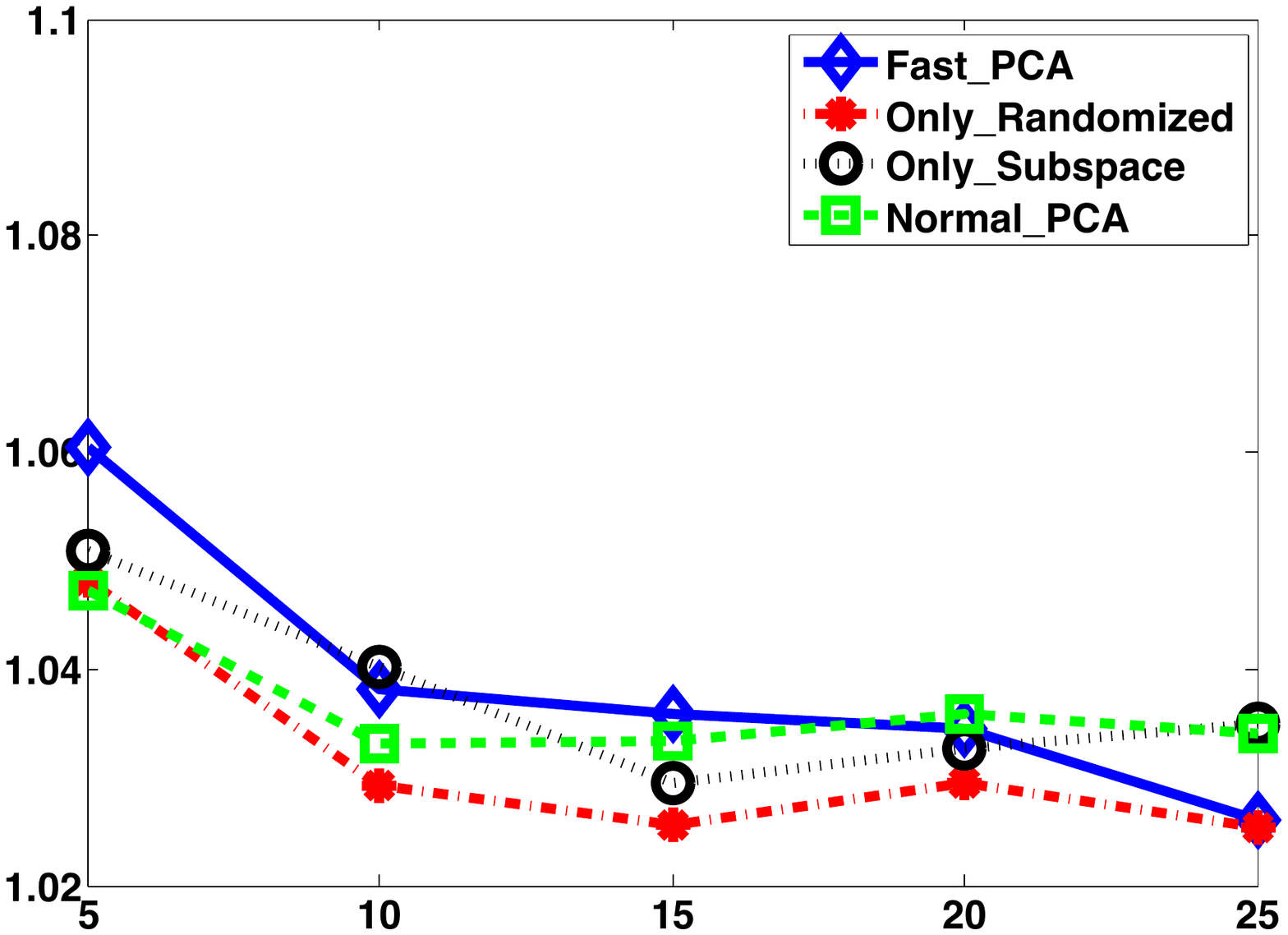}}
    \hspace*{\betweenWidth}
    \subfigure[MNIST]{\includegraphics[width=\sfigWidth,height=\sfigHeight]{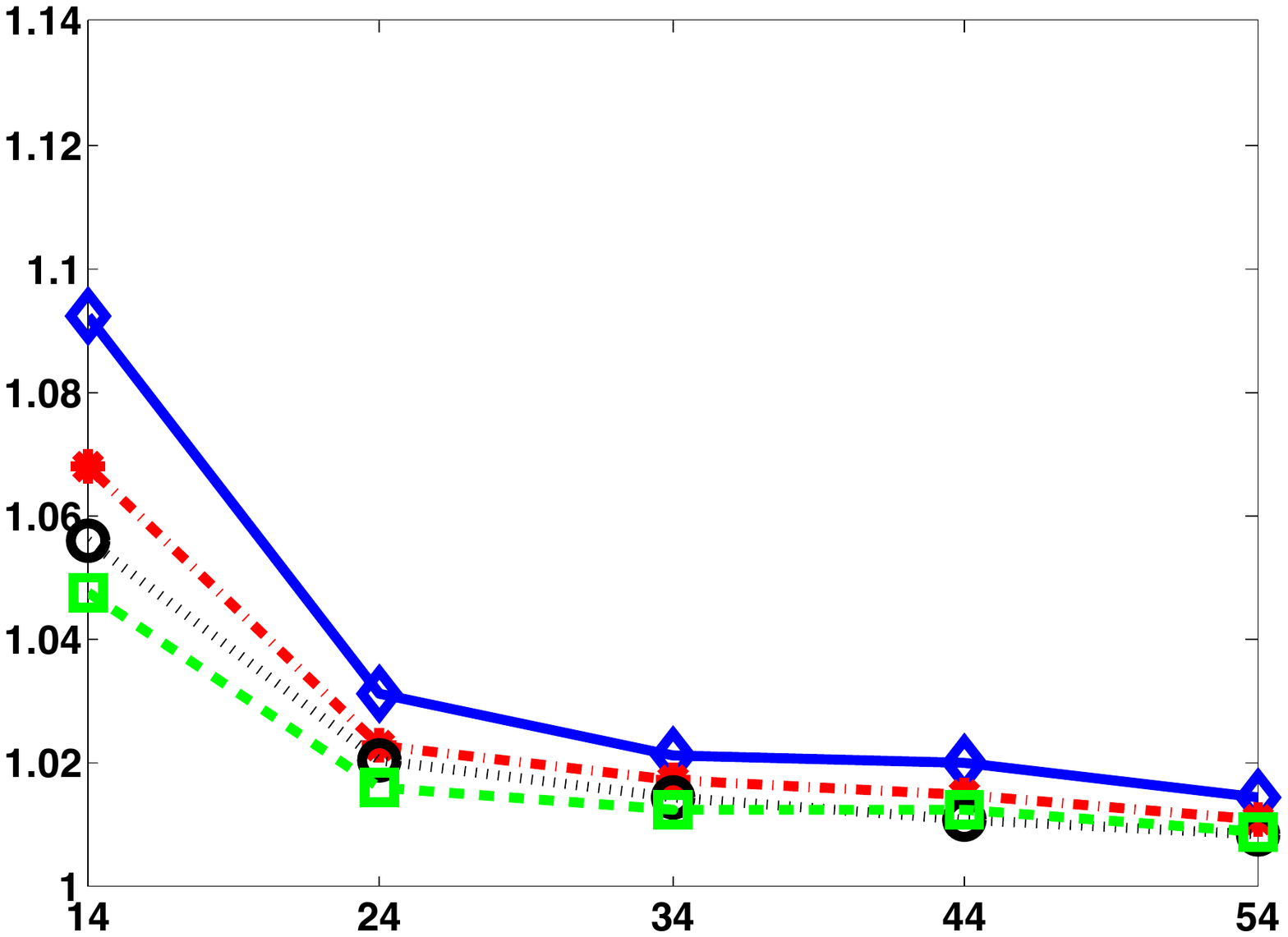}}
    \hspace*{\betweenWidth}
    \subfigure[BOWnytimes]{\includegraphics[width=\sfigWidth,height=\sfigHeight]{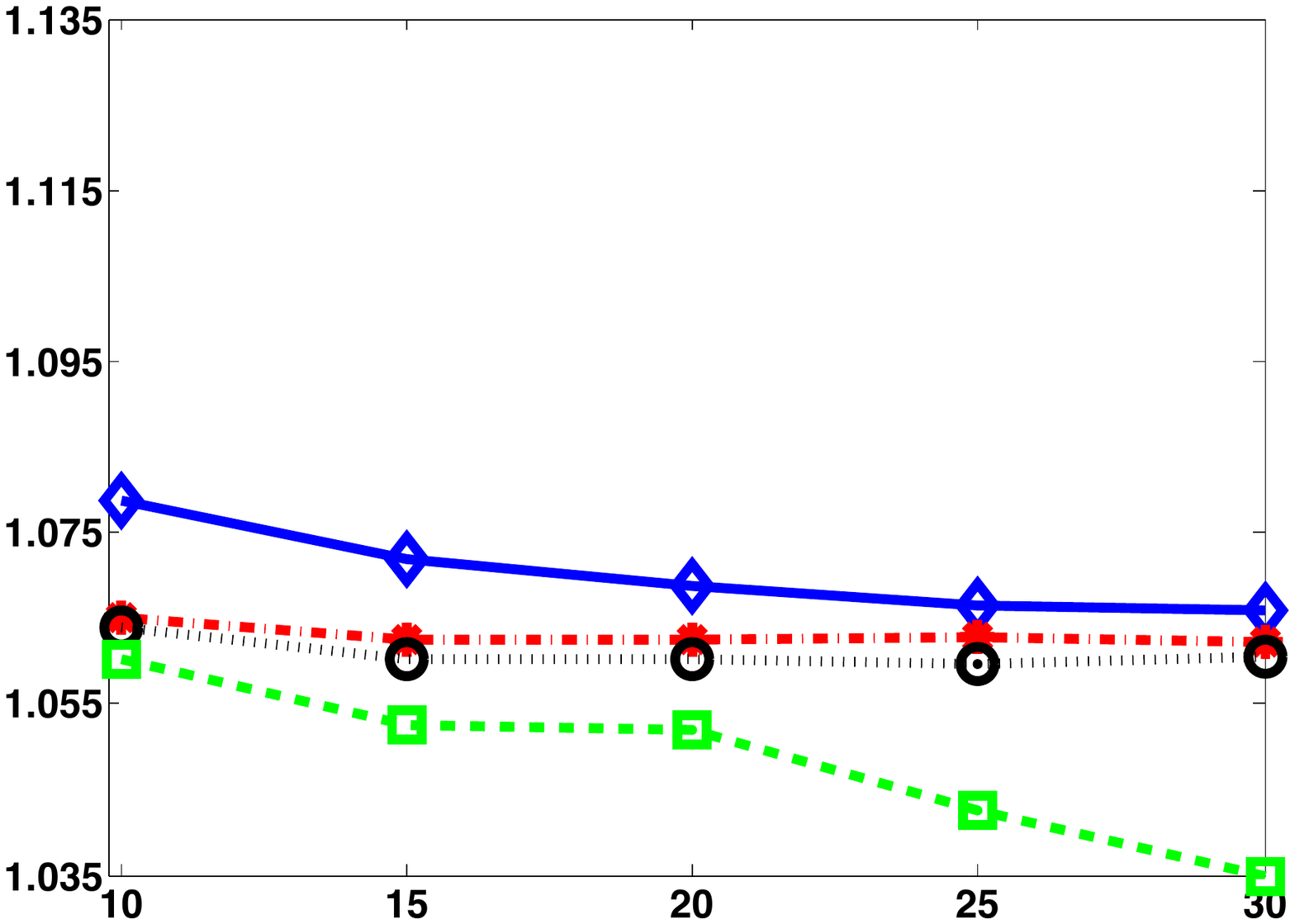}}
    \hspace*{\betweenWidth}
    \subfigure[BOWpubmed]{\includegraphics[width=\sfigWidth,height=\sfigHeight]{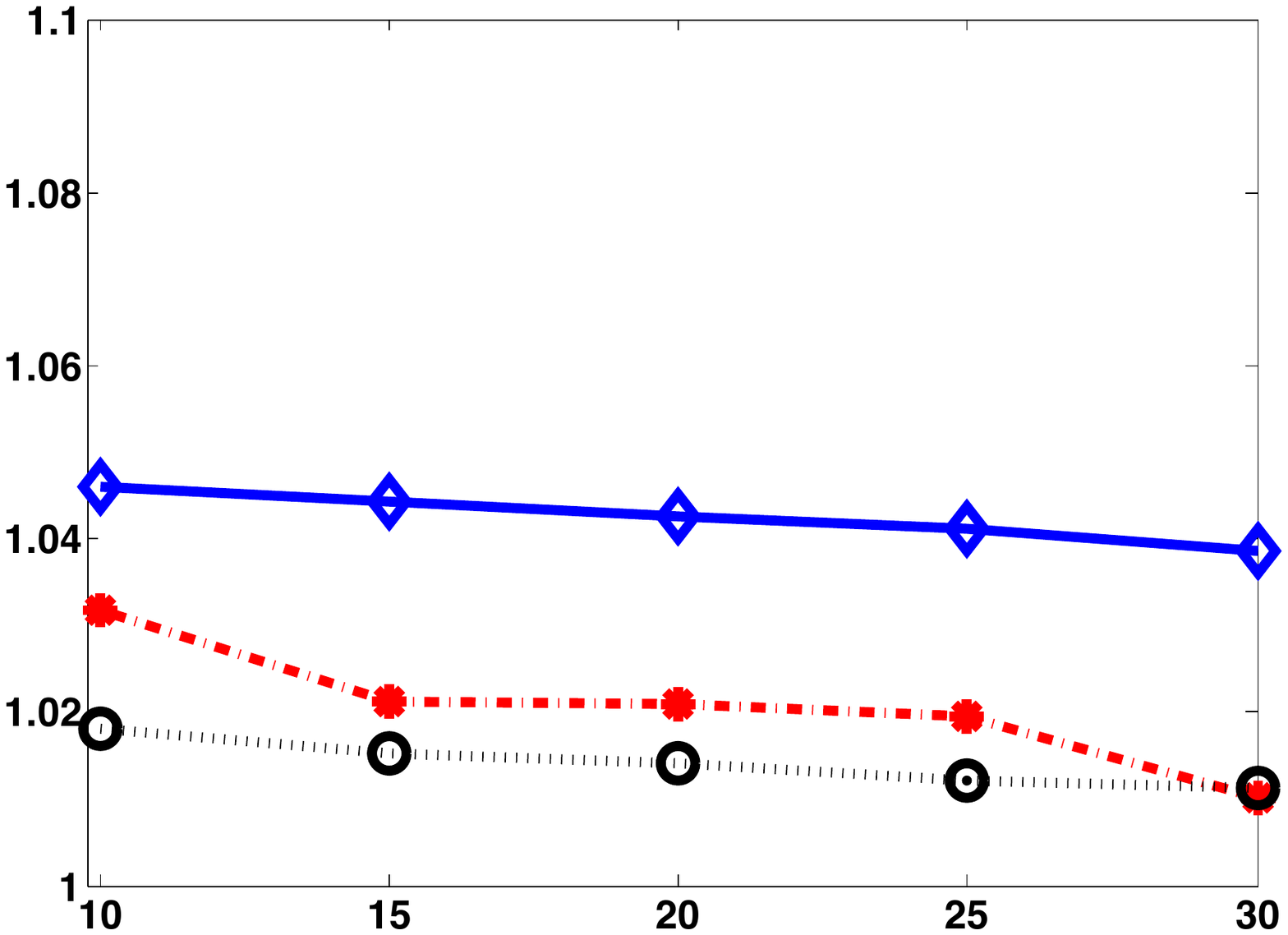}}\\
        \subfigure[NewsGroups]{\includegraphics[width=\sfigWidth,height=\sfigHeight]{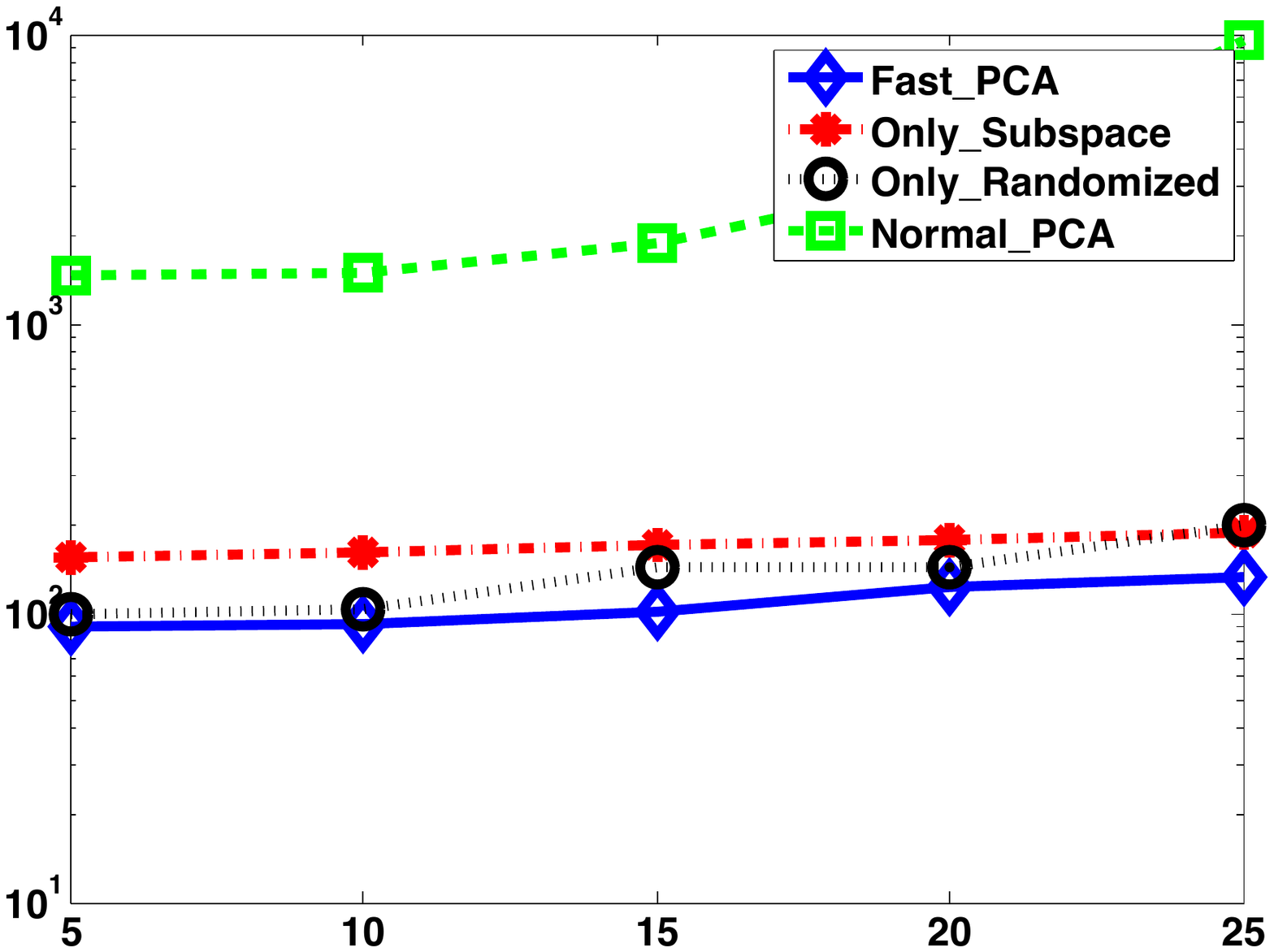}}
    \hspace*{\betweenWidth}
    \subfigure[MNIST]{\includegraphics[width=\sfigWidth,height=\sfigHeight]{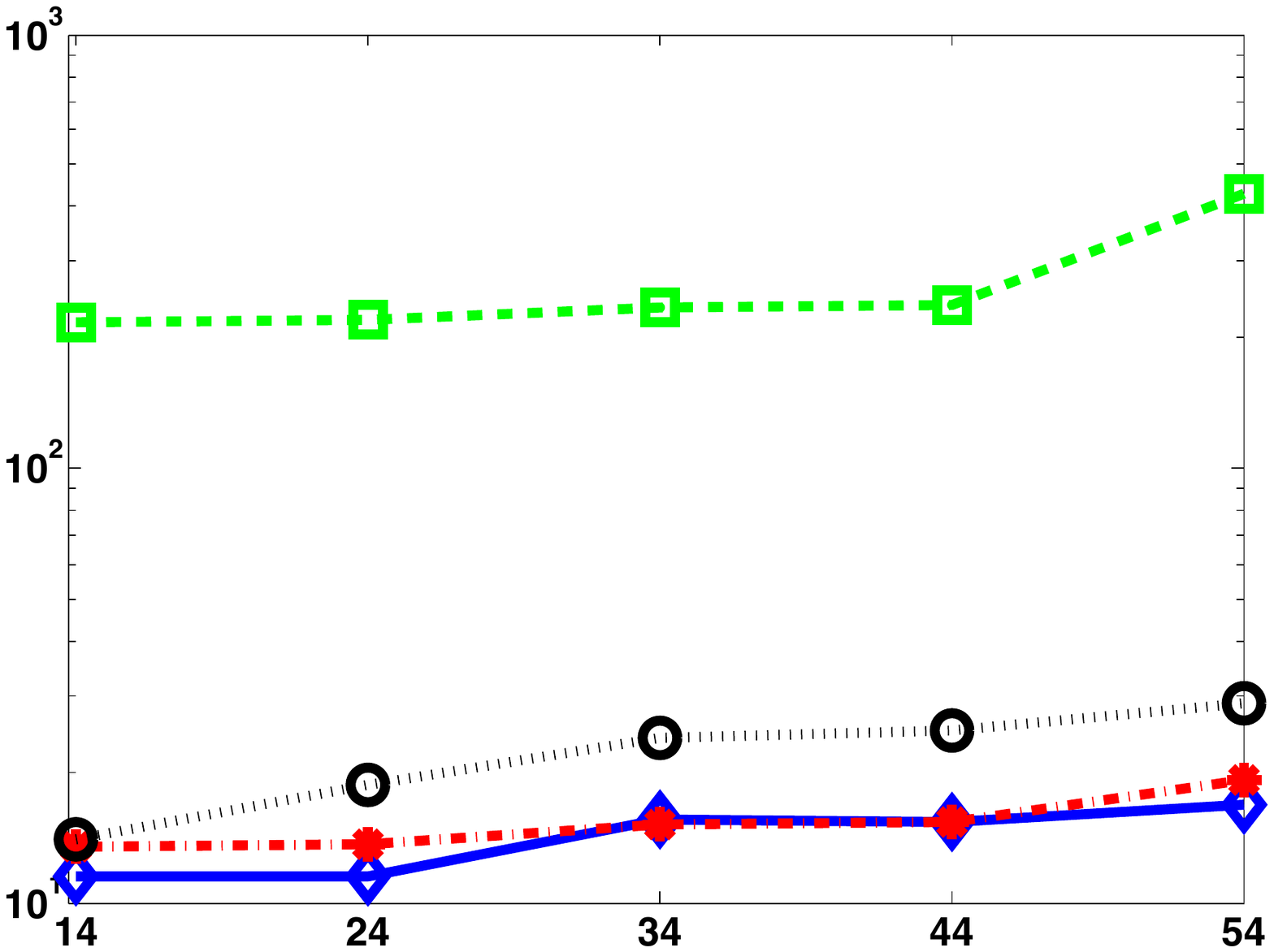}}
    \hspace*{\betweenWidth}
    \subfigure[BOWnytimes]{\includegraphics[width=\sfigWidth,height=\sfigHeight]{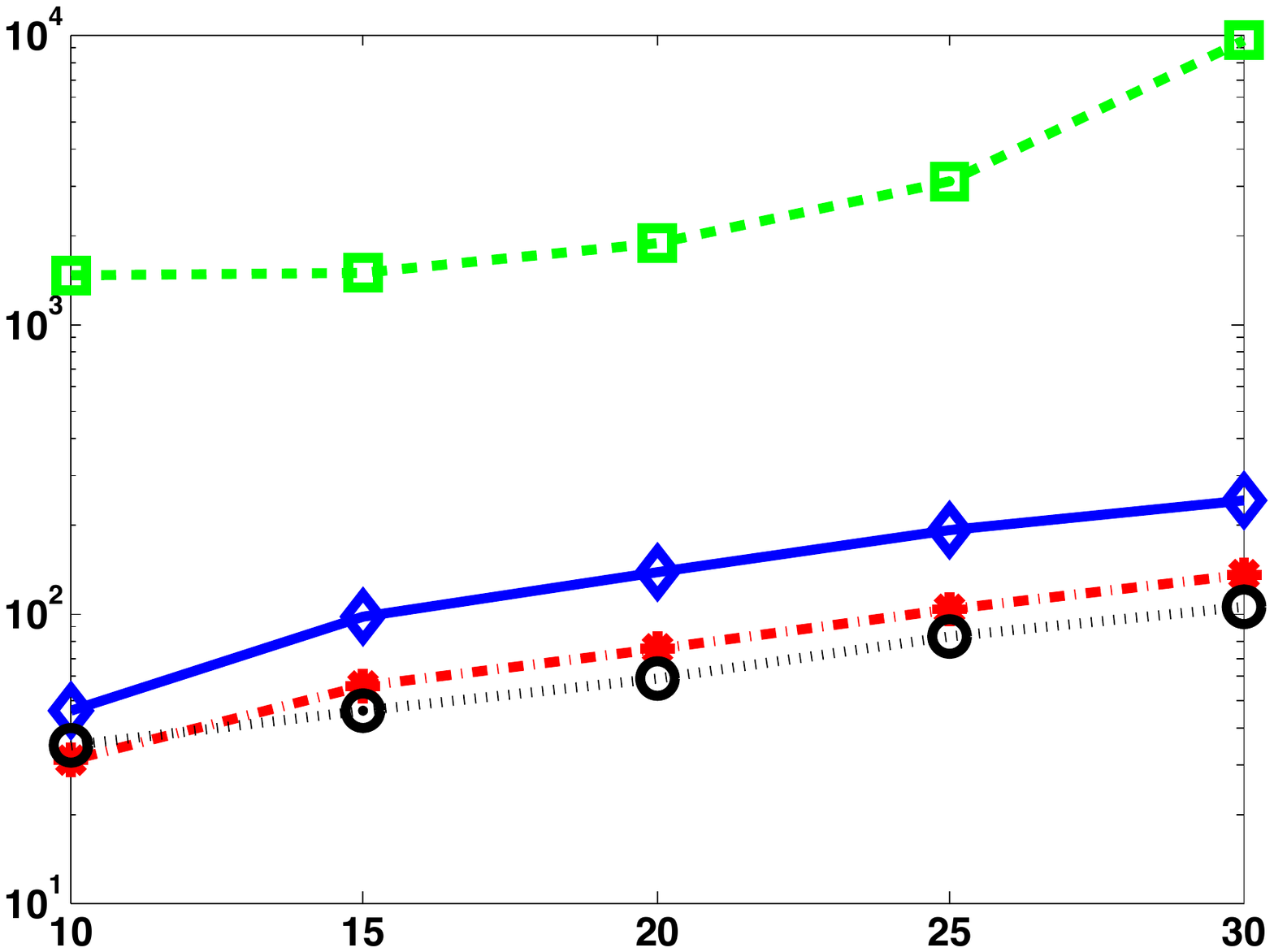}}
    \hspace*{\betweenWidth}
    \subfigure[BOWpubmed]{\includegraphics[width=\sfigWidth,height=\sfigHeight]{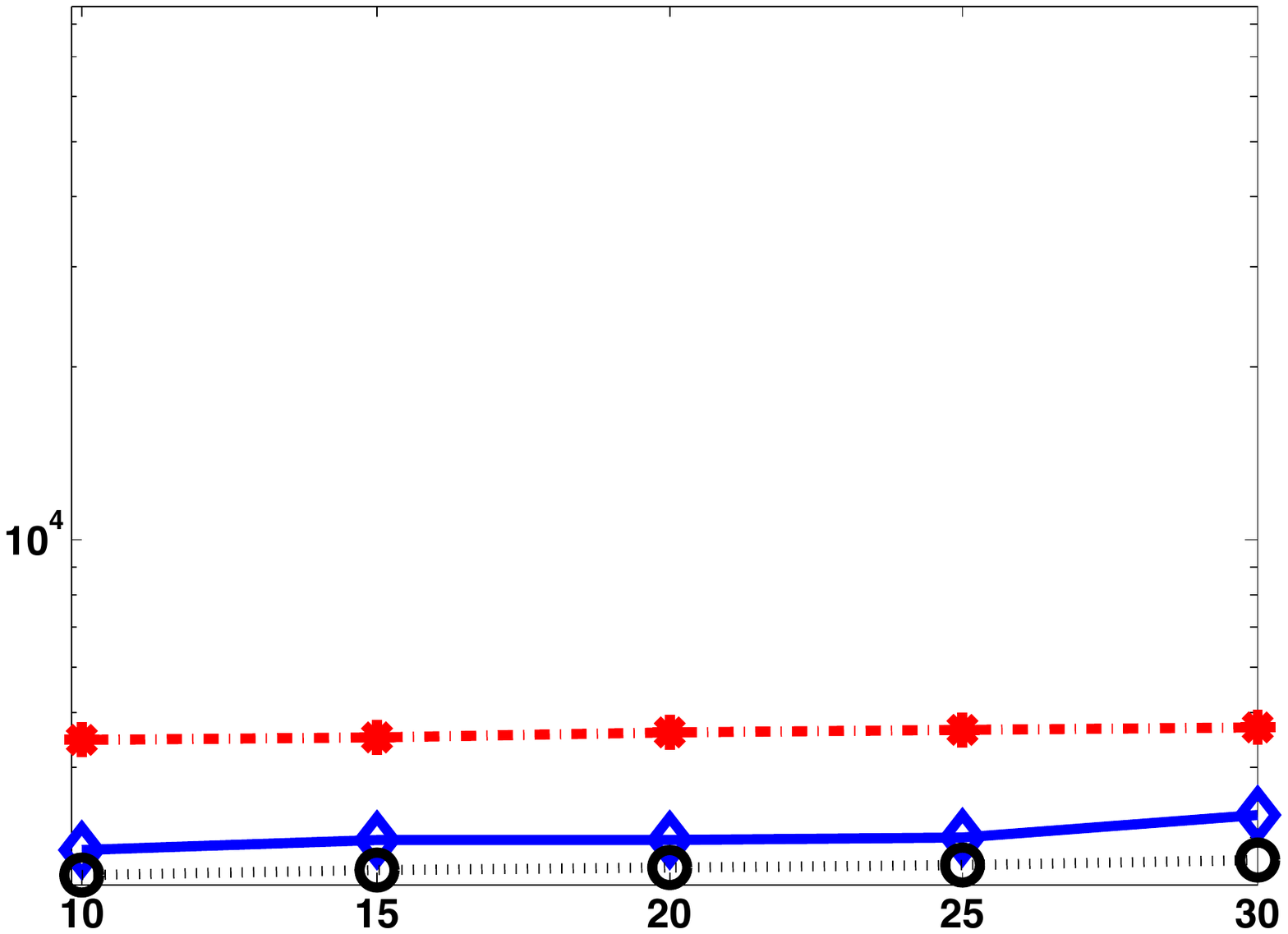}}
\end{center} 
\vspace{-0.3in}
\caption{
{
$k$-means clustering. First row: cost (normalized by baseline) v.s. projection dimension. 
Second row: time v.s. projection dimension. 
}\label{fig:kmeans_cost}
}
\vspace{-0.1in}
\begin{center}
    \subfigure[MNIST]{\includegraphics[width=\sfigWidth,height=\sfigHeight]{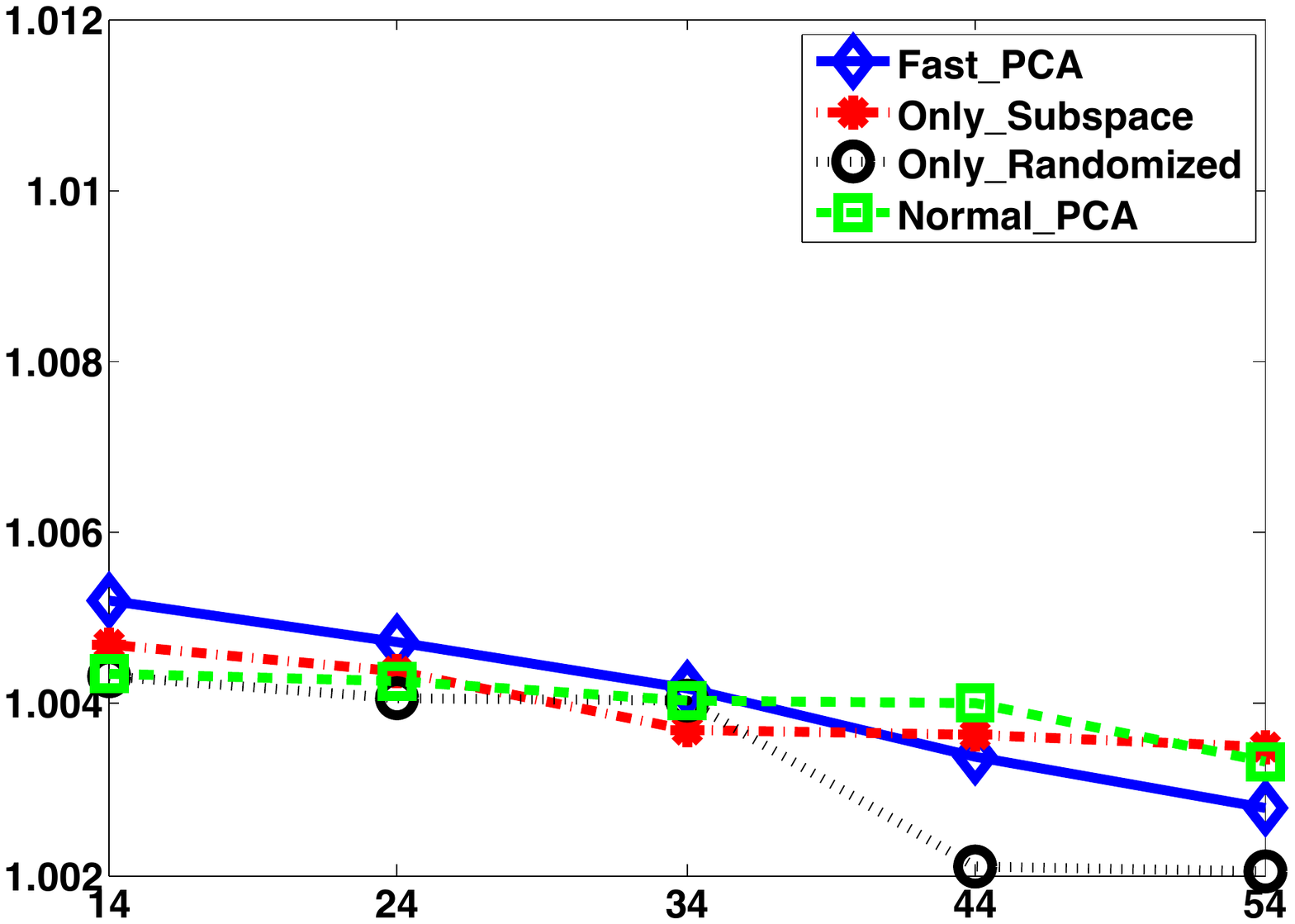}}
    \hspace*{\betweenWidth}
    \subfigure[YearPredictionMSD]{\includegraphics[width=\sfigWidth,height=\sfigHeight]{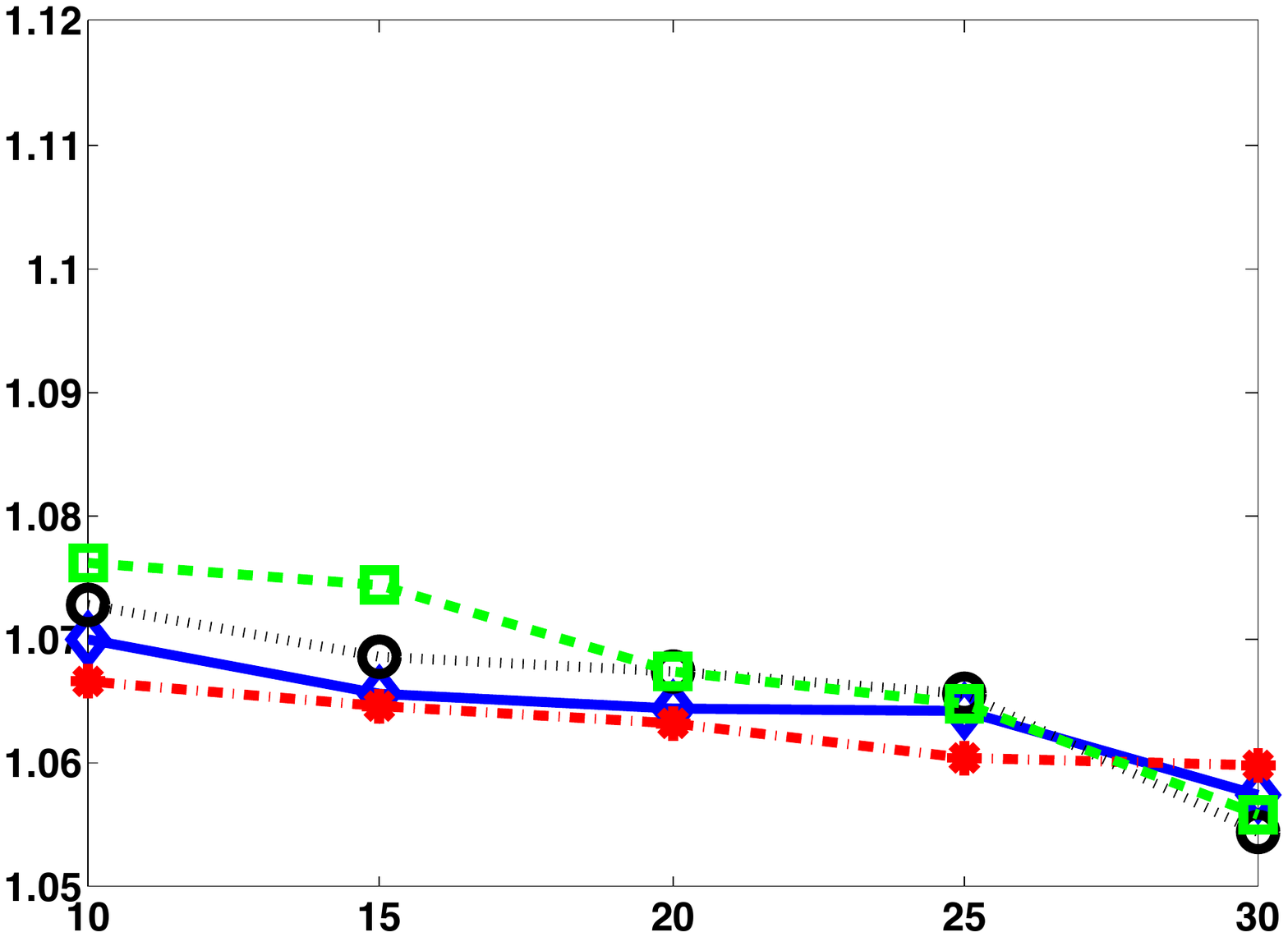}}
    \hspace*{\betweenWidth}
    \subfigure[CTslices]{\includegraphics[width=\sfigWidth,height=\sfigHeight]{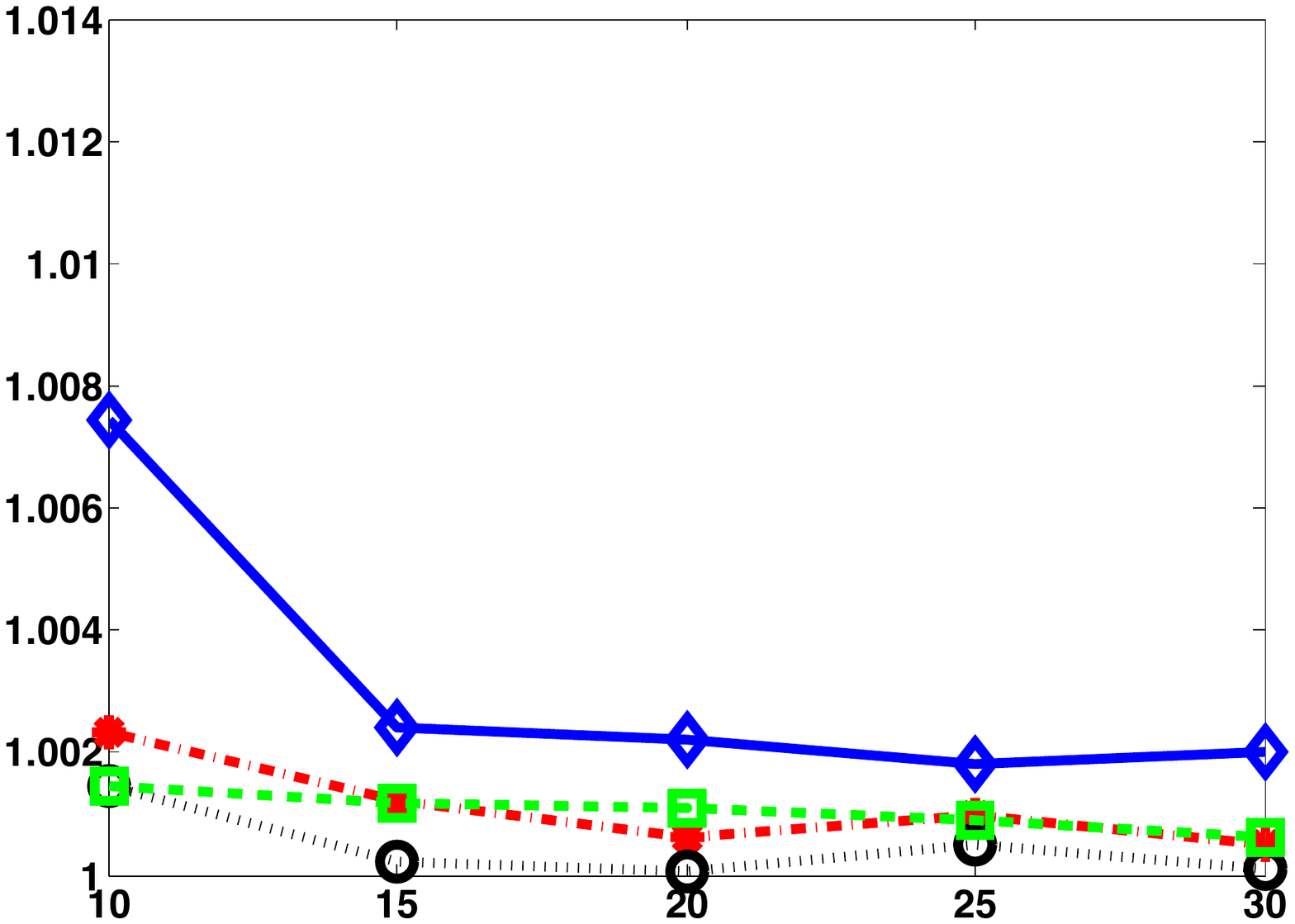}}
    \hspace*{\betweenWidth}
    \subfigure[MNIST8m]{\includegraphics[width=\sfigWidth,height=\sfigHeight]{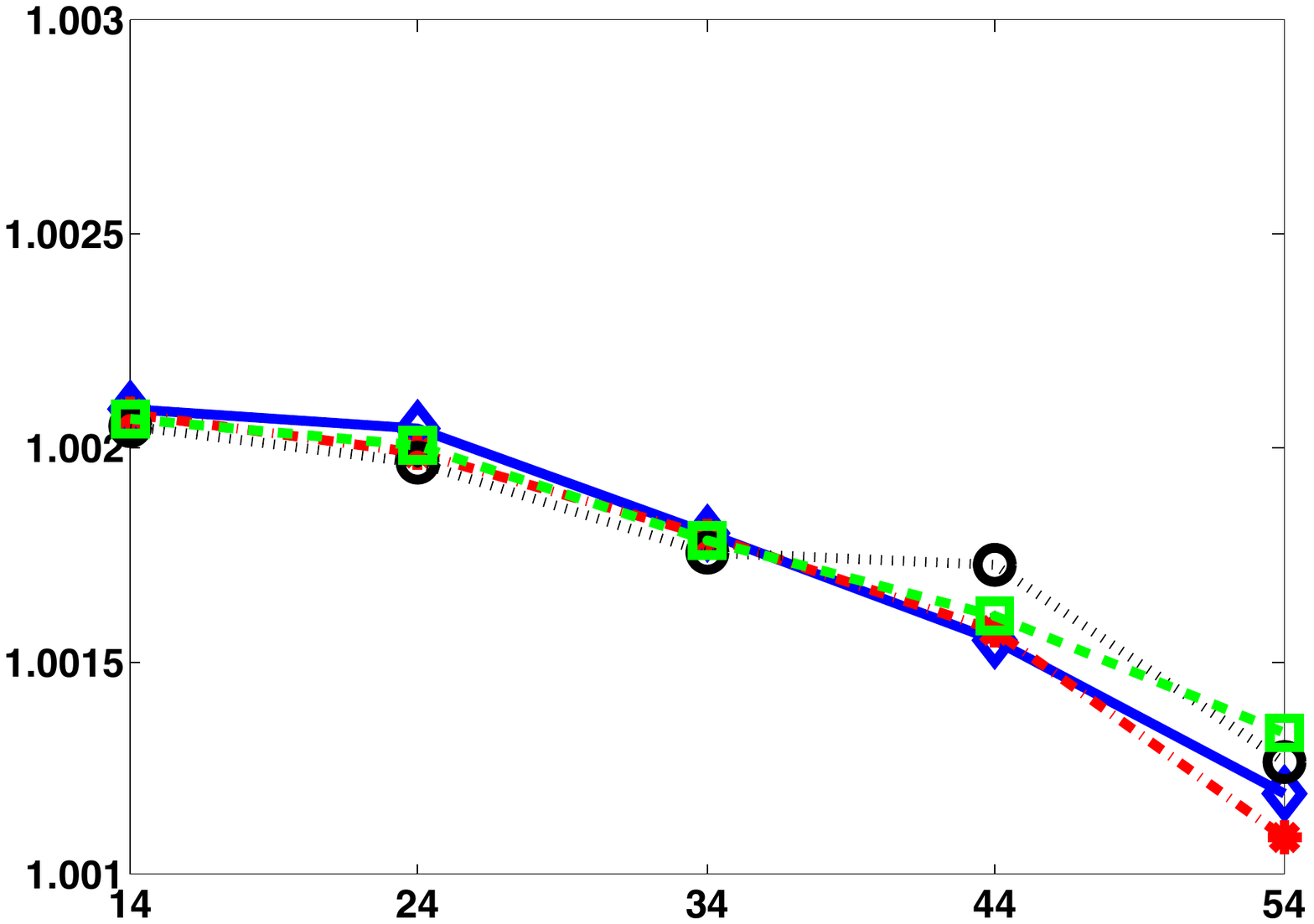}}\\
    \subfigure[MNIST]{\includegraphics[width=\sfigWidth,height=\sfigHeight]{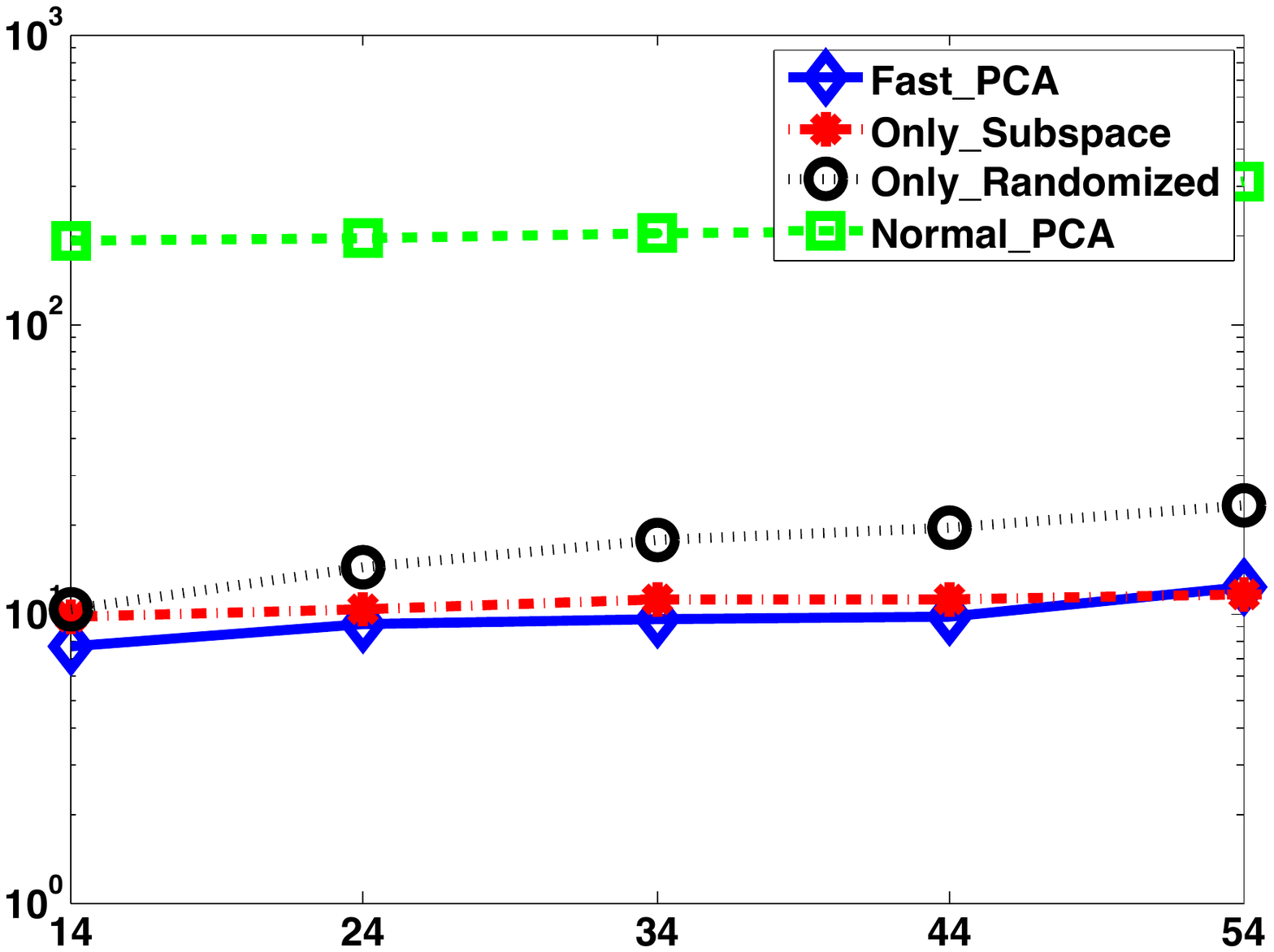}}
    \hspace*{\betweenWidth}
    \subfigure[YearPredictionMSD]{\includegraphics[width=\sfigWidth,height=\sfigHeight]{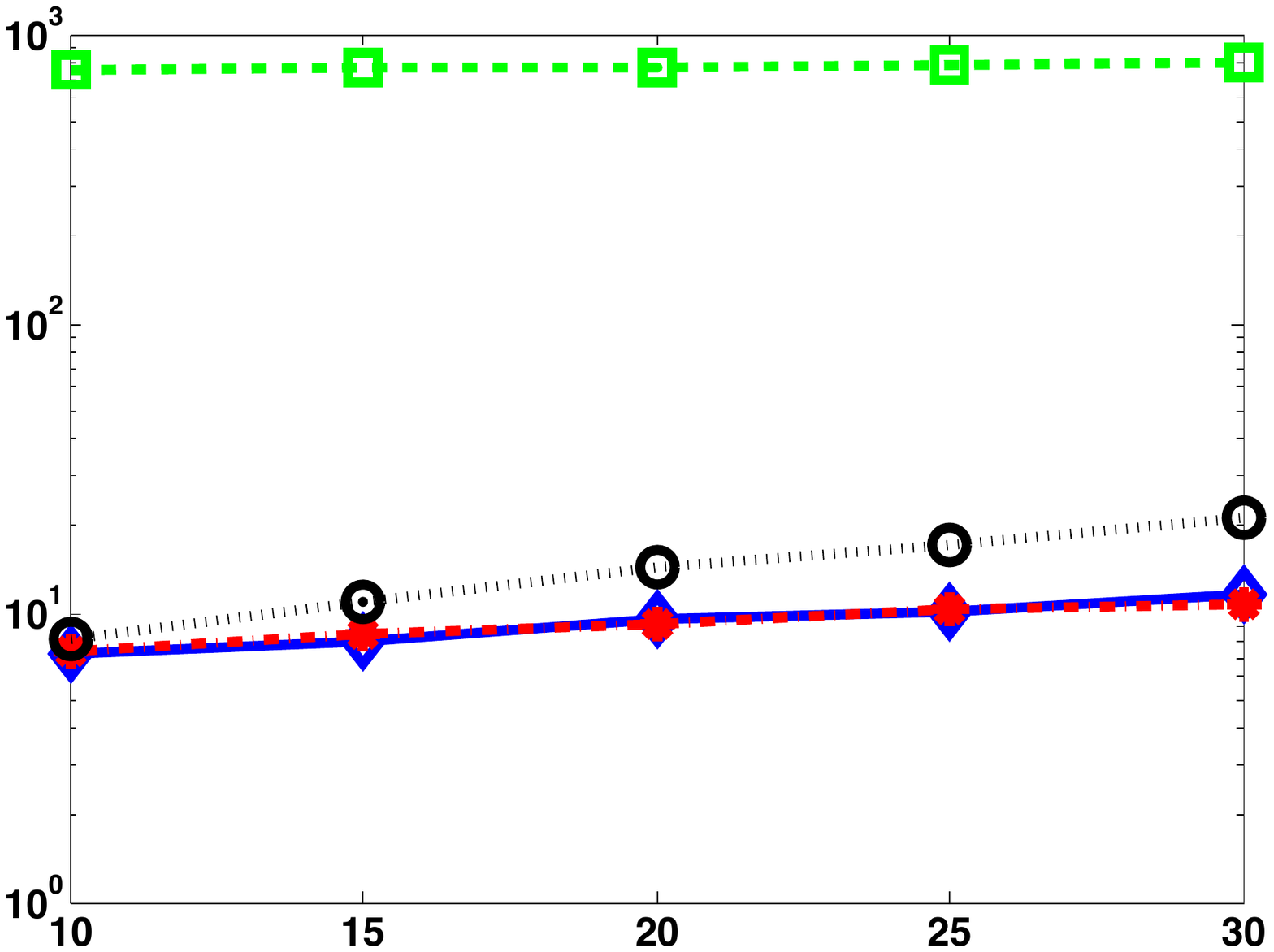}}
    \hspace*{\betweenWidth}
    \subfigure[CTslices]{\includegraphics[width=\sfigWidth,height=\sfigHeight]{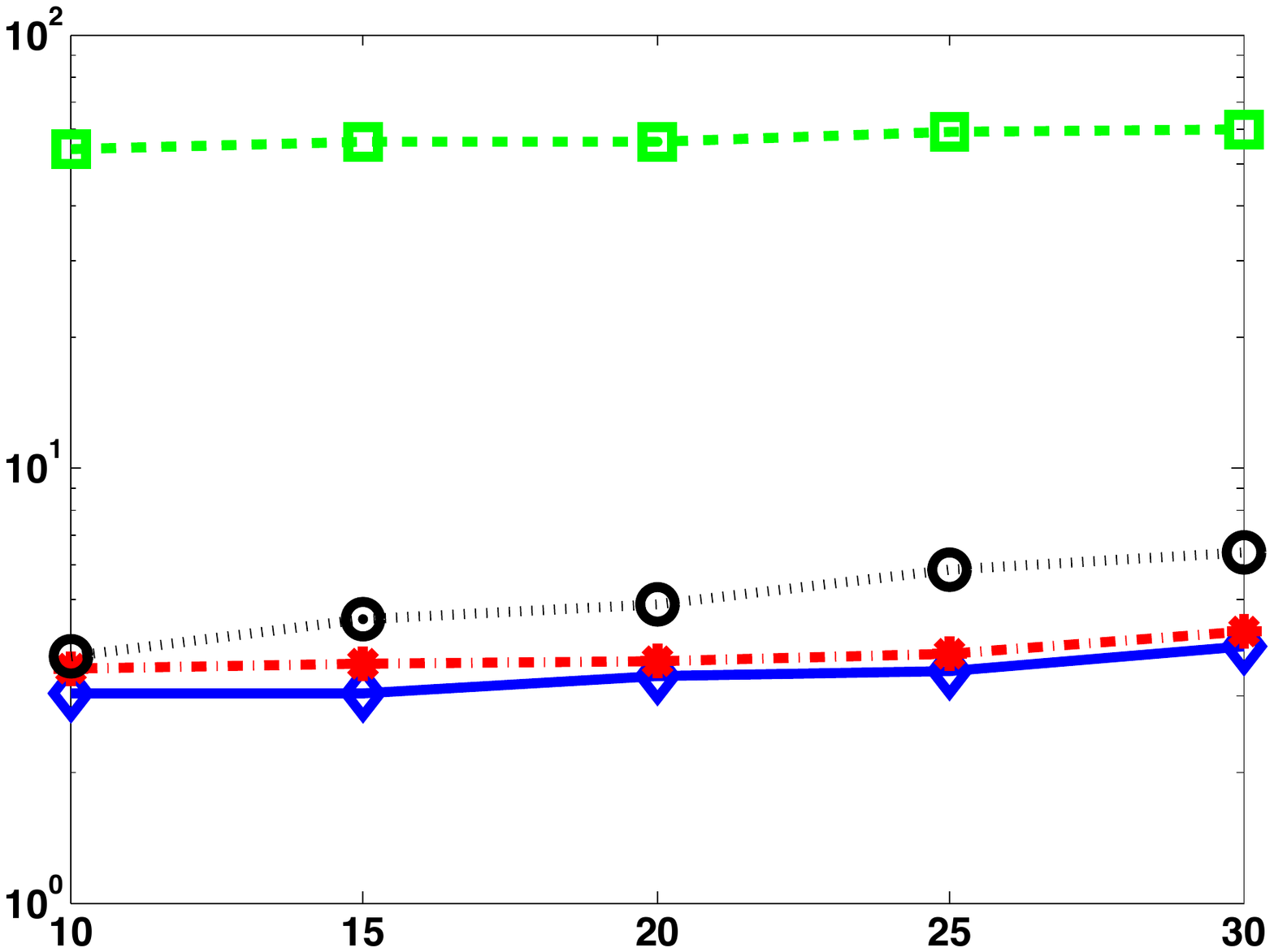}}
    \hspace*{\betweenWidth}
    \subfigure[MNIST8m]{\includegraphics[width=\sfigWidth,height=\sfigHeight]{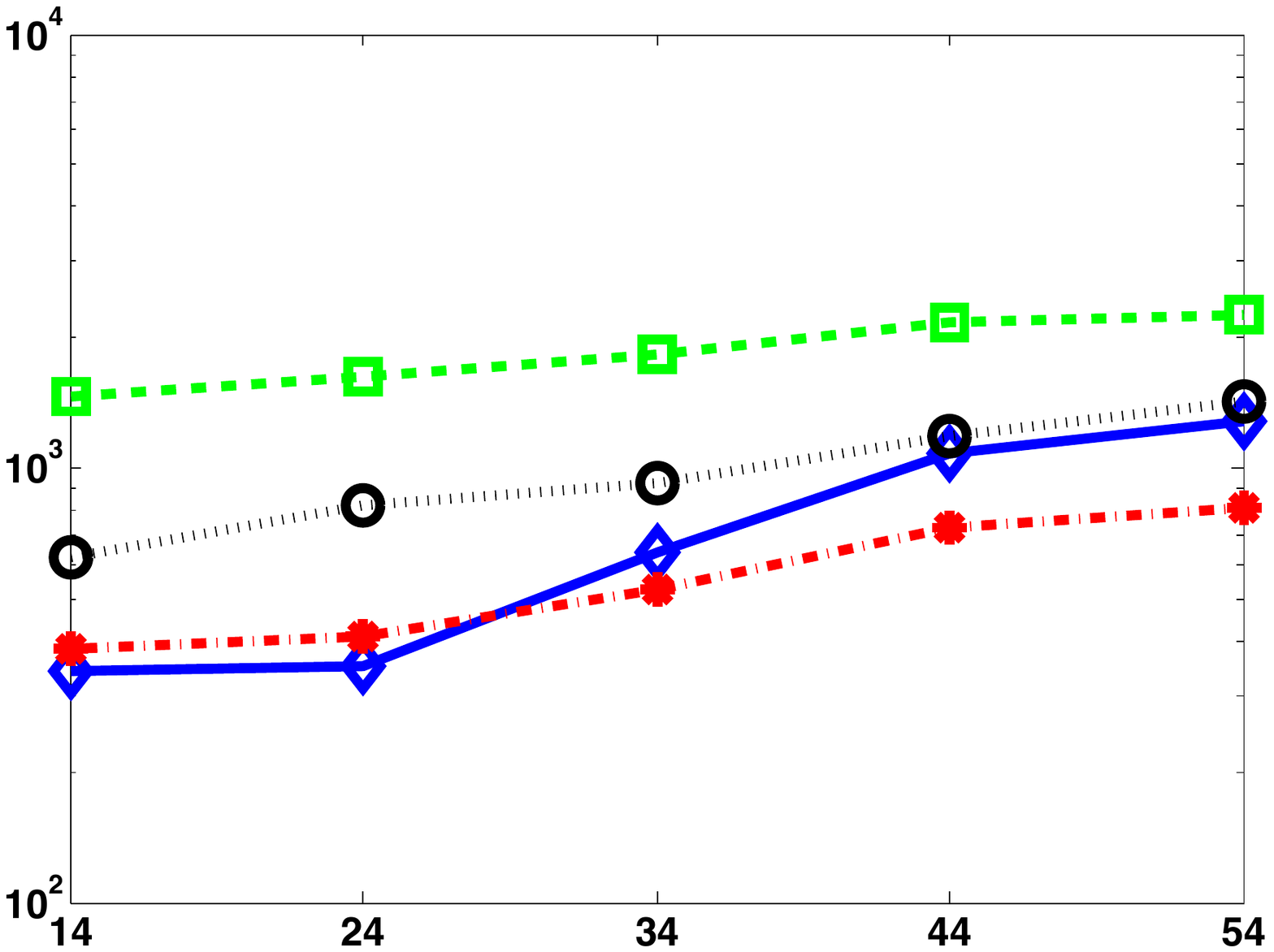}}
\end{center}
\vspace{-0.3in}
\caption{
{
PCR. First row: error (normalized by baseline) v.s. projection dimension. 
Second row: time v.s. projection dimension. 
}\label{fig:pcr_cost}
}
\end{figure*}

\paragraph{Acknowledgments} This work was supported in part by  NSF grants CCF-0953192, CCF-1451177,
CCF-1101283, and CCF-1422910, ONR grant N00014-09-1-0751, and AFOSR grant
FA9550-09-1-0538. David Woodruff would like to acknowledge the XDATA program of the Defense Advanced Research Projects Agency (DARPA), administered through Air Force Research Laboratory contract FA8750-12-C0323, for supporting this work. 

\bibliography{distributedPCA}
\bibliographystyle{plainnat}

\clearpage
\newpage

\appendix

\section{Guarantees for Distributed PCA}

\subsection{Proof of Lemma~\ref{lem:pca_app}}

%

We first prove a generalization of Lemma~\ref{lem:pca_app}.
\begin{lemma}\label{lem:pca_app_gen}
Let $\MA \in \R^{n \times d}$ be an $n \times d$ matrix with singular value decomposition
$\MA = \MU \MD \MV^\top$. Let $\epsilon \in (0,1]$ and $\dimr, t \in \N_+$ with
$d-1 \geq t \geq \dimr + \lceil \dimr/\epsilon \rceil -1$,
and let $\tcd{\MA}=\MA \tc{\MV}{t}(\tc{\MV}{t})^\top$.
Then for any matrix $\MX$ with $d$ rows and $\|\MX\|^2_F\leq r$, we have
\begin{eqnarray*}
\|(\MA - \tcd{\MA} ) \MX\|_F^2 = \|\MA \MX\|_F^2 - \|\tcd{\MA} \MX\|_F^2 \leq  \epsilon \sum_{i=\dimr+1}^{d}\sigma^2_i(\MA).
\end{eqnarray*}
\end{lemma}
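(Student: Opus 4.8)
The plan is to pass to the singular basis of $\MA$, where the truncation $\tcd{\MA}$ simply drops the tail singular triples, and then reduce the inequality to an elementary estimate on the decreasing sequence $\sigma_1(\MA)^2 \ge \sigma_2(\MA)^2 \ge \dots$. Write the SVD as $\MA = \sum_i \sigma_i u_i v_i^\top$ with $u_i,v_i$ the columns of $\MU,\MV$ and $\sigma_i = \sigma_i(\MA)$. Because $v_i^\top \tc{\MV}{t}(\tc{\MV}{t})^\top$ equals $v_i^\top$ for $i \le t$ and $0$ for $i > t$, this gives $\tcd{\MA} = \sum_{i \le t} \sigma_i u_i v_i^\top$ and $\MA - \tcd{\MA} = \sum_{i > t} \sigma_i u_i v_i^\top$. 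Multiplying each of these by $\MX$ and using orthonormality of the $u_i$ to kill cross terms in the Frobenius norm yields
$$\|\MA\MX\|_F^2 = \sum_i \sigma_i^2 \|v_i^\top \MX\|_2^2, \qquad \|\tcd{\MA}\MX\|_F^2 = \sum_{i\le t} \sigma_i^2 \|v_i^\top \MX\|_2^2, \qquad \|(\MA-\tcd{\MA})\MX\|_F^2 = \sum_{i>t} \sigma_i^2 \|v_i^\top \MX\|_2^2,$$
and subtracting the first two identities gives the claimed equality $\|\MA\MX\|_F^2 - \|\tcd{\MA}\MX\|_F^2 = \|(\MA-\tcd{\MA})\MX\|_F^2$ for free.

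Next, set $a_i := \|v_i^\top \MX\|_2^2 \ge 0$. Since $\MV$ is orthogonal, $\sum_i a_i = \|\MV^\top \MX\|_F^2 = \|\MX\|_F^2 \le \dimr$. The one point worth flagging is that only this \emph{aggregate} bound on $\sum_i a_i$ is used in what follows — never a per-index bound $a_i \le 1$ — which is exactly why the orthonormal-columns hypothesis of Lemma~\ref{lem:pca_app} can be weakened here to just $\|\MX\|_F^2 \le \dimr$. Using monotonicity of the $\sigma_i$ (and $t \le d-1$, so $\sigma_{t+1}$ is defined),
$$\|(\MA-\tcd{\MA})\MX\|_F^2 = \sum_{i>t} \sigma_i^2 a_i \ \le\ \sigma_{t+1}^2 \sum_{i>t} a_i \ \le\ \dimr\,\sigma_{t+1}^2.$$

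It then remains to show $\dimr\,\sigma_{t+1}^2 \le \epsilon \sum_{i=\dimr+1}^d \sigma_i^2$, and this is where the hypothesis $t \ge \dimr + \lceil \dimr/\epsilon\rceil - 1$ is invoked: the index block $\dimr+1,\dots,t+1$ contains $t+1-\dimr \ge \lceil \dimr/\epsilon\rceil \ge \dimr/\epsilon$ indices, each with $\sigma_i^2 \ge \sigma_{t+1}^2$, so $\sum_{i=\dimr+1}^d \sigma_i^2 \ge \sum_{i=\dimr+1}^{t+1}\sigma_i^2 \ge (t+1-\dimr)\sigma_{t+1}^2 \ge (\dimr/\epsilon)\sigma_{t+1}^2$, which rearranges to exactly the bound needed. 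Chaining this with the previous display finishes the proof. I do not foresee a genuine obstacle; the only care required is bookkeeping the index ranges (existence of $\sigma_{t+1}$, the count $t+1-\dimr$) and recording that Lemma~\ref{lem:pca_app} is recovered as the special case $\|\MX\|_F^2 = \dimr$.
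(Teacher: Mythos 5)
Your proposal is correct and follows essentially the same route as the paper's proof: both pass to the singular basis, establish the equality $\|(\MA-\tcd{\MA})\MX\|_F^2 = \|\MA\MX\|_F^2 - \|\tcd{\MA}\MX\|_F^2$ by orthonormality of $\MU$, bound the tail by $\dimr\,\sigma_{t+1}^2(\MA)$ using only the aggregate bound $\|\MV^\top\MX\|_F^2 = \|\MX\|_F^2 \le \dimr$, and then average $\sigma_{t+1}^2$ over the index block $\dimr+1,\dots,t+1$ exactly as in the paper's inequality (\ref{eqn:data}). Your coordinatewise expansion in the $a_i = \|v_i^\top\MX\|_2^2$ is just a rewriting of the paper's spectral-norm submultiplicativity step, so there is nothing substantive to distinguish the two arguments.
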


\begin{proof}
The proof follows the idea in the proof of Lemma 6.1 in~\citep{dan2013tiny}.

For convenience, let $\overline{\tc{\MD}{t}}$ denote the diagonal matrix that contains the first $t$ diagonal entries in $\MD$ and is 0 otherwise.
Then $\tcd{\MA} = \MU \overline{\tc{\MD}{t}} \MV^\top$
We first have
\begin{eqnarray*}
\|\MA \MX\|_F^2 - \|\tcd{\MA} \MX\|_F^2  &=& \|\MU\MD \MV^\top \MX\|_F^2 - \|\MU \overline{\tc{\MD}{t}} \MV^\top \MX\|_F^2 \\
& = & \|\MD \MV^\top \MX\|_F^2 - \|\overline{\tc{\MD}{t}} \MV^\top \MX\|_F^2\\
& = & \|(\MD-\overline{\tc{\MD}{t}})\MV^\top \MX\|_F^2\\
& = & \|\MU(\MD-\overline{\tc{\MD}{t}})\MV^\top \MX\|_F^2 \\
& = & \|\MA \MX - \tcd{\MA} \MX\|_F^2.
\end{eqnarray*}
where the second and fourth equalities follow since $\MU$ has orthonormal columns, and the third equality follows since for $\Mbf= \MV^\top \MX$
we have
\begin{eqnarray*}
\|\MD\Mbf\|_F^2 - \|\overline{\tc{\MD}{t}}\Mbf\|_F^2 & = & \sum_{i=1}^d \sum_{j=1}^d \sigma^2_i(\MA) m_{ij}^2 - \sum_{i=1}^t \sum_{j=1}^d \sigma^2_i(\MA) m_{ij}^2 \\
& = & \sum_{i=t+1}^d \sum_{j=1}^d \sigma^2_i(\MA) m_{ij}^2 = \|(\MD-\overline{\tc{\MD}{t}})\Mbf\|_F^2 .
\end{eqnarray*}

Next, we bound $\|\MA \MX - \tcd{\MA} \MX\|_F^2$. We have
\begin{eqnarray*}
\|\MA \MX - \tcd{\MA} \MX\|_F^2 = \|(\MD-\overline{\tc{\MD}{t}}) \MV^\top \MX\|_F^2 \leq \|(\MD-\overline{\tc{\MD}{t}})\|_S^2 \|\MX\|_F^2 = \dimr \sigma^2_{t+1}(\MA)
\end{eqnarray*}
where the inequality follows because the spectral norm is consistent with the Euclidean norm.
This implies the lemma since
\begin{eqnarray}
\dimr \sigma^2_{t+1}(\MA) \leq \epsilon (t-\dimr+1)\sigma^2_{t+1}(\MA) \leq \epsilon \sum_{i=\dimr+1}^{t+1}\sigma^2_i(\MA) \leq \epsilon \sum_{i=\dimr+1}^{d}\sigma^2_i(\MA). \label{eqn:data}
\end{eqnarray}
where the first inequality follows for our choice of $t$.
\end{proof}

\noindent
\textbf{Proof of Lemma~\ref{lem:pca_app}}
We are now ready to use Lemma~\ref{lem:pca_app_gen} to prove Lemma~\ref{lem:pca_app}. 
First, any $d\times r$ orthonormal matrix $\MX$ has $\|\MX\|_F^2 \leq r$, so the assumption in  Lemma~\ref{lem:pca_app_gen} is satisfied.
Second,  by the property of the singular value decomposition (Eckart-Young Theorem), we have  
\[
  \sum_{i=\dimr+1}^d \sigma^2_i(\MA) = \min_{\text{rank}(\MB) \leq \dimr} \| \MA - \MB\|_F^2 = \min_{\text{orthonormal } \MY \in \RR^{d\times \dimr }} \| \MA - \MA \MY\MY^\top\|_F^2.
\]
Note that for orthonormal $\MX$, $d^2(\MA, \lspan{\MX}) =  \| \MA - \MA \MX\MX^\top\|_F^2$, so 
$\sum_{i=\dimr+1}^d \sigma^2_i(\MA)  \leq d^2(\MA, \lspan{\MX})$. This completes the proof.

\subsection{Proof of Theorem~\ref{thm:lr_app}}

\noindent
\textbf{Theorem~\ref{thm:lr_app}.}
{\it
Suppose Algorithm {\sf disPCA} 
takes parameters $t_1 \geq \dimr + \lceil 4\dimr/\epsilon \rceil -1$ and $t_2=\dimr$,
and outputs $\tc{\MV}{\dimr}$.
Then
\begin{eqnarray*}
\|\MP - \MP\tc{\MV}{\dimr}(\tc{\MV}{\dimr})^\top\|_F^2 \leq (1+\epsilon) \min_{\MX} d^2(\MP, \lspan{\MX})
\end{eqnarray*}
where the minimization is over $d\times \dimr$ orthonormal matrices $\MX$. The communication is $O(\frac{s\dimr d}{\epsilon})$ words.
}

\begin{proof}
Recall the notations: $\tcd{\MP}_i := \MP_i \MV_i^{(t_1)} (\MV_i^{(t_1)})^\top$ is the data obtained by applying local PCA on local data $P_i$, and $\tcd{\MP}$ is the concatenation of $\tcd{\MP}_i$. Now let $\MX^*$ denote the optimal subspace for $\MP$. 
Our goal is to show that the distance between $\MP$ and the subspace spanned by $\MV^{(r)}$ is close to that between $\MP$ and the subspace spanned by $\MX^*$.

To get some intuition, see Figure~\ref{fig:lowRank} for an illustration. We let $a$ denote the distance between $\MP$ and $L_{\MV^{(r)}}$, that is, $a := d^2(\MP, L_{\MV^{(r)}}) = \|\MP - \MP\tc{\MV}{\dimr}(\tc{\MV}{\dimr})^\top\|_F^2$.
Similarly, let $b$ denote the distance between $\MP$ and $L_{\MX^*}$, $c$ denote that between $\tcd{\MP}$ and $L_{\MV^{(r)}}$, $d$ denote that between $\tcd{\MP}$ and $L_{\MX^*}$.  Then our goal is to show $a-b$ is small. 
Since \[a-b = (a-c ) + (c-d) + (d-b),\] it suffices to bound each of the three terms on the right hand side.

\begin{figure}[!t]
\begin{center}
\begin{tikzpicture}
\usetikzlibrary{calc}

\node (v1) at (-3.4,-0.2) {};
\node (v2) at (2,2) {};
\node (v4) at (1.8,-2.6) {};
\draw  (v1) edge (v2);
\node (v3) at (-3.2,0.4) {};
\draw  (v3) edge (v4);
\node (p) at (1.2,0) {};
\node (pp) at (1.8,-0.4) {};
\fill[blue]  (p) ellipse (0.15 and 0.15);
\fill[blue]  (pp) ellipse (0.15 and 0.15);

\draw (p) edge ($(v1)!(p)!(v2)$);
\draw (p) edge ($(v3)!(p)!(v4)$);
\draw (pp) edge ($(v1)!(pp)!(v2)$);
\draw (pp) edge ($(v3)!(pp)!(v4)$);

\node at (0.6,0) {\textbf{$P$}};
\node at (2.4,-0.4) {\textbf{$\widehat{P}$}};
\node at (1,2.2) {{$L_{\textbf{X}^*}$}};
\node at (0.8,-2.6) {{$L_{\textbf{V}^{(r)}}$}};

\node at (0.8,-0.8) {$a$};
\node at (0.8,0.8) {$b$};
\node at (1.4,-1.2) {$c$};
\node at (1.4,0.8) {$d$};
\end{tikzpicture}
\end{center}
\caption{
Illustration for the proof of Theorem~\ref{thm:lr_app}.
}\label{fig:lowRank}
\end{figure}

First, we note that the optimal principal components for $\tcd{\MP}$ are $\tc{\MV}{\dimr}$, so $c - d \leq 0$.
This is because $\tcd{\MP} = \tilde{\MU} \MY$ where $\tilde{\MU}$ is a block-diagonal matrix with blocks $\MU_1, \dots, \MU_s$,
and thus the right singular vectors of $\MY$ are also the right singular vectors of $\tcd{\MP}$. 

Now, what is left is to bound $(a-c)$ and $(d-b)$. They are differences between the distances from $\MP$ and $\tcd{\MP}$ to some low dimensional subspace, for which  Lemma~\ref{lem:pca_app} is useful.
Formally, we have the following claim.

\begin{claim}\label{cla:diff}
For any orthonormal matrix $\MX$ of size $d\times \dimr$,
\[d^2(\tcd{\MP}, \lspan{\MX}) - d^2(\MP, \lspan{\MX})  =  \Delta(\MX) - c_0 \]
where $\Delta(\MX) := \|\MP \MX\|_F^2 - \|\tcd{\MP} \MX\|_F^2$ and $c_0 := \|\MP\|_F^2 - \|\tcd{\MP}\|_F^2$.
Furthermore, $$0\leq \Delta(\MX) \leq \epsilon d^2(\MP, \lspan{\MX}),~~c_0 \geq 0.$$
\end{claim}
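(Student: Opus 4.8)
The plan is to prove Claim~\ref{cla:diff} by the standard Pythagorean decomposition of the $\ell_2$ fitting cost. For any orthonormal $d\times\dimr$ matrix $\MX$ and any matrix $\MA$, we have $d^2(\MA,\lspan{\MX}) = \|\MA - \MA\MX\MX^\top\|_F^2 = \|\MA\|_F^2 - \|\MA\MX\|_F^2$, since $\MX\MX^\top$ is an orthogonal projection and $\|\MA\MX\MX^\top\|_F^2 = \|\MA\MX\|_F^2$. Applying this identity to both $\MA = \MP$ and $\MA = \tcd{\MP}$ and subtracting gives
\[
d^2(\tcd{\MP},\lspan{\MX}) - d^2(\MP,\lspan{\MX}) = \left(\|\tcd{\MP}\|_F^2 - \|\MP\|_F^2\right) + \left(\|\MP\MX\|_F^2 - \|\tcd{\MP}\MX\|_F^2\right) = \Delta(\MX) - c_0,
\]
which is exactly the claimed equality.

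Next I would establish the two inequalities. Since $\tcd{\MP}$ is the vertical concatenation of the local SVD truncations $\tcd{\MP}_i = \MP_i\tc{\MV_i}{t_1}(\tc{\MV_i}{t_1})^\top$, both $\Delta(\MX)$ and $c_0$ decompose as sums over $i$: $\Delta(\MX) = \sum_i [\|\MP_i\MX\|_F^2 - \|\tcd{\MP}_i\MX\|_F^2]$ and $c_0 = \sum_i [\|\MP_i\|_F^2 - \|\tcd{\MP}_i\|_F^2]$. For $c_0 \geq 0$, each term is nonnegative because $\tcd{\MP}_i$ is an orthogonal projection of $\MP_i$, so $\|\tcd{\MP}_i\|_F \leq \|\MP_i\|_F$. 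For $\Delta(\MX)$, I would apply Lemma~\ref{lem:pca_app} to each $\MA = \MP_i$ with the given choice $t_1 \geq \dimr + \lceil 4\dimr/\epsilon\rceil - 1 \geq \dimr + \lceil \dimr/(\epsilon/4)\rceil - 1$, which yields $0 \leq \|\MP_i\MX\|_F^2 - \|\tcd{\MP}_i\MX\|_F^2 \leq \tfrac{\epsilon}{4} d^2(\MP_i,\lspan{\MX})$ (using the accuracy parameter $\epsilon/4$ in the lemma). Summing over $i$ gives $0 \leq \Delta(\MX) \leq \tfrac{\epsilon}{4}\sum_i d^2(\MP_i,\lspan{\MX}) = \tfrac{\epsilon}{4} d^2(\MP,\lspan{\MX}) \leq \epsilon\, d^2(\MP,\lspan{\MX})$, since $d^2(\MP,\lspan{\MX}) = \sum_i d^2(\MP_i,\lspan{\MX})$ by the additivity of squared distances over the partitioned rows.

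The only subtlety — and the main thing to get right — is the bookkeeping between the $\epsilon$ in the statement of Lemma~\ref{lem:pca_app} and the $\epsilon$ here, i.e., checking that the prescribed $t_1$ is large enough to invoke the lemma with a small enough accuracy parameter so that the final bound is $\epsilon\, d^2(\MP,\lspan{\MX})$ rather than a larger multiple; the factor of $4$ in the choice of $t_1$ is exactly what provides this slack. Everything else is a routine application of the Pythagorean identity and the per-server additivity of the cost. Once Claim~\ref{cla:diff} is in hand, the theorem follows by combining $a - c = -(d^2(\tcd{\MP},\lspan{\MV^{(\dimr)}}) - d^2(\MP,\lspan{\MV^{(\dimr)}}))$... more precisely, writing $a-b = (a-c)+(c-d)+(d-b)$, using $c-d\leq 0$, and bounding $a - c \leq c_0 - \Delta(\tc{\MV}{\dimr})\leq c_0$ and $d - b = \Delta(\MX^*) - c_0 \leq \epsilon\, d^2(\MP,\lspan{\MX^*}) - c_0$, so that $a - b \leq \epsilon\, d^2(\MP,\lspan{\MX^*}) = \epsilon\min_{\MX} d^2(\MP,\lspan{\MX})$.
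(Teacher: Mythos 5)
Your proof is correct and follows essentially the same route as the paper's: the Pythagorean identity $d^2(\MA,\lspan{\MX})=\|\MA\|_F^2-\|\MA\MX\|_F^2$ applied to $\MP$ and $\tcd{\MP}$ gives the equality, and the per-server decomposition plus Lemma~\ref{lem:pca_app} (with accuracy $\epsilon/4$ coming from the factor $4$ in $t_1$) gives the bounds on $\Delta(\MX)$ and $c_0$. The explicit $\epsilon/4$ bookkeeping you include is a detail the paper glosses over but is exactly right.
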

\begin{proof}
By Pythagorean Theorem,
\begin{eqnarray*}
d^2(\tcd{\MP}, \lspan{\MX}) - d^2(\MP, \lspan{\MX}) =  (\|\tcd{\MP}\|_F^2 - \|\tcd{\MP} \MX\|_F^2) - (\|\MP\|_F^2 - \|\MP \MX\|_F^2 ) = \Delta(\MX) - c_0. 
\end{eqnarray*}
The bound on $\Delta(\MX)$ follows from the fact that 
\[\Delta(\MX) = \|\MP \MX\|_F^2 - \|\tcd{\MP} \MX\|_F^2 = \sum_i [\|\MP_i \MX\|_F^2 - \|\tcd{\MP}_i \MX\|_F^2] \]
and apply Lemma~\ref{lem:pca_app} on each term. The bound on $c_0$ follows from Pythagorean Theorem.
\end{proof}

Applying this claim, we have $a-c = c_0 - \Delta(\MV^{(r)})$ and $d-b =  \Delta(\MV^*) - c_0$, and 
\[(a-c) + (d-b) = \Delta(\MV^*) - \Delta(\MV^{(r)}) \leq \epsilon d^2(\MP, \lspan{\MX^*}).\] This completes the proof.
\end{proof}

\noindent\textbf{Note} A refinement of the proof of Lemma~\ref{lem:pca_app} leads to the following data dependent bound. 

\begin{lemma}\label{lem:data}
The statement in Lemma~\ref{lem:pca_app_gen} holds if $t > \tau(\MA,\dimr,\epsilon)$ where 
$$
	\tau(\MA,\dimr,\epsilon) := \argmin_{t} \left\{\sigma_{t}^2(\MA) \leq \frac{\epsilon}{\dimr} \sum_{i>\dimr} \sigma_i^2(\MA)\right\}.
$$
Furthermore, $\tau(\MA,\dimr,\epsilon) = O(\frac{\dimr}{\epsilon})$.
\end{lemma}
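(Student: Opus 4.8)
The plan is to prove Lemma~\ref{lem:data} as a refinement of Lemma~\ref{lem:pca_app_gen}, retracing its proof and replacing the crude bookkeeping on the index $t$ with the data-dependent quantity $\tau(\MA,\dimr,\epsilon)$. First I would recall that the entire argument of Lemma~\ref{lem:pca_app_gen} went through two inequalities: (i) the identity $\|(\MA-\tcd{\MA})\MX\|_F^2 = \|\MA\MX\|_F^2 - \|\tcd{\MA}\MX\|_F^2$, which uses only that $\MU,\MV$ are orthonormal and does not depend on how $t$ was chosen; and (ii) the spectral-norm bound $\|(\MA-\tcd{\MA})\MX\|_F^2 \le \|\MD-\overline{\tc{\MD}{t}}\|_S^2\,\|\MX\|_F^2 \le \dimr\,\sigma_{t+1}^2(\MA)$. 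So the only place where the hypothesis on $t$ is used is the final chain \eqref{eqn:data}, bounding $\dimr\,\sigma_{t+1}^2(\MA)$ by $\epsilon\sum_{i>\dimr}\sigma_i^2(\MA)$.

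Next I would observe that this final bound holds verbatim under the weaker hypothesis $t > \tau(\MA,\dimr,\epsilon)$: by definition of $\tau$ as the argmin, for every $t > \tau$ we have $\sigma_{t+1}^2(\MA) \le \sigma_{t}^2(\MA) \le \frac{\epsilon}{\dimr}\sum_{i>\dimr}\sigma_i^2(\MA)$ (using that singular values are non-increasing), hence $\dimr\,\sigma_{t+1}^2(\MA) \le \epsilon\sum_{i>\dimr}\sigma_i^2(\MA)$, which is exactly the conclusion needed. This establishes the first sentence of the lemma. A small point to check is the edge case where the tail sum $\sum_{i>\dimr}\sigma_i^2(\MA)$ is zero (i.e.\ $\MA$ already has rank $\le\dimr$); there the claim is trivial since $\tcd{\MA}=\MA$ once $t\ge\dimr$, and $\tau$ is then just $\dimr$.

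For the ``Furthermore'' claim, that $\tau(\MA,\dimr,\epsilon) = O(\dimr/\epsilon)$, I would argue by contradiction / counting: suppose $\tau > \dimr + \lceil \dimr/\epsilon\rceil - 1$, i.e.\ $\sigma_t^2(\MA) > \frac{\epsilon}{\dimr}\sum_{i>\dimr}\sigma_i^2(\MA)$ for all $t \le \dimr + \lceil \dimr/\epsilon\rceil - 1$. Summing this strict inequality over the $\lceil\dimr/\epsilon\rceil$ indices $t = \dimr+1,\dots,\dimr+\lceil\dimr/\epsilon\rceil$ gives $\sum_{i=\dimr+1}^{\dimr+\lceil\dimr/\epsilon\rceil}\sigma_i^2(\MA) > \lceil\dimr/\epsilon\rceil\cdot\frac{\epsilon}{\dimr}\sum_{i>\dimr}\sigma_i^2(\MA) \ge \sum_{i>\dimr}\sigma_i^2(\MA)$, which is impossible since the left side is a sub-sum of the right side (when the right side is positive). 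Hence $\tau \le \dimr + \lceil\dimr/\epsilon\rceil - 1 = O(\dimr/\epsilon)$. This also re-proves that the original choice of $t$ in Lemma~\ref{lem:pca_app_gen} always satisfies $t > \tau$, so Lemma~\ref{lem:data} genuinely generalizes it.

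I do not anticipate a serious obstacle here; the content is essentially a repackaging of the existing proof, and the main care needed is just in the boundary bookkeeping (off-by-one in the ceiling, the degenerate rank-deficient case, and making sure the counting argument uses the strict inequality correctly so that the final contradiction is genuine). The one mildly delicate point is confirming that replacing ``for all $t \le \dots$'' by a sum is legitimate — this is fine because each term $\sigma_t^2(\MA)$ appearing in the sum is one of the terms the strict inequality is asserted for, and there are exactly $\lceil\dimr/\epsilon\rceil$ of them.
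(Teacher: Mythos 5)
Your proof is correct and matches the paper's (much terser) argument: the paper simply observes that the hypothesis on $t$ is used only in establishing (\ref{eqn:data}), for which $t > \tau(\MA,\dimr,\epsilon)$ suffices, and that $\tau = O(\dimr/\epsilon)$ ``follows by definition'' --- your counting argument is exactly the natural way to make that last claim rigorous. The only nit is the off-by-one you already flagged yourself: the strict inequality is guaranteed only for $t \le \dimr + \lceil\dimr/\epsilon\rceil - 1$ (that is, $\lceil\dimr/\epsilon\rceil - 1$ indices beyond $\dimr$), so for the contradiction you should assume $\tau > \dimr + \lceil\dimr/\epsilon\rceil$, yielding $\tau \le \dimr + \lceil\dimr/\epsilon\rceil$, which is still $O(\dimr/\epsilon)$.
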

\begin{proof}
Note that the bound on $t$ is only used in proving (\ref{eqn:data}), for which $t > \tau(\MA,\dimr,\epsilon)$ suffices.
$\tau(\MA,\dimr,\epsilon) = O(\frac{\dimr}{\epsilon})$ follows by definition.
\end{proof}

\begin{theorem}\label{thm:lr_app2}
Suppose Algorithm {\sf disPCA} 
takes parameters $t_1 \geq \max_i \tau(\MP_i,\dimr,\epsilon)$ and $t_2=\dimr$,and outputs $\tc{\MV}{\dimr}$.
Then
\begin{eqnarray*}
\|\MP - \MP\tc{\MV}{\dimr}(\tc{\MV}{\dimr})^\top\|_F^2 \leq (1+\epsilon) \min_{\MX} d^2(\MP, \lspan{\MX})
\end{eqnarray*}
where the minimization is over orthonormal matrices $\MX \in \R^{d \times \dimr}$.
The total communication cost is $O(sd \max_i \tau(\MP_i,\dimr,\epsilon))$ words.
\end{theorem}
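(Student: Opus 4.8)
The plan is to mirror the proof of Theorem~\ref{thm:lr_app} essentially verbatim, substituting the worst-case truncation bound of Lemma~\ref{lem:pca_app} by the data-dependent refinement in Lemma~\ref{lem:data}. As before, let $\tcd{\MP}_i := \MP_i \tc{\MV_i}{t_1}(\tc{\MV_i}{t_1})^\top$ be the SVD truncation of the local data $\MP_i$ to its top $t_1$ right singular vectors, and let $\tcd{\MP}$ be the vertical concatenation of the $\tcd{\MP}_i$. Since $\tcd{\MP} = \tilde{\MU}\MY$ where $\tilde{\MU} = \mathrm{diag}(\MU_1,\dots,\MU_s)$ is block-diagonal with orthonormal columns, $\tcd{\MP}$ and $\MY$ have the same right singular subspaces, so with $t_2 = \dimr$ the output $\tc{\MV}{\dimr}$ of Algorithm {\sf disPCA} is exactly the optimal $\dimr$-dimensional subspace for $\tcd{\MP}$. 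Writing $a := d^2(\MP, \lspan{\tc{\MV}{\dimr}})$, $b := \min_{\MX} d^2(\MP, \lspan{\MX})$ attained at $\MX^*$, $c := d^2(\tcd{\MP}, \lspan{\tc{\MV}{\dimr}})$ and $d := d^2(\tcd{\MP}, \lspan{\MX^*})$, we get $c - d \leq 0$ by optimality of $\tc{\MV}{\dimr}$ for $\tcd{\MP}$ just as in Theorem~\ref{thm:lr_app}, and since $a - b = (a-c) + (c-d) + (d-b)$ it remains to bound $(a-c) + (d-b)$.

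For this I would establish a data-dependent version of Claim~\ref{cla:diff}: for any orthonormal $\MX \in \R^{d \times \dimr}$,
$$d^2(\tcd{\MP}, \lspan{\MX}) - d^2(\MP, \lspan{\MX}) = \Delta(\MX) - c_0,$$
with $\Delta(\MX) := \|\MP \MX\|_F^2 - \|\tcd{\MP} \MX\|_F^2$ and $c_0 := \|\MP\|_F^2 - \|\tcd{\MP}\|_F^2 \geq 0$; this identity is the Pythagorean argument already used and does not involve $t_1$. The new ingredient is the bound on $\Delta(\MX)$: write $\Delta(\MX) = \sum_{i=1}^s \left( \|\MP_i \MX\|_F^2 - \|\tcd{\MP}_i \MX\|_F^2 \right)$ and apply Lemma~\ref{lem:data} to each summand --- valid because $\|\MX\|_F^2 \leq \dimr$ and $t_1 \geq \max_i \tau(\MP_i, \dimr, \epsilon) \geq \tau(\MP_i, \dimr, \epsilon)$ --- to get $\|\MP_i \MX\|_F^2 - \|\tcd{\MP}_i \MX\|_F^2 \leq \epsilon \sum_{j > \dimr} \sigma_j^2(\MP_i)$. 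Then by the Eckart--Young theorem applied to each $\MP_i$,
$$\sum_{j > \dimr} \sigma_j^2(\MP_i) = \min_{\text{orthonormal } \MY} \|\MP_i - \MP_i \MY \MY^\top\|_F^2 \leq \|\MP_i - \MP_i \MX \MX^\top\|_F^2 = d^2(\MP_i, \lspan{\MX}),$$
and summing over $i$ (using that $d^2(\cdot, \lspan{\MX})$ is additive over the row blocks, $d^2(\MP, \lspan{\MX}) = \sum_i d^2(\MP_i, \lspan{\MX})$) yields $0 \leq \Delta(\MX) \leq \epsilon\, d^2(\MP, \lspan{\MX})$.

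With this data-dependent claim the endgame is identical to Theorem~\ref{thm:lr_app}: $a - c = c_0 - \Delta(\tc{\MV}{\dimr})$ and $d - b = \Delta(\MX^*) - c_0$, hence $(a-c) + (d-b) = \Delta(\MX^*) - \Delta(\tc{\MV}{\dimr}) \leq \Delta(\MX^*) \leq \epsilon\, d^2(\MP, \lspan{\MX^*}) = \epsilon b$, and adding $c - d \leq 0$ gives $a \leq (1+\epsilon) b$. For the communication, each node sends $\tc{\MD_i}{t_1}$ and $\tc{\MV_i}{t_1}$, i.e.\ $O(d t_1)$ words, and the coordinator returns the $d \times \dimr$ matrix $\tc{\MV}{\dimr}$, for a total of $O(s d\, t_1) = O(s d \max_i \tau(\MP_i, \dimr, \epsilon))$ words.

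I do not expect a real obstacle; the point that needs care is the composition of the data-dependent tail bound, namely that each local tail $\sum_{j>\dimr}\sigma_j^2(\MP_i)$ is charged against $d^2(\MP_i, \lspan{\MX})$ for the \emph{same} arbitrary $\MX$ (in particular $\MX = \MX^*$), so that the charges sum to $\epsilon\, d^2(\MP, \lspan{\MX^*})$ and not to something larger. Minor bookkeeping items are the off-by-one between the ``$t > \tau$'' hypothesis of Lemma~\ref{lem:data} and the ``$t_1 \geq \max_i \tau$'' in the statement, and the degenerate case $t_1 \geq \mathrm{rank}(\MP_i)$, where $\tcd{\MP}_i = \MP_i$ and the corresponding term in $\Delta(\MX)$ vanishes.
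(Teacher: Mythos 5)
Your proposal is correct and follows exactly the route the paper intends: rerun the proof of Theorem~\ref{thm:lr_app} (the $a,b,c,d$ decomposition, Claim~\ref{cla:diff}, and $c-d\leq 0$ from the optimality of $\tc{\MV}{\dimr}$ for $\tcd{\MP}$), with the per-block bound now supplied by Lemma~\ref{lem:data} in place of Lemma~\ref{lem:pca_app}, and the local tails $\sum_{j>\dimr}\sigma_j^2(\MP_i)$ charged to $d^2(\MP_i,\lspan{\MX})$ via Eckart--Young before summing over blocks. The off-by-one between ``$t>\tau$'' and ``$t_1\geq\max_i\tau$'' that you flag is a harmless discrepancy in the paper's own statements, not a gap in your argument.
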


$\tau(\MP_i,\dimr,\epsilon)$ is typically much less than $O(\dimr/\epsilon)$ in practice. This provides an explanation for the fact that $t_1$ much smaller than $O(\dimr/\epsilon)$ can still lead to good solution for many practical instances. Similar data dependent bounds can be derived for the other theorems in our paper.

\section{Guarantees for Distributed $\ell_2$-Error Fitting}

\subsection{Proof of Lemma~\ref{thm:DisPCA_P_l2}}\label{app:disPCA_closeProj}

Recall that $\tilde{\MP}_i$ denotes the projection of the original data $\MP_i$ to $\tc{\MV}{t}$, and $\tilde{\MP}$ denotes their concatenation.
We further introduce some intermediate variables for our analysis.
Imagine we perform two projections: first project $\MP_i$ to $\tcd{\MP}_i = \MP_i \tc{\MV_i}{t} (\tc{\MV_i}{t})^\top$,
then project $\tcd{\MP}_i$ to $\overline{\MP}_i= \tcd{\MP}_i \tc{\MV}{t} (\tc{\MV}{t})^\top$ where $t=t_1 = t_2$.
Let $\tcd{\MP}$ denote the vertical concatenation of $\tcd{\MP}_i$ and let $\overline{\MP}$ denote the vertical concatenation of $\overline{\MP}_i$, i.e.\
\begin{eqnarray*}
\tcd{\MP} = \left[
                   \begin{array}{c}
                     \tcd{\MP}_1 \\
                     \vdots \\
                     \tcd{\MP}_s \\
                   \end{array}
                 \right]
 \textrm{ \ \ and \ \ }
\overline{\MP} = \left[
                   \begin{array}{c}
                     \overline{\MP}_1 \\
                     \vdots \\
                     \overline{\MP}_s \\
                   \end{array}
                 \right]
\end{eqnarray*}

\noindent
\textbf{Lemma~\ref{thm:DisPCA_P_l2}.}
{\it
Let $t_1=t_2 \geq k + \lceil 8k/\epsilon \rceil -1$ in Algorithm {\sf disPCA}  
for $k\in \N_+$ and $\epsilon \in (0,1)$.
Then for any $d \times k$ matrix $\MX$ with orthonormal columns,
\begin{eqnarray}
0 \leq & \|\MP \MX - \gpca{\MP} \MX\|_F^2  & \leq \epsilon d^2(\MP, \lspan{\MX}),\label{eqn:dispca_eq1}\\
0 \leq & \|\MP \MX\|_F^2 - \|\gpca{\MP} \MX\|_F^2 & \leq \epsilon d^2(\MP, \lspan{\MX}) \label{eqn:dispca_eq2}.
\end{eqnarray}
}

\begin{proof}
Before going to the proof, we note that unlike in Lemma~\ref{lem:pca_app_gen},  $\|\MP \MX - \gpca{\MP} \MX\|_F^2$ may not equal $ \|\MP \MX\|_F^2 - \|\gpca{\MP} \MX\|_F^2$ since multiple SVD are applied.

For the first statement (\ref{eqn:dispca_eq1}), we have
\begin{eqnarray}
\|\MP \MX - \gpca{\MP} \MX\|_F^2 & \leq & 2 \|\MP \MX-\tcd{\MP} \MX\|_F^2 \label{eqn:dispca_1term1}\\
&+ & 2\|\tcd{\MP} \MX - \overline{\MP} \MX\|_F^2 \label{eqn:dispca_1term2}\\
& +&  2\|\overline{\MP} \MX - \gpca{\MP} \MX\|_F^2. \label{eqn:dispca_1term3}
\end{eqnarray}
For (\ref{eqn:dispca_1term1}), we have by Lemma~\ref{lem:pca_app_gen}
\begin{eqnarray}
\|\MP \MX-\tcd{\MP} \MX\|_F^2 = \sum_{i=1}^s \|\MP_i \MX-\tcd{\MP}_i \MX\|_F^2  \leq \sum_{i=1}^s \frac{\epsilon}{4} d^2(\MP_i, \lspan{\MX}) = \frac{\epsilon}{8} d^2(\MP, \lspan{\MX}). \label{eqn:dispca_1bound1}
\end{eqnarray}
Similarly, for (\ref{eqn:dispca_1term2}) we have by Lemma~\ref{lem:pca_app_gen}
\begin{eqnarray}
\|\tcd{\MP} \MX-\overline{\MP} \MX\|_F^2 \leq \frac{\epsilon}{8} d^2(\tcd{\MP}, \lspan{\MX}). \label{eqn:dispca_1bound2}
\end{eqnarray}
To bound (\ref{eqn:dispca_1term3}), let $\MY=\tc{\MV}{t} (\tc{\MV}{t})^\top \MX$.
Then by definition, $\overline{\MP}_i \MX = \tcd{\MP}_i \MY$ and $\gpca{\MP}_i\MX = \MP_i \MY$.
By Lemma~\ref{lem:pca_app_gen}, we have
\begin{eqnarray}
\|\overline{\MP} \MX-\gpca{\MP} \MX\|_F^2 & = & \sum_{i=1}^s \|\tcd{\MP}_i \MY-\MP_i \MY\|_F^2 \\
& \leq & \sum_{i=1}^s \frac{\epsilon}{8} \sum_{i=\dimr+1}^s \sigma^2_i(\MP_i) \leq \frac{\epsilon}{8} \sum_{i=1}^s d^2(\MP_i,\lspan{\MX}) = \frac{\epsilon}{8} d^2(\MP,\lspan{\MX}). \label{eqn:dispca_1bound3}
\end{eqnarray}
Combining (\ref{eqn:dispca_1bound1})(\ref{eqn:dispca_1bound2}) and (\ref{eqn:dispca_1bound3}) leads to
\begin{eqnarray}
\|\MP \MX-\gpca{\MP} \MX\|_F^2 \leq  \frac{\epsilon}{2} d^2(\MP, \lspan{\MX}) + \frac{\epsilon}{4} d^2(\tcd{\MP}, \lspan{\MX}). \label{eqn:dispca_1bound123}
\end{eqnarray}

We now only need to bound $d^2(\tcd{\MP}, \lspan{\MX})$ is similar to $d^2(\MP, \lspan{\MX})$, which is done in Claim~\ref{cla:diff}. The first statement then follows.

For the second statement (\ref{eqn:dispca_eq2}), we have a similar argument.
\begin{eqnarray}
\|\MP \MX\|_F^2 - \|\gpca{\MP} \MX\|_F^2 & =& \|\MP \MX\|_F^2 - \|\tcd{\MP} \MX\|_F^2 \label{eqn:dispca_term1}\\
&+ & \|\tcd{\MP} \MX\|_F^2 - \|\overline{\MP} \MX\|_F^2 \label{eqn:dispca_term2}\\
& +&  \|\overline{\MP} \MX\|_F^2 - \|\gpca{\MP} \MX\|_F^2. \label{eqn:dispca_term3}
\end{eqnarray}
For (\ref{eqn:dispca_term1}), we have by Lemma~\ref{lem:pca_app_gen}
\begin{eqnarray}
\|\MP \MX\|_F^2 - \|\tcd{\MP} \MX\|_F^2 = \sum_{i=1}^s \left[\|\MP_i \MX\|_F^2 - \|\tcd{\MP}_i \MX\|_F^2 \right] \leq \sum_{i=1}^s \frac{\epsilon}{4} d^2(\MP_i, \lspan{\MX}) = \frac{\epsilon}{4} d^2(\MP, \lspan{\MX}). \label{eqn:dispca_bound1}
\end{eqnarray}
Similarly, for (\ref{eqn:dispca_term2}) we have by Lemma~\ref{lem:pca_app_gen}
\begin{eqnarray}
\|\tcd{\MP} \MX\|_F^2 - \|\overline{\MP} \MX\|_F^2 \leq \frac{\epsilon}{4} d^2(\tcd{\MP}, \lspan{\MX}). \label{eqn:dispca_bound2}
\end{eqnarray}

By Lemma~\ref{lem:pca_app_gen}, we have
\begin{eqnarray}
\|\overline{\MP} \MX\|_F^2 - \|\gpca{\MP} \MX\|_F^2 & = & \sum_{i=1}^s \left[\|\tcd{\MP}_i \MY\|_F^2 - \|\MP_i \MY\|_F^2\right] \nonumber\\
& \leq & \sum_{i=1}^s \frac{\epsilon}{4} \sum_{i=\dimr+1}^s \sigma^2_i(\MP_i) \leq \frac{\epsilon}{4} \sum_{i=1}^s d^2(\MP_i,\lspan{\MX}) = \frac{\epsilon}{4} d^2(\MP,\lspan{\MX}). \label{eqn:dispca_bound3}
\end{eqnarray}

Combining (\ref{eqn:dispca_bound1})(\ref{eqn:dispca_bound2}) and (\ref{eqn:dispca_bound3}) leads to
\begin{eqnarray}
\|\MP \MX\|_F^2 - \|\gpca{\MP} \MX\|_F^2 \leq  \frac{\epsilon}{2} d^2(\MP, \lspan{\MX}) + \frac{\epsilon}{4} d^2(\tcd{\MP}, \lspan{\MX}). \label{eqn:dispca_bound123}
\end{eqnarray}
The second statement then follows from (\ref{eqn:dispca_bound123}) and Claim~\ref{cla:diff}.
\end{proof}

%
%
%

\subsection{Proof of Theorem~\ref{thm:coreset_gen}}\label{app:disPCA_coreset}

The following weak triangle inequality is useful for our analysis.
\begin{fact}\label{fac:weakTri}
For any $a,b\in \R$ and $\epsilon \in (0,1)$,
$|a^2 - b^2| \leq \frac{3(a-b)^2}{\epsilon} + 2\epsilon a^2$.
\end{fact}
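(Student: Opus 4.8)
\textbf{Proof proposal for Fact~\ref{fac:weakTri}.}
The plan is to prove the inequality by factoring the difference of squares and controlling the sum factor by the "error'' $a-b$, then splitting the remaining cross term with a weighted arithmetic--geometric mean (Young) inequality. Concretely, first I would write $|a^2 - b^2| = |a-b|\,|a+b|$ and bound the second factor via the triangle inequality in the form $|a+b| = |2a - (a-b)| \leq 2|a| + |a-b|$, which yields $|a^2 - b^2| \leq 2|a|\,|a-b| + (a-b)^2$.

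Next I would apply the weighted AM--GM inequality $2uv \leq \epsilon u^2 + v^2/\epsilon$ (valid for any $\epsilon > 0$) with $u = |a|$ and $v = |a-b|$, giving $2|a|\,|a-b| \leq \epsilon a^2 + (a-b)^2/\epsilon$. Combining the two displays produces $|a^2 - b^2| \leq \epsilon a^2 + (a-b)^2/\epsilon + (a-b)^2$, and since $\epsilon \in (0,1)$ we have $(a-b)^2 \leq (a-b)^2/\epsilon$, so in fact $|a^2 - b^2| \leq \epsilon a^2 + 2(a-b)^2/\epsilon$. This is already stronger than the claimed bound $\frac{3(a-b)^2}{\epsilon} + 2\epsilon a^2$, since $2/\epsilon \leq 3/\epsilon$ and $\epsilon \leq 2\epsilon$; thus the constants stated in the fact leave room to spare, and one could equally well use a looser split of the cross term.

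I do not anticipate any real obstacle: the argument is three lines of elementary algebra, and the only step deserving explicit mention is the monotonicity fact $\epsilon < 1 \Rightarrow 1 \leq 1/\epsilon$ used to absorb the leftover $(a-b)^2$ into the $(a-b)^2/\epsilon$ term. The choice of the weight in Young's inequality is essentially free because the target constants are generous; any fixed positive weight times $\epsilon$ would also close the estimate.
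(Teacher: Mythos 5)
Your proof is correct and follows essentially the same route as the paper: both factor $|a^2-b^2|=|a-b|\,|a+b|$, bound $|a+b|\leq 2|a|+|a-b|$, and then split the cross term $2|a|\,|a-b|$ against $\epsilon a^2$ and $(a-b)^2/\epsilon$. The only (cosmetic) difference is that you split the cross term with weighted AM--GM while the paper uses a two-case argument ($|a|\leq |a-b|/\epsilon$ or $|a-b|\leq \epsilon|a|$), which costs a factor of $2$ and is why the paper's constants are slightly worse than the ones you obtain.
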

\begin{proof}
Either $|a| \leq \frac{|a-b|}{\epsilon}$ or $|a-b| \leq \epsilon |a|$, so we have
$|a| |a-b| \leq \frac{(a-b)^2}{\epsilon} + \epsilon a^2$.
This leads to
\begin{eqnarray*}
|a^2-b^2|  =  |a-b| |a+b| \leq |a-b| (|2a| + |b-a|) = 2 |a| |a-b|  + (a-b)^2 \leq \frac{2(a-b)^2}{\epsilon} + 2\epsilon a^2 + (a-b)^2
\end{eqnarray*}
which completes the proof.
\end{proof}

We first prove the theorem for the special case of $k$-means clustering, and the same argument leads to the guarantee for general $l_2$-error fitting problems. Note  that because we use the weak triangle inequality, we lose a factor of $1/\epsilon$. Thus, we require $t_1=t_2 = O(k/\epsilon^2)$, instead of $O(k/\epsilon)$ as in Lemma~\ref{thm:DisPCA_P_l2}.

\begin{theorem}~\label{thm:coreset}
Let $t_1=t_2 \geq k + \lceil 4k/\epsilon^2 \rceil -1$ in Algorithm {\sf disPCA}.
Then there exists a constant $c_0 \geq 0$, such that
for any set of $k$ points $\x$,
$$(1-\epsilon) d^2(\MP,\x) \leq d^2(\gpca{P}, \x) + c_0 \leq (1+\epsilon)
d^2(\MP,\x).$$
\end{theorem}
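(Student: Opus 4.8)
The plan is to fix an arbitrary set $\x=\{c_1,\dots,c_\nc\}$ of $\nc$ centers and to choose a $d\times \nc$ matrix $\MX$ with orthonormal columns whose span $\lspan{\MX}$ contains all of $c_1,\dots,c_\nc$; such an $\MX$ exists because $\nc$ points span a subspace of dimension at most $\nc$ (pad with extra orthonormal columns if the span is lower-dimensional, which only enlarges $\lspan{\MX}$). Everything then reduces to the close-projection property of Lemma~\ref{thm:DisPCA_P_l2} applied to this particular $\MX$. The first move is a Pythagorean decomposition: since every center lies in $\lspan{\MX}$, for any point $p$ one has $\min_{c\in\x}\|p-c\|_2^2=\|p-\proj_\MX(p)\|_2^2+\min_{c\in\x}\|\proj_\MX(p)-c\|_2^2$, and summing over the rows of $\MP$ and of $\gpca{\MP}$ gives
\[
d^2(\MP,\x)=\big(\|\MP\|_F^2-\|\MP\MX\|_F^2\big)+d^2(\MP\MX\MX^\top,\x),\qquad d^2(\gpca{\MP},\x)=\big(\|\gpca{\MP}\|_F^2-\|\gpca{\MP}\MX\|_F^2\big)+d^2(\gpca{\MP}\MX\MX^\top,\x).
\]
Setting $c_0:=\|\MP\|_F^2-\|\gpca{\MP}\|_F^2$, which is independent of $\x$ and nonnegative since $\gpca{\MP}=\MP\,\tc{\MV}{t_2}(\tc{\MV}{t_2})^\top$ is $\MP$ composed with an orthogonal projector, subtracting the two identities reduces the theorem to proving
\[
\Big|\big[d^2(\MP\MX\MX^\top,\x)-d^2(\gpca{\MP}\MX\MX^\top,\x)\big]-\big[\|\MP\MX\|_F^2-\|\gpca{\MP}\MX\|_F^2\big]\Big|\ \le\ \epsilon\,d^2(\MP,\x).
\]

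The second bracket is the easy term. Invoking Lemma~\ref{thm:DisPCA_P_l2} with a parameter $\epsilon_0$ (this is exactly what forces $t_1=t_2=O(\nc/\epsilon_0)$) gives $0\le\|\MP\MX\|_F^2-\|\gpca{\MP}\MX\|_F^2\le\epsilon_0\,d^2(\MP,\lspan{\MX})\le\epsilon_0\,d^2(\MP,\x)$, where the last inequality holds because projecting onto $\lspan{\MX}$ is at least as good as using any center inside $\lspan{\MX}$; from the same observation $d^2(\MP\MX\MX^\top,\x)\le d^2(\MP,\x)$ via the decomposition above. Applying the lemma once more also yields $\|(\MP-\gpca{\MP})\MX\|_F^2\le\epsilon_0\,d^2(\MP,\lspan{\MX})\le\epsilon_0\,d^2(\MP,\x)$, which is the only other distributed-PCA fact the argument uses.

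The first bracket is the main obstacle, since the $\nc$-means cost is a sum of minima over centers and hence not a linear function of the points, so the Frobenius bound on $\|(\MP-\gpca{\MP})\MX\|_F$ cannot be used directly. I would argue pointwise. For the $j$-th point write $A_j=\min_{c\in\x}\|\proj_\MX(p_j)-c\|_2$ and $B_j=\min_{c\in\x}\|\proj_\MX(\widetilde p_j)-c\|_2$, where $\widetilde p_j$ is the $j$-th row of $\gpca{\MP}$. For every $c$ the triangle inequality gives $\big|\,\|\proj_\MX(p_j)-c\|_2-\|\proj_\MX(\widetilde p_j)-c\|_2\,\big|\le\|\proj_\MX(p_j)-\proj_\MX(\widetilde p_j)\|_2=\|(p_j-\widetilde p_j)\MX\|_2=:\delta_j$, so the two minima satisfy $|A_j-B_j|\le\delta_j$. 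Now apply the weak triangle inequality (Fact~\ref{fac:weakTri}) with base $A_j$: $|A_j^2-B_j^2|\le 3\delta_j^2/\epsilon+2\epsilon A_j^2$. Summing over $j$, using $\sum_j\delta_j^2=\|(\MP-\gpca{\MP})\MX\|_F^2$ and $\sum_j A_j^2=d^2(\MP\MX\MX^\top,\x)$, gives
\[
\big|d^2(\MP\MX\MX^\top,\x)-d^2(\gpca{\MP}\MX\MX^\top,\x)\big|\ \le\ \frac{3}{\epsilon}\,\|(\MP-\gpca{\MP})\MX\|_F^2+2\epsilon\,d^2(\MP\MX\MX^\top,\x)\ \le\ \Big(\frac{3\epsilon_0}{\epsilon}+2\epsilon\Big)d^2(\MP,\x).
\]
It is precisely this weak-triangle step that loses the $1/\epsilon$ factor: to make $3\epsilon_0/\epsilon$ of order $\epsilon$ one must take $\epsilon_0=\Theta(\epsilon^2)$, i.e.\ $t_1=t_2=O(\nc/\epsilon^2)$ instead of $O(\nc/\epsilon)$.

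Combining the two brackets, the left side of the reduced inequality is at most $(\epsilon_0+3\epsilon_0/\epsilon+2\epsilon)\,d^2(\MP,\x)=O(\epsilon)\,d^2(\MP,\x)$ once $\epsilon_0=\Theta(\epsilon^2)$, and rescaling $\epsilon$ by an absolute constant (absorbed into the constant in $t_1=t_2=O(\nc/\epsilon^2)$) yields $(1-\epsilon)d^2(\MP,\x)\le d^2(\gpca{\MP},\x)+c_0\le(1+\epsilon)d^2(\MP,\x)$, as claimed. The statement about $\alpha$-approximate solutions follows mechanically, as in Remark 1, by applying this sandwich to both $\x$ and the optimum $\x^\star$. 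Finally, the identical argument proves Theorem~\ref{thm:coreset_gen} for general $\dimr$-Subspace $\nc$-Clustering: the only changes are that $\lspan{\MX}$ must now contain the $\nc$ center subspaces (so $\MX$ has at most $\dimr\nc$ orthonormal columns, forcing $t_1=t_2=O(\dimr\nc/\epsilon^2)$), the Pythagorean identity is used with $d^2(\cdot,L_j)$ in place of $\|\cdot-c_j\|_2^2$, and the pointwise step uses that $x\mapsto d(x,L_j)$ is $1$-Lipschitz in place of the triangle inequality for $\|\cdot\|_2$.
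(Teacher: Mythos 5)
Your proof is correct and follows essentially the same route as the paper's: the same Pythagorean decomposition with $c_0=\|\MP\|_F^2-\|\gpca{\MP}\|_F^2$, the same reduction of the linear part to $\|\MP\MX\|_F^2-\|\gpca{\MP}\MX\|_F^2$ via Lemma~\ref{thm:DisPCA_P_l2}, and the same pointwise application of the weak triangle inequality (Fact~\ref{fac:weakTri}) to the projected $k$-means costs, which is exactly where the paper also pays the extra $1/\epsilon$ forcing $t_1=t_2=O(k/\epsilon^2)$. Your write-up is if anything slightly more explicit than the paper's about the $1$-Lipschitzness of the distance-to-$\x$ function and about the extension to $\dimr$-Subspace $k$-Clustering.
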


\begin{proof}
The proof follows that in~\citep{dan2013tiny}, with slight modification for the distributed setting.

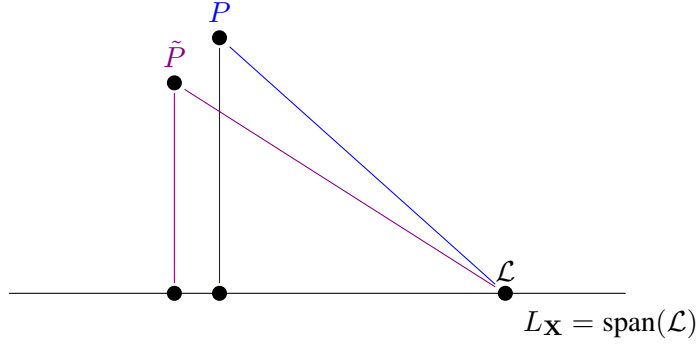
\begin{figure}[!t]
\begin{center}
\begin{tikzpicture}

\node (ph) at (-1.2,1.6) {};
\node (p) at (-0.6,2.2) {};
\node (phx) at (-1.2,-1.2) {};
\node (px) at (-0.6,-1.2) {};
\node (c) at (3.2,-1.2) {};

\coordinate (v1) at (-3.4,-1.2);
\coordinate (v2) at (4.8,-1.2);

\fill (ph) circle (.1) node [above=2]{\textcolor{violet}{$\tilde{P}$} };
\fill (p) circle (.1) node [above=2]{\textcolor{blue}{$P$}};
\fill (c) circle (.1);
\draw[violet]  (ph) edge (c);
\draw[blue]  (p) edge (c);
\node at (3.2,-1.2) [above=1]{$\mathcal{L}$}; 
\draw  (v1) edge (v2);
\node at (4.6,-1.6) {$L_\mathbf{X}=\text{span}(\mathcal{L})$};

\fill (px) circle (.1);
\draw[blue]  (p) edge (px);

\fill (phx) circle (.1);
\draw[violet]  (ph) edge (phx);
 
\end{tikzpicture}
\end{center}
\caption{
Illustration for the proof of Theorem~\ref{thm:coreset}.
}\label{fig:disPCAMotivate2}
\end{figure}

Let $\MX \in \R^{d \times k}$ has orthonormal columns that span $\x$;
see Figure~\ref{fig:disPCAMotivate2} for an illustration.
Then the costs of $\MP$ and $\gpca{P}$ can be decomposed into two parts: one part is from $\MP$ (or $\gpca{P}$)
to its projection on $L_\MX$, and the other part is from the projection to the centers.
Then we can compare the two parts separately.

Let $\gpcap{p}_i$ be the point in $\gpca{P}$ corresponding to $p_i$ in $\MP$.
Let $c_0 = \|\MP\|_F^2 - \|\gpca{P}\|_F^2$.
Then by Pythagorean theorem we have
\begin{eqnarray*}
|d^2(\MP,\x) - d^2(\gpca{P}, \x) -  c_0| \leq \biggl |d^2(\MP,L_{\MX}) - d^2(\gpca{P}, \lspan{\MX})  - c_0 \biggr|  +\biggl|\sum_{i =1}^{|\MP|} \bigl[d(\proj_\MX(p_i), \x)^2 - d(\proj_\MX(\gpcap{p}_i), \x)^2 \bigr]\biggr|.
\end{eqnarray*}

For the first part, we have by Pythagorean theorem
\begin{eqnarray}
d^2(\MP,L_{\MX}) - d^2(\gpca{P}, \lspan{\MX}) - c_0 = (\|\MP\|_F^2 -\|\MP \MX\|_F^2)- (\|\gpca{P}\|_F^2 - \|\gpca{P}\MX\|_F^2) - c_0 =  \|\gpca{P}\MX\|_F^2  - \|\MP \MX\|_F^2.\label{eqn:coreset_bound1}
\end{eqnarray}

For the second part, by Fact~\ref{fac:weakTri} we have
\begin{eqnarray}
\sum_{i =1}^{|\MP|} \left|d(\proj_\MX(p_i), \x)^2 - d(\proj_\MX(\gpcap{p}_i), \x)^2\right|  & \leq & \sum_{i=1}^{|\MP|} \left[\frac{12 d(\proj_\MX(p_i), \proj_\MX(\gpcap{p}_i))^2}{\epsilon} + \frac{\epsilon}{2}d(\proj_\MX(p_i), \x)^2\right]\nonumber\\
& = & \frac{12}{\epsilon}\|(\MP-\gpca{P})\MX\|_F^2 + \frac{\epsilon}{2} \sum_{i=1}^{|\MP|} d(\proj_\MX(p_i), \x)^2 \nonumber\\
& \leq & \frac{12}{\epsilon}\|(\MP-\gpca{P})\MX\|_F^2 + \frac{\epsilon}{2} \sum_{i=1}^{|\MP|} d(p_i, \x)^2.\label{eqn:coreset_bound2}
\end{eqnarray}
We first note that $d^2(\MP, \lspan{\MX}) \leq d^2(\MP,\x)$. 
For the other terms in (\ref{eqn:coreset_bound1})(\ref{eqn:coreset_bound2}),
we need to use  Lemma~\ref{thm:DisPCA_P_l2} with accuracy $\epsilon^2$ (instead of $\epsilon$).
This then leads to the theorem.
\end{proof}

The general statement for $\ell_2$-error geometric fitting problems follows from the same argument.

\noindent
\textbf{Theorem~\ref{thm:coreset_gen}.}
{\it
Let $t_1=t_2 = O(\dimr k/ \epsilon^2)$ in Algorithm {\sf disPCA} 
for $\epsilon \in (0, 1/3)$. Then there exists a constant $c_0 \geq 0$ such that for any set of $k$ centers $\x$ in $\dimr$-Subspace $k$-Clustering,
$$
	(1-\epsilon) d^2(\MP,\x) \leq d^2(\gpca{\MP}, \x) + c_0 \leq (1+\epsilon) d^2(\MP,\x).
$$
}

\section{Fast Distributed PCA}

\subsection{Proofs for Subspace Embedding}

\begin{algorithm}[t]
\caption{Fast Sparse Subspace Embedding~\citep{clarkson2013low}}
\label{alg:sparseEmbedding}
\begin{algorithmic}[1]
\REQUIRE{parameters $n,\ell\in \N_+$.}
\STATE{Let $h: [n] \mapsto [\ell]$ be a random map, so that for each $i \in [n], h(i)=j$ for $j\in [\ell]$ with probability $1/\ell$.}
\STATE{Let $\MPhi$ be an $\ell\times n$ binary matrix with $\MPhi_{h(i),i}=1$, and all remaining entries $0$.}
\STATE{Let $\MD$ be an $n\times n$ diagonal matrix, with each diagonal entry independently chosen as $+1$ or $-1$ with equal probability.}
\ENSURE{$\MH=\MPhi \MD$.}
\end{algorithmic}
\end{algorithm}

The construction of the embedding matrix $\MH$ is presented in Algorithm~\ref{alg:sparseEmbedding}.
Note that the embedding matrix $\MH$ does not need to be built explicitly;
we can compute the embedding $\MH\MA$ for an given matrix $\MA$ in a direct and faster way.
Algorithm~\ref{alg:sparseEmbedding} has the following guarantee. 

\begin{theorem}\citep{clarkson2013low,meng2013low,nelson2012osnap}\label{thm:sparseEmbedding}
Suppose $n>d$ and $\ell = O(\frac{d^2}{\epsilon^2})$.
With probability at least $99/100$,
$\|\MH \MA y\|_2 = (1 \pm \epsilon) \|\MA y\|_2$ for all vectors $y \in \R^d$.
Moreover, $\MH\MA$ can be computed in time $O(\nnz(\MA))$ where $\nnz(\MA)$ is the number of non-zero entries in $\MA$.
\end{theorem}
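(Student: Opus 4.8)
The plan is to reduce the ``for all $y$'' guarantee to an operator-norm bound on a single $d \times d$ random matrix, control that matrix by a second-moment computation, and handle the running time directly. Let $\MU \in \R^{n \times d}$ be a matrix whose columns form an orthonormal basis of the column space of $\MA$. Since $\{\MA y : y \in \R^d\}$ is exactly this column space $\lspan{\MU}$, and every element of $\lspan{\MU}$ is $\MU z$ with $\|\MU z\|_2 = \|z\|_2$, the conclusion $\|\MH \MA y\|_2 = (1 \pm \epsilon)\|\MA y\|_2$ for all $y$ is equivalent to $\|\MH \MU z\|_2 = (1 \pm \epsilon)\|z\|_2$ for all $z \in \R^d$. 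Now $\|\MH\MU z\|_2^2 = z^\top(\MU^\top\MH^\top\MH\MU)z$, so it suffices to show
\[
  \|\MU^\top \MH^\top \MH \MU - \MI\|_2 \le \epsilon
\]
with probability at least $99/100$ when $\ell = \Omega(d^2/\epsilon^2)$: this forces every eigenvalue of $\MU^\top\MH^\top\MH\MU$ into $[1-\epsilon,1+\epsilon]$, hence $\|\MH\MU z\|_2 = (1\pm\epsilon)\|z\|_2$ for all $z$.

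For the operator-norm bound I would pass to the Frobenius norm, $\|\MU^\top\MH^\top\MH\MU - \MI\|_2 \le \|\MU^\top\MH^\top\MH\MU - \MI\|_F$, and establish $\mathbb{E}\,\|\MU^\top\MH^\top\MH\MU - \MI\|_F^2 \le \epsilon^2/100$, after which Markov's inequality finishes. This is the ``approximate matrix multiplication in Frobenius norm'' property of the sparse sketch $\MH = \MPhi\MD$ of Algorithm~\ref{alg:sparseEmbedding}, applied with both factors equal to $\MU$: for any fixed $\MB$ with $n$ rows, $\mathbb{E}\,\|\MB^\top\MH^\top\MH\MB - \MB^\top\MB\|_F^2 \le \frac{c}{\ell}\|\MB\|_F^4$ for an absolute constant $c$, and since $\|\MU\|_F^2 = d$ the right-hand side is $cd^2/\ell \le \epsilon^2/100$ once $\ell = \Omega(d^2/\epsilon^2)$. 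To prove the moment bound, use $\MH_{ri} = \MD_{ii}[h(i)=r]$, so that $(\MH^\top\MH)_{ij} = \MD_{ii}\MD_{jj}[h(i)=h(j)]$; the diagonal ($i=j$) terms reproduce $\MB^\top\MB$ exactly, leaving the mean-zero quantity $(\MB^\top\MH^\top\MH\MB - \MB^\top\MB)_{ab} = \sum_{i\ne j}\MB_{ia}\MD_{ii}\MD_{jj}[h(i)=h(j)]\MB_{jb}$. Squaring, summing over $a,b$, and taking expectations, the only surviving terms are those in which the sign products pair up, each carrying a factor $\Pr[h(i)=h(j)] = 1/\ell$; collecting them gives the $O(\|\MB\|_F^4/\ell)$ bound. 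Only second moments of the $\pm1$ entries and pairwise independence of $h$ are used, but I would just take $h$ and $\MD$ fully random for cleanliness.

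The running time is routine: $\MH$ is never formed. To build $\MH\MA$, initialize an $\ell\times d$ zero matrix and, for each nonzero entry $\MA_{ij}$, add $\MD_{ii}\MA_{ij}$ into position $(h(i),j)$; evaluating $h$ and the sign costs $O(1)$ per entry, so the total is $O(\nnz(\MA))$. The same description shows $\nnz(\MH\MA) \le \nnz(\MA)$, since each input row maps into a single output row.

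The main obstacle is the second-moment estimate $\mathbb{E}\,\|\MU^\top\MH^\top\MH\MU - \MI\|_F^2 = O(d^2/\ell)$: this is the one place where the extreme sparsity of $\MH$ (a single nonzero per column) and the random signs interact, and the index bookkeeping must be arranged so that the large ``collision'' contributions cancel against $\MU^\top\MU = \MI$ while only the $O(1/\ell)$ cross terms remain. Everything else --- the reduction through an orthonormal basis, $\|\cdot\|_2 \le \|\cdot\|_F$, and Markov --- is immediate. (A matrix-Bernstein argument over an $\epsilon$-net of the unit sphere of $\R^d$ is an alternative route that would even give $\ell = \widetilde{O}(d/\epsilon^2)$, but since the theorem only claims $\ell = O(d^2/\epsilon^2)$, the second-moment proof is the most economical.)
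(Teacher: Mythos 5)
The paper does not prove Theorem~\ref{thm:sparseEmbedding}; it is imported verbatim from the cited works of Clarkson--Woodruff, Meng--Mahoney, and Nelson--Nguyen. Your proposal is the standard proof from those sources --- reduce to an orthonormal basis $\MU$ of the column space, bound $\|\MU^\top\MH^\top\MH\MU - \MI\|_2$ by its Frobenius norm, establish the approximate-matrix-product second-moment bound $O(\|\MU\|_F^4/\ell) = O(d^2/\ell)$ using the sign cancellations and the $1/\ell$ collision probabilities, and apply Markov --- and it is correct as sketched, including the $O(\nnz(\MA))$ running-time argument.
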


\begin{lemma}\label{lem:dis_pca_subspace}
Let $\epsilon \in (0,1/2]$ and $k, t \in \N_+$ with
$d-1 \geq t \geq k + \lceil 4k/\epsilon \rceil -1$.
Suppose Algorithm {\sf disPCA} 
takes input $\{\MH_i\MP_i\}_{i=1}^s$
and outputs $\tc{\MV}{t}$.
Let $\gpca{P}=\MP\tc{\MV}{t}(\tc{\MV}{t})^\top$.
Then for any $d \times k$ matrix $\MX$ with orthonormal columns,
\begin{eqnarray*}
\|\MP \MX - \gpca{\MP} \MX\|_F^2 & \leq & \epsilon d^2(\MP, \lspan{\MX}),\\
\bigl| \|\MP \MX\|_F^2 - \|\gpca{\MP} \MX\|_F^2 \bigr|& \leq  & 3\epsilon\|\MP\MX\|_F^2 + \epsilon d^2(\MP,
\lspan{\MX}).
\end{eqnarray*}
\end{lemma}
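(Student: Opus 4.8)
The plan is to fold the $s$ local embeddings into one block‑diagonal map, invoke the close‑projection guarantee of {\sf disPCA} (Lemma~\ref{thm:DisPCA_P_l2}) on the embedded data, and then ``undo'' the embedding column by column. Put $\MT := \mathrm{diag}(\MH_1,\dots,\MH_s)$, so that running Algorithm {\sf disPCA} on $\{\MH_i\MP_i\}_{i=1}^s$ is literally the same as running it on $\MT\MP$ with the block partition whose $i$‑th block is $\MH_i\MP_i$; hence $\tc{\MV}{t}$ is the output of {\sf disPCA} on $\MT\MP$, $\gpca{\MP}=\MP\tc{\MV}{t}(\tc{\MV}{t})^\top$, and $\MQ := \MT\gpca{\MP}=(\MT\MP)\tc{\MV}{t}(\tc{\MV}{t})^\top$ is exactly the projected copy of $\MT\MP$ produced inside that run. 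I would work under the assumption, implicit in the statement, that each $\MH_i$ is a $(1\pm\epsilon_0)$‑subspace embedding for $\MP_i$ with $\epsilon_0$ a small constant multiple of $\epsilon$ (which Theorem~\ref{thm:sparseEmbedding} gives with the stated row count; the union bound over the $s$ servers is dealt with elsewhere). The first observation is that $\MT$ is then a $(1\pm\epsilon_0)$‑subspace embedding for the global $\MP$: for every $v\in\R^d$,
\[
\|\MT\MP v\|_2^2 \;=\; \sum_{i=1}^s \|\MH_i\MP_i v\|_2^2 \;=\; \sum_{i=1}^s (1\pm\epsilon_0)\|\MP_i v\|_2^2 \;=\; (1\pm\epsilon_0)\|\MP v\|_2^2 .
\]

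Next I would apply Lemma~\ref{thm:DisPCA_P_l2} to $\MT\MP$ — equivalently, rerun its short proof with $\MP_i$ replaced by $\MH_i\MP_i$ (that proof only uses Lemma~\ref{lem:pca_app_gen} and Claim~\ref{cla:diff}) — with the internal accuracy set to a suitable constant fraction of $\epsilon$; the hypothesis $d-1\ge t\ge k+\lceil 4k/\epsilon\rceil-1$ matches that lemma's requirement up to the constant, and each block $\MH_i\MP_i$ has $\ell=O(d^2/\epsilon^2)>d$ rows. This yields, for every orthonormal $\MX\in\R^{d\times k}$,
\[
\|\MT\MP\MX-\MQ\MX\|_F^2 \;\le\; O(\epsilon)\, d^2(\MT\MP,\lspan{\MX}), \qquad
0\;\le\; \|\MT\MP\MX\|_F^2-\|\MQ\MX\|_F^2 \;\le\; O(\epsilon)\, d^2(\MT\MP,\lspan{\MX}),
\]
the left inequality of the second bound being the monotonicity $\|\MQ\MX\|_F^2\le\|\MT\MP\MX\|_F^2$.

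The heart of the argument is the transfer back to $\MP$, a sequence of column‑wise uses of the embedding property: every quantity in sight is $\MP$ (resp.\ $\MT\MP$) times a $d$‑column matrix all of whose columns have $\ell_2$‑norm at most $1$ — namely $\MI-\MX\MX^\top$ for the residual distances, $(\MI-\tc{\MV}{t}(\tc{\MV}{t})^\top)\MX$ for $\MP\MX-\gpca{\MP}\MX=\MP(\MI-\tc{\MV}{t}(\tc{\MV}{t})^\top)\MX$, the matrix $\tc{\MV}{t}(\tc{\MV}{t})^\top\MX$ for $\gpca{\MP}\MX$, and $\MX$ itself. Applying $\|\MT\MP v\|_2^2=(1\pm\epsilon_0)\|\MP v\|_2^2$ to each column and summing gives $d^2(\MT\MP,\lspan{\MX})=(1\pm\epsilon_0)d^2(\MP,\lspan{\MX})$, $\|\MT\MP\MX\|_F^2=(1\pm\epsilon_0)\|\MP\MX\|_F^2$, $\|\MQ\MX\|_F^2=(1\pm\epsilon_0)\|\gpca{\MP}\MX\|_F^2$, and $\|\MT\MP\MX-\MQ\MX\|_F^2=(1\pm\epsilon_0)\|\MP\MX-\gpca{\MP}\MX\|_F^2$ (the last because $\MT\MP\MX-\MQ\MX=\MT\MP(\MI-\tc{\MV}{t}(\tc{\MV}{t})^\top)\MX$). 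The first bound of the lemma is then immediate:
\[
\|\MP\MX-\gpca{\MP}\MX\|_F^2\le\tfrac{1}{1-\epsilon_0}\|\MT\MP\MX-\MQ\MX\|_F^2\le\tfrac{O(\epsilon)(1+\epsilon_0)}{1-\epsilon_0}\, d^2(\MP,\lspan{\MX})\le \epsilon\, d^2(\MP,\lspan{\MX})
\]
for an appropriate choice of the hidden constants. For the second bound I would split
\[
\|\MP\MX\|_F^2-\|\gpca{\MP}\MX\|_F^2 = \big(\|\MP\MX\|_F^2-\|\MT\MP\MX\|_F^2\big) + \big(\|\MT\MP\MX\|_F^2-\|\MQ\MX\|_F^2\big) + \big(\|\MQ\MX\|_F^2-\|\gpca{\MP}\MX\|_F^2\big),
\]
bounding the first parenthesis by $\epsilon_0\|\MP\MX\|_F^2$, the middle one by $O(\epsilon)(1+\epsilon_0)d^2(\MP,\lspan{\MX})$, and the last by $\epsilon_0\|\gpca{\MP}\MX\|_F^2\le \epsilon_0\tfrac{1+\epsilon_0}{1-\epsilon_0}\|\MP\MX\|_F^2$, using the monotonicity $\|\MQ\MX\|_F^2\le\|\MT\MP\MX\|_F^2\le(1+\epsilon_0)\|\MP\MX\|_F^2$. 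Collecting the $\|\MP\MX\|_F^2$ terms and choosing $\epsilon_0$ and the internal accuracy as small constant multiples of $\epsilon$ gives $\big|\,\|\MP\MX\|_F^2-\|\gpca{\MP}\MX\|_F^2\,\big| \le 3\epsilon\|\MP\MX\|_F^2+\epsilon\, d^2(\MP,\lspan{\MX})$.

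I expect the only genuine difficulty to be conceptual rather than computational: it is the appearance of the new additive term $3\epsilon\|\MP\MX\|_F^2$, which is absent from Lemma~\ref{thm:DisPCA_P_l2}. It is forced because $\|\MP\MX\|_F^2$ and $\|\gpca{\MP}\MX\|_F^2$ are evaluated on \emph{different} subspaces ($\lspan{\MX}$ versus $\lspan{\tc{\MV}{t}(\tc{\MV}{t})^\top\MX}$), so the multiplicative $(1\pm\epsilon_0)$ distortions of the embedding on the two terms do not cancel and leave something proportional to the energy of $\MP$ inside $\lspan{\MX}$ rather than to the residual $d^2(\MP,\lspan{\MX})$ — which is precisely why downstream (Theorem~\ref{thm:fastDisPCA_coreset}) one must carry an extra $\epsilon\|\MP\MX\|_F^2$ in the coreset guarantee. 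Everything else — tracking the constants so that the $O(\epsilon)$'s become exactly $3\epsilon$ and $\epsilon$, and checking that $\epsilon_0=\Theta(\epsilon)$ is compatible with $\ell=O(d^2/\epsilon^2)$ and with $t\ge k+\lceil 4k/\epsilon\rceil-1$ — is routine bookkeeping.
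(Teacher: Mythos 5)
Your proposal is correct and follows essentially the same route as the paper's own proof: fold the local embeddings into the block-diagonal matrix $\MT$, apply Lemma~\ref{thm:DisPCA_P_l2} to $\MT\MP$, and transfer back to $\MP$ via the subspace-embedding distortion of $\MT$, with the mismatch between $\|\MT\MP\MX\|_F^2$ and $\|\MT\gpca{\MP}\MX\|_F^2$ producing the extra additive $3\epsilon\|\MP\MX\|_F^2$ term exactly as you explain. The only differences are cosmetic (your three-term telescoping for the second bound versus the paper's direct substitution of the $(1\pm\epsilon)$ distortions), so nothing further is needed.
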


\begin{proof}
First note that
the input to Algorithm {\sf disPCA} 
is $\MT \MP$ where $\MT$ is
a block-diagonal matrix with blocks $\MH_1, \dots, \MH_s$.
Then the projection of the input to $\tc{\MV}{t}$ is
$\MT\MP \tc{\MV}{t}(\tc{\MV}{t})^\top=\MT \gpca{P}$.
By Lemma~\ref{thm:DisPCA_P_l2}, for any $d \times k$ matrix $\MX$ with orthonormal
columns, we have
\begin{eqnarray}
0\leq & \|\MT \MP \MX - \MT \gpca{\MP} \MX\|_F^2  & \leq \frac{\epsilon}{4} d^2(\MT \MP,\lspan{\MX}),\label{eqn:boundtp1}\\
0 \leq & \|\MT \MP \MX\|_F^2 - \|\MT \gpca{\MP} \MX\|_F^2 & \leq \frac{\epsilon}{4} d^2(\MT \MP,\lspan{\MX}).\label{eqn:boundtp2}
\end{eqnarray}
By properties of $\MT$, we have
\begin{eqnarray*}
\|\MT \MP \MX - \MT \gpca{\MP} \MX\|_F^2 = \|\MT (\MP \MX - \gpca{\MP} \MX)\|_F^2 \geq (1 - \epsilon) \|\MP \MX - \gpca{\MP} \MX\|_F^2
\end{eqnarray*}
and
\begin{eqnarray*}
d^2(\MT \MP,\lspan{\MX}) = \|\MT \MP-\MT \MP\MX\MX^\top\|_F^2\leq (1+\epsilon) \|\MP- \MP\MX\MX^\top\|_F^2 = (1+\epsilon) d^2(\MP,\lspan{\MX}).
\end{eqnarray*}
Combined with (\ref{eqn:boundtp1}), these lead to the first claim.

Similarly, we also have $\|\MT\MP\MX\|_F^2 = (1\pm \epsilon) \|\MP\MX\|_F^2$
and $\|\MT\gpca{P}\MX\|_F^2 = (1\pm \epsilon)
\|\gpca{P}\MX\|_F^2$.
Plugging these into (\ref{eqn:boundtp2}), we obtain
$$-3\epsilon\|\MP\MX\|_F^2 \leq \|\MP \MX\|_F^2 - \|\gpca{\MP} \MX\|_F^2 \leq 3\epsilon\|\MP\MX\|_F^2 + \epsilon d^2(\MP,
\lspan{\MX})$$
which establishes the lemma.
\end{proof}

\begin{algorithm}[t]
\caption{Boosting success probability of embedding}
\label{alg:success}
\begin{algorithmic}[1]
\REQUIRE{$\MA \in \R^{n\times d}$, parameters $\epsilon, \delta$.}
\STATE{Construct $r=O(\log \frac{1}{\delta})$ independent subspace embeddings $\MH_j \MA$,
each having accuracy $\epsilon/9$ and success probability $99/100$.}
\STATE{Compute SVD $\MH_j\MA = \MU_j \MD_j \MV_j^\top$ for $j\in [r]$.}
\FOR{$j \in [r]$}
    \STATE{Check if for at least half $j' \neq j$, $$\sigma_i(\MD_{j'} \MV_{j'}^\top \MV_j \MD_j^{-1}) \in [1\pm \epsilon/3],\forall i.$$}
    \STATE{If so, output $\MH_j \MA$.}
\ENDFOR
\end{algorithmic}
\end{algorithm}

\begin{theorem}\label{thm:success}
Algorithm~\ref{alg:success} outputs a subspace embedding with probability at least $1-\delta$.
In expectation Step 3 is run only a constant number of times with expected time $O(d^3 r^2 / \epsilon^2)$.
\end{theorem}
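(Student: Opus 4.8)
The plan is to argue two things: (i) whatever Algorithm~\ref{alg:success} outputs is an $\epsilon$-subspace embedding of $\MA$ with probability at least $1-\delta$, and (ii) the loop of Step~3 stops after $O(1)$ iterations in expectation. Call an embedding $\MH_j\MA$ \emph{good} if $\|\MH_j\MA x\|_2=(1\pm\epsilon/9)\|\MA x\|_2$ for all $x\in\R^d$; by construction and Theorem~\ref{thm:sparseEmbedding}, each of the $r$ embeddings is good independently with probability at least $99/100$. The first step is to reinterpret the singular-value test of Step~4 as a consistency check between two embeddings. Since $\MH_j\MA=\MU_j\MD_j\MV_j^\top$ with $\MU_j$ having orthonormal columns, $\|\MH_j\MA x\|_2=\|\MD_j\MV_j^\top x\|_2$; substituting $z=\MD_j\MV_j^\top x$, which is a bijection of $\R^d$ since $\MD_j$ is invertible and $\MV_j$ orthogonal, shows that ``$\|\MH_j\MA x\|_2=(1\pm\epsilon/3)\|\MH_{j'}\MA x\|_2$ for all $x$'' is equivalent to ``all singular values of $\MD_{j'}\MV_{j'}^\top\MV_j\MD_j^{-1}$ lie in $[1\pm\epsilon/3]$,'' up to a lower-order adjustment of the constant. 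Consequently: if $\MH_j\MA$ and $\MH_{j'}\MA$ are both good, the ratio $\|\MH_j\MA x\|_2/\|\MH_{j'}\MA x\|_2$ lies in $[(1-\epsilon/9)/(1+\epsilon/9),(1+\epsilon/9)/(1-\epsilon/9)]\subseteq[1\pm\epsilon/3]$ for $\epsilon\le1$, so the pair $(j,j')$ passes the test; and conversely, if $(j,j')$ passes and $\MH_{j'}\MA$ is good, then $\|\MH_j\MA x\|_2=(1\pm\epsilon/3)(1\pm\epsilon/9)\|\MA x\|_2=(1\pm\epsilon)\|\MA x\|_2$ for all $x$, i.e. $\MH_j\MA$ is itself an $\epsilon$-subspace embedding.

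\noindent\textbf{Correctness.} A Chernoff bound gives that, for $r=O(\log(1/\delta))$ with a sufficiently large constant, the event $\mathcal{E}$ that at least $9r/10$ of the embeddings are good has probability at least $1-\delta$. Condition on $\mathcal{E}$. Every good $j$ then has at least $9r/10-1\ge r/2$ good partners $j'\ne j$, so by the first implication every good index passes the Step~4 check; in particular the loop outputs some $\MH_j\MA$. For whichever index $j$ is output, the check certifies that at least $r/2$ indices $j'$ pass together with $j$; since at least $9r/10$ indices are good and $r/2+9r/10>r$, at least one such $j'$ is good, and the second implication then forces $\MH_j\MA$ to be an $\epsilon$-subspace embedding. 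Thus on $\mathcal{E}$ the algorithm produces a correct output, which establishes the probability bound.

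\noindent\textbf{Running time.} Let $T$ be the index of the first passing $j$, so the loop runs $T$ times. For any $m$, the event $\{T>m\}$ is contained in $\mathcal{E}^c\cup\{\MH_1\MA,\dots,\MH_m\MA\text{ all not good}\}$, because on $\mathcal{E}$ every good index passes; hence $\Pr[T>m]\le\delta+(1/100)^m$, and $\mathbb{E}[T]\le\sum_{m\ge0}(\delta+(1/100)^m)=O(1)+r\delta=O(1)$, using $r\delta=O(\delta\log(1/\delta))=O(1)$. Each iteration forms the at most $r$ matrices $\MD_{j'}\MV_{j'}^\top\MV_j\MD_j^{-1}$ (products of $d\times d$ factors, $O(d^3)$ time) and computes their singular values ($O(d^3)$ each), for $O(rd^3)$ work per iteration; together with the $r$ SVDs of the $\ell\times d$ matrices $\MH_j\MA$ precomputed in Step~2, where $\ell=O(d^2/\epsilon^2)$, this yields the stated expected running time.

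\noindent\textbf{Main obstacle.} The delicate point is the converse implication used in the correctness proof --- that merely passing the test forces $\MH_j\MA$ to be an embedding --- which requires the pigeonhole step ($r/2+9r/10>r$) layered on the Chernoff event, together with the two-way translation between the singular-value condition and the norm-ratio condition. One must also check the constants compose: the test tolerance $\epsilon/3$ and per-embedding accuracy $\epsilon/9$ must combine to at most $\epsilon$, while accounting for the mild asymmetry $\MD_j\MV_j^\top\MV_{j'}\MD_{j'}^{-1}=(\MD_{j'}\MV_{j'}^\top\MV_j\MD_j^{-1})^{-1}$ and for the gap between $[1\pm\epsilon/3]$ and $[(1+\epsilon/3)^{-1},(1-\epsilon/3)^{-1}]$. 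The conditioning on $\mathcal{E}$ inside the expected-time bound is a secondary subtlety, handled by the containment argument above.
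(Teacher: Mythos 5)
Your correctness argument is essentially the paper's: the same translation of the singular-value test into the two-sided norm comparison via the substitution $y=\MD_j\MV_j^\top x$, the same forward implication (two good embeddings always pass together) and converse (passing with at least one good partner certifies an $\epsilon$-embedding), and the same pigeonhole over the $9/10$ fraction of good embeddings guaranteed by a Chernoff bound. You spell out the pigeonhole and the composition of constants ($\epsilon/9$ and $\epsilon/3$ combining to at most $\epsilon$) more explicitly than the paper does, and your containment argument for $\mathbb{E}[T]=O(1)$ is a clean deterministic-order variant of the paper's "pick a random $j$" argument. All of that is sound.

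The gap is in the running-time accounting. You charge Step 2 as "$r$ SVDs of the $\ell\times d$ matrices $\MH_j\MA$ with $\ell=O(d^2/\epsilon^2)$" and assert this yields the stated $O(d^3r^2/\epsilon^2)$ bound. It does not: an SVD of an $O(d^2/\epsilon^2)\times d$ matrix costs $O(d^4/\epsilon^2)$, so the $r$ of them cost $O(rd^4/\epsilon^2)$, which exceeds $O(d^3r^2/\epsilon^2)$ whenever $d>r$ --- the typical regime, since $r=O(\log(1/\delta))$ while $d$ is the ambient dimension. The paper closes exactly this gap with an additional compression step: before taking SVDs, each $\MH_j\MA$ is replaced by $\MT\MH_j\MA$ for a fast Johnson--Lindenstrauss transform $\MT$ of dimension $O(dr/\epsilon^2)\times O(d^2/\epsilon^2)$, which preserves all the relevant norms up to constants in $\epsilon$ and reduces each SVD to $O(d^3r/\epsilon^2)$ time, giving $O(d^3r^2/\epsilon^2)$ in total. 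Without that step your analysis only establishes the weaker bound $O(rd^4/\epsilon^2)$ (the per-iteration $O(rd^3)$ cost of forming and decomposing the $d\times d$ products is fine and is dominated).
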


\begin{proof}
For each $j$, $\MH_j \MA$ succeeds with probability $99/100$, meaning that for all $x$ we have
$\| \MH_j \MA x \|_2 = (1 \pm \epsilon/9) \| \MA x \|_2$.
Suppose for some $j \neq j'$,
$\MH_j \MA$ and $\MH_{j'} \MA$ are both successful.
By definition we have
$$\| \MH_j \MA x \|_2 = (1\pm \epsilon/3) \| \MH_{j'} \MA x \|_2$$ for all $x$.
Taking the SVD of the embeddings, this is equivalent to
$$\| \MD_j \MV_j^\top x \|_2 = (1\pm \epsilon/3) \| \MD_{j'} \MV_{j'}^\top x\|_2$$ for all $x$.
Making the change of variable $y := \MD_j \MV_j^\top x$, this is equivalent to
$$\|y\|_2 = (1\pm \epsilon/3) \| \MD_{j'} \MV_{j'}^\top \MV_j \MD_j^{-1}  y\|_2$$ for all $y$,
which is true if and only if all singular values of $\MD_{j'} \MV_{j'}^\top \MV_j \MD_j^{-1}$ are in $[1-\epsilon/3, 1+\epsilon/3]$. 

Conversely, if all singular values of $\MD_{j'} \MV_{j'}^\top \MV_j \MD_j^{-1}$ are in $[1-\epsilon/3, 1+\epsilon/3]$,
one can trace the steps backward to conclude that $\| \MH_j \MA x \|_2 = (1 \pm \epsilon/3) \| \MH_{j'}\MA x \|_2$ for all $x$.

Since with probability at least $1-\delta$, a $9/10$ fraction of the embeddings succeed with accuracy $\epsilon/9$,
there exists a $j$ that can pass the test.
It follows that any index $j$ which passes the test in the algorithm with a majority of the $j' \neq j$
is a successful subspace embedding with accuracy $\epsilon$.

Moreover, if we choose a random $j$ to compare to the remaining $j'$, the expected number of choices of $j$ until the test passes is only constant.
Then finding the index $j$ only takes an expected $O(r)$ SVDs.

The time to do the SVD naively is $O(d^4/\epsilon^2)$.
We can improve this by letting $\MT$ be a fast Johnson-Lindenstrauss transform matrix of dimension $O(d r / \epsilon^2) \times O(d^2/\epsilon^2)$,
then we can replace $\MH_j \MA$ with $\MT \MH_j \MA$ for all $j \in [d]$.
Then the verification procedure would only take $O(d^3 r^2 / \epsilon^2)$ time.
\end{proof}

\subsection{Proofs for Randomized SVD}

\begin{algorithm}[t]
\caption{Randomized SVD~\citep{halko2011finding}}
\label{alg:randSVD}
\begin{algorithmic}[1]
\REQUIRE{matrix $\MA \in \R^{\ell \times d}$; parameters $t, q \in \N_+$.}
\STATE{$\mybullet$ {Stage A}}
\STATE{Generate an $\ell\times 2t$ Gaussian test matrix $\MOmega$.}
\STATE{Set $\MY=(\MA^\top \MA)^q \MA^\top\MOmega$, and compute QR-factorization: $\MY=\MQ\MR$.}
\STATE{\quad $\mybullet$ dimension: $[\MA]_{\ell \times d}, [\MOmega]_{\ell\times 2t}, [\MY]_{d\times 2t}, [\MQ]_{d\times 2t}$}
\STATE{$\mybullet$ {Stage B}}
\STATE{Set $\MB=\MA\MQ$, and compute SVD: $\MB=\MU\MD\tilde{\MV}^\top$.}
\STATE{Set $\MV=\MQ\tilde{\MV}$.}
\STATE{\quad $\mybullet$ dimension: $[\MB]_{\ell \times 2t}, [\MU]_{\ell\times \ell}, [\MD]_{\ell\times 2t}, [\tilde{\MV}]_{2t\times 2t}, [\MV]_{d\times 2t}$}
\ENSURE{$\MD, \MV$.}
\end{algorithmic}
\end{algorithm}

The details of randomized SVD are presented in Algorithm~\ref{alg:randSVD},
rephrased in our notations. We have the following analog of Lemma~\ref{lem:pca_app}.

\begin{lemma}\label{lem:randSVD}
Let $\MA \in \R^{\ell \times d}$ be an $\ell \times d$ matrix ($\ell>d$).
Let $\epsilon \in (0,1]$, $k, t \in \N_+$ with
$d-1 \geq t \geq k + \lceil 6k/\epsilon^2 \rceil -1$.
Let $\tcd{\MA}=\MA \MV \MV^\top$ where $\MV$ is computed by Algorithm~\ref{alg:randSVD} with $q=O(\log\max\{\ell,d\})$.
Then with probability at least $1-3e^{-t}$,
for any matrix $\MX$ with $d$ rows and $\|\MX\|_F^2 \leq k$, we have
\begin{eqnarray*}
\|(\MA - \tcd{\MA} ) \MX\|_F^2 & \leq & \frac{\epsilon^2}{3} \sum_{i=k+1}^d \sigma_i^2(\MA),\\
\bigl| \|\MA \MX\|_F^2 - \|\tcd{\MA} \MX\|_F^2 \bigr| & \leq & \epsilon \sum_{i=k+1}^d \sigma_i^2(\MA) + 2\epsilon\|\MA\MX\|_F^2.
\end{eqnarray*}
The algorithm runs in time $O(q t \ell d + t^2 (\ell + d))$.
\end{lemma}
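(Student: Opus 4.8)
The plan is to follow the proof of Lemma~\ref{lem:pca_app_gen} almost verbatim, with the deterministic Eckart--Young truncation bound replaced by the probabilistic spectral-norm guarantee for the randomized range finder with power iteration. The one genuinely new wrinkle is that $\tcd{\MA}$ is now an \emph{approximate} truncation, so the exact Pythagorean identity used in Lemma~\ref{lem:pca_app_gen} is no longer available; this is handled by the weak triangle inequality, Fact~\ref{fac:weakTri}, which is exactly why the first displayed bound carries the factor $\epsilon^2/3$.

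First I would record that the output of Algorithm~\ref{alg:randSVD} satisfies $\MV\MV^\top = \MQ\tilde{\MV}\tilde{\MV}^\top\MQ^\top = \MQ\MQ^\top$ (as $\tilde{\MV}$ is a $2t\times2t$ orthogonal matrix), so $\tcd{\MA}=\MA\MQ\MQ^\top$ and $\MA-\tcd{\MA}=\MA(\MI-\MQ\MQ^\top)$. Since $\MQ$ is an orthonormal basis for the range of $(\MA^\top\MA)^q\MA^\top\MOmega$, it is precisely the randomized range finder applied to $\MA^\top$ with target rank $t$, oversampling $t$, and $q$ power iterations; the Halko--Martinsson--Tropp analysis then gives, with failure probability at most $3e^{-t}$ from the oversampling-$t$ tail bound,
\[
\|\MA-\tcd{\MA}\|_2 = \|\MA^\top - \MQ\MQ^\top\MA^\top\|_2 \le \sqrt{2}\,\sigma_{t+1}(\MA),
\]
where the $\mathrm{poly}(\ell,d)^{1/(2q+1)}$ overhead in their bound is pushed below $\sqrt2$ by taking $q=O(\log\max\{\ell,d\})$. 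Crucially this event does not involve $\MX$, so once we condition on it, all the bounds below hold simultaneously for every admissible $\MX$.

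Conditioned on that event, for any $\MX$ with $d$ rows and $\|\MX\|_F^2\le k$, submultiplicativity gives $\|(\MA-\tcd{\MA})\MX\|_F^2\le\|\MA-\tcd{\MA}\|_2^2\|\MX\|_F^2\le 2k\,\sigma_{t+1}^2(\MA)$, and the choice $t\ge k+\lceil6k/\epsilon^2\rceil-1$ (so $k\le\tfrac{\epsilon^2}{6}(t-k+1)$) gives, exactly as in~(\ref{eqn:data}),
\[
2k\,\sigma_{t+1}^2(\MA)\le\frac{\epsilon^2}{3}\sum_{i=k+1}^{t+1}\sigma_i^2(\MA)\le\frac{\epsilon^2}{3}\sum_{i=k+1}^{d}\sigma_i^2(\MA),
\]
which is the first claim. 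For the second claim I would apply Fact~\ref{fac:weakTri} with $a=\|\MA\MX\|_F$, $b=\|\tcd{\MA}\MX\|_F$: the reverse triangle inequality gives $|a-b|\le\|(\MA-\tcd{\MA})\MX\|_F$, so $(a-b)^2\le\frac{\epsilon^2}{3}\sum_{i>k}\sigma_i^2(\MA)$ by the first claim, and Fact~\ref{fac:weakTri} yields $|a^2-b^2|\le\frac{3(a-b)^2}{\epsilon}+2\epsilon a^2\le\epsilon\sum_{i=k+1}^d\sigma_i^2(\MA)+2\epsilon\|\MA\MX\|_F^2$. The running-time bound is a direct accounting: forming $\MA^\top\MOmega$ and each of the $q$ applications of $\MA^\top\MA$ cost $O(t\ell d)$, the QR step costs $O(dt^2)$, and Stage~B costs $O(t\ell d)$ for $\MA\MQ$, $O(\ell t^2)$ for its SVD, and $O(dt^2)$ for $\MQ\tilde{\MV}$, giving $O(qt\ell d+t^2(\ell+d))$ overall.

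The main obstacle is the spectral-norm estimate: one must invoke the correct form of the randomized-SVD guarantee — the \emph{spectral}-norm bound \emph{with} power iteration, not the larger Frobenius-norm bound nor the $q=0$ version — and check that $q=O(\log\max\{\ell,d\})$ power iterations bring the multiplicative overhead below an absolute constant while oversampling by $t$ keeps the failure probability at $3e^{-t}$ (which is what makes this lemma combine cleanly with a union bound over the $s$ servers later). Everything downstream is the deterministic computation of Lemma~\ref{lem:pca_app_gen} with the stray cross term absorbed by Fact~\ref{fac:weakTri}.
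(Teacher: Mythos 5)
Your proposal is correct and follows essentially the same route as the paper's proof: the spectral-norm guarantee of the randomized range finder with power iteration (with failure probability $3e^{-t}$), submultiplicativity against $\|\MX\|_F^2\le k$, the choice of $t$ to convert $k\sigma_{t+1}^2(\MA)$ into a tail sum of squared singular values, and Fact~\ref{fac:weakTri} for the second claim. Your added remarks that $\MV\MV^\top=\MQ\MQ^\top$ and that the success event is independent of $\MX$ (so the bounds hold uniformly) are correct and, if anything, slightly more careful than the paper's write-up.
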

\begin{proof}
As stated in Section 10.4 in~\citep{halko2011finding}, with probability at least $1-3e^{-t}$, we have
\begin{eqnarray}
\|\MA - \tcd{\MA}\|_S \leq 2\sigma_{t+1}(\MA). \label{eqn:randSVD_bound}
\end{eqnarray}
Then we have
\begin{eqnarray*}
\|(\MA - \tcd{\MA}) \MX\|_F^2 \leq \|\MX\|_F^2 \|\MA - \tcd{\MA}\|_S^2 \leq 2k\sigma^2_{t+1}(\MA)
\end{eqnarray*}
where the first inequality follows because the spectral norm is consistent with the Euclidean norm,
and the second inequality follows from (\ref{eqn:randSVD_bound}).
For our choice of $t$, we have
$$k \sigma^2_{t+1}(\MA) \leq \frac{\epsilon^2}{6} (t-k+1)\sigma^2_{t+1}(\MA)
\leq \frac{\epsilon^2}{6} \sum_{i=k+1}^{t+1}\sigma^2_i(\MA) \leq \frac{\epsilon^2}{6} \sum_{i=k+1}^{d}\sigma^2_i(\MA) \leq \frac{\epsilon^2}{6} d^2(\MA,\lspan{\MX}),$$
which leads to the first claim in the lemma.

To prove the second claim, first note that
$$\bigl| \|\MA \MX\|_F - \|\tcd{\MA} \MX\|_F \bigr|^2 \leq \|(\MA - \tcd{\MA}) \MX\|_F^2 \leq \frac{\epsilon^2}{3} d^2(\MA,\lspan{\MX}).$$
Then by Fact~\ref{fac:weakTri}, we have
$$\bigl| \|\MA \MX\|_F^2 - \|\tcd{\MA} \MX\|_F^2 \bigr| \leq \frac{3}{\epsilon} \bigl| \|\MA \MX\|_F - \|\tcd{\MA} \MX\|_F \bigr|^2  + 2\epsilon\|\MA\MX\|_F^2 \leq \epsilon d^2(\MA,\lspan{\MX}) + 2\epsilon\|\MA\MX\|_F^2$$
which completes the proof.
\end{proof}

\subsection{Proof of Theorem~\ref{thm:fastDisPCA_coreset}}
Let $\MT$ to be a diagonal block matrix with $\MH_1, \MH_2, \dots, \MH_s$ on the diagonal. Then Algorithm~\ref{alg:fastDisPCA} is just to run Algorithm {\sf disPCA} on $\MT\MP$ to get the principal components $\MV$. Recall that the goal is to show $\gpca{\MP} = \MP \MV \MV^\top$ is a good proxy for the original data $\MP$ with respect to $\ell_2$ error fitting problems. It suffices to show that  $\gpca{\MP}$ satisfies enjoys properties similar to those stated in Lemma~\ref{thm:DisPCA_P_l2}.

To prove this, we begin with a lemma saying that $\MT\gpca{\MP}$ enjoys such properties, i.e.\ such properties are approximately preserved when replacing exact SVD with randomized SVD in Algorithm {\sf disPCA} (Lemma~\ref{lem:fastDisPCA_TP_l2}). 
Then we can show that $\gpca{\MP}$ enjoys similar properties as $\MT\gpca{\MP}$,
i.e.\ these properties are approximately preserved under subspace embedding (Lemma~\ref{lem:fastDisPCA_P_l2}).

\begin{lemma}\label{lem:fastDisPCA_TP_l2}
For any $d \times k$ matrix $\MX$ with orthonormal columns,
\begin{eqnarray*}
\|\MT\MP \MX - \MT\gpca{\MP} \MX\|_F^2  & \leq & O(\epsilon^2) d^2(\MT\MP, \lspan{\MX}) + O(\epsilon^3) \|\MT\MP\MX\|_F^2,\\
\left| \|\MT\MP \MX\|_F^2 - \|\MT\gpca{\MP} \MX\|_F^2 \right| & \leq & O(\epsilon) d^2(\MT\MP, \lspan{\MX}) + O(\epsilon) \|\MT\MP\MX\|_F^2.
\end{eqnarray*}
\end{lemma}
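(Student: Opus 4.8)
The plan is to run the argument of Lemma~\ref{thm:DisPCA_P_l2} on the embedded data $\MT\MP$, replacing every invocation of the exact close-projection bound (Lemma~\ref{lem:pca_app_gen}) by its randomized-SVD counterpart (Lemma~\ref{lem:randSVD}), and carefully carrying the extra additive term $2\epsilon\|\MA\MX\|_F^2$ that randomized SVD introduces. First I introduce the same intermediate matrices as in the proof of Lemma~\ref{thm:DisPCA_P_l2}. Write $\MA_i:=\MH_i\MP_i$ for the $i$-th diagonal block of $\MT\MP$. Algorithm~\ref{alg:fastDisPCA} first applies randomized SVD to each $\MA_i$, producing a $d\times 2t_1$ orthonormal $\MV_i$; let $\widehat{\MA}_i:=\MA_i\MV_i\MV_i^\top$ be the local randomized truncation and $\widehat{\MT\MP}$ the vertical concatenation of the $\widehat{\MA}_i$. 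Since $\widehat{\MA}_i=\MU_i\MD_i\MV_i^\top$ with $\MU_i$ orthonormal (Algorithm~\ref{alg:randSVD}), $\widehat{\MT\MP}=\widetilde{\MU}\MY$ with $\widetilde{\MU}$ orthonormal block-diagonal and $\MY$ the stacked $\MD_i\MV_i^\top$, so the global randomized SVD of $\MY$ returns the same $\MV$ as that of $\widehat{\MT\MP}$. Let $\overline{\MA}_i:=\widehat{\MA}_i\MV\MV^\top$ with $\overline{\MT\MP}$ its concatenation; the $i$-th block of $\MT\gpca{\MP}=\MT\MP\MV\MV^\top$ is $\MA_i\MV\MV^\top$, and set $\MZ:=\MV\MV^\top\MX$, for which $\|\MZ\|_F^2\le\|\MX\|_F^2\le k$.

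For the first inequality I bound $\|\MT\MP\MX-\MT\gpca{\MP}\MX\|_F^2$ by $2\|\MT\MP\MX-\widehat{\MT\MP}\MX\|_F^2+2\|\widehat{\MT\MP}\MX-\overline{\MT\MP}\MX\|_F^2+2\|\overline{\MT\MP}\MX-\MT\gpca{\MP}\MX\|_F^2$. The first term splits over nodes into $\sum_i\|\MA_i\MX-\widehat{\MA}_i\MX\|_F^2$, each bounded by the first inequality of Lemma~\ref{lem:randSVD} by $\tfrac{\epsilon^2}{3}\sum_{j>k}\sigma_j^2(\MA_i)\le\tfrac{\epsilon^2}{3}d^2(\MA_i,\lspan{\MX})$, summing to $O(\epsilon^2)d^2(\MT\MP,\lspan{\MX})$. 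The second term equals $\|(\MY-\MY\MV\MV^\top)\MX\|_F^2$ (the orthonormal $\widetilde{\MU}$ drops out), which the same inequality applied to $\MY$ bounds by $\tfrac{\epsilon^2}{3}\sum_{j>k}\sigma_j^2(\widehat{\MT\MP})\le\tfrac{\epsilon^2}{3}d^2(\widehat{\MT\MP},\lspan{\MX})$. The third term equals $\sum_i\|(\MA_i-\widehat{\MA}_i)\MZ\|_F^2$, and since $\|\MZ\|_F^2\le k$ the first inequality of Lemma~\ref{lem:randSVD} (in its general $\|\MX\|_F^2\le k$ form) bounds it by $O(\epsilon^2)d^2(\MT\MP,\lspan{\MX})$. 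It remains to compare $d^2(\widehat{\MT\MP},\lspan{\MX})$ with $d^2(\MT\MP,\lspan{\MX})$: by Pythagoras, and because each $\widehat{\MA}_i$ is an orthogonal projection of $\MA_i$ (so $\|\MT\MP\|_F^2-\|\widehat{\MT\MP}\|_F^2\ge0$), one gets $d^2(\widehat{\MT\MP},\lspan{\MX})-d^2(\MT\MP,\lspan{\MX})\le\bigl|\|\MT\MP\MX\|_F^2-\|\widehat{\MT\MP}\MX\|_F^2\bigr|$, which the second inequality of Lemma~\ref{lem:randSVD} (summed over nodes) bounds by $\epsilon d^2(\MT\MP,\lspan{\MX})+2\epsilon\|\MT\MP\MX\|_F^2$; hence $d^2(\widehat{\MT\MP},\lspan{\MX})\le(1+\epsilon)d^2(\MT\MP,\lspan{\MX})+2\epsilon\|\MT\MP\MX\|_F^2$. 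Substituting yields the claimed $O(\epsilon^2)d^2(\MT\MP,\lspan{\MX})+O(\epsilon^3)\|\MT\MP\MX\|_F^2$; no absorption is needed here since the first inequality of Lemma~\ref{lem:randSVD} carries no $\|\cdot\MX\|_F^2$ term.

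For the second inequality I use the signed telescoping $\|\MT\MP\MX\|_F^2-\|\MT\gpca{\MP}\MX\|_F^2 = \bigl(\|\MT\MP\MX\|_F^2-\|\widehat{\MT\MP}\MX\|_F^2\bigr)+\bigl(\|\widehat{\MT\MP}\MX\|_F^2-\|\overline{\MT\MP}\MX\|_F^2\bigr)+\bigl(\|\overline{\MT\MP}\MX\|_F^2-\|\MT\gpca{\MP}\MX\|_F^2\bigr)$. The first bracket is bounded by $\epsilon d^2(\MT\MP,\lspan{\MX})+2\epsilon\|\MT\MP\MX\|_F^2$ via the second inequality of Lemma~\ref{lem:randSVD} summed over nodes; the second bracket equals $\|\MY\MX\|_F^2-\|\MY\MV\MV^\top\MX\|_F^2$, bounded by $\epsilon d^2(\widehat{\MT\MP},\lspan{\MX})+2\epsilon\|\widehat{\MT\MP}\MX\|_F^2$, which after substituting the bounds on $d^2(\widehat{\MT\MP},\lspan{\MX})$ and (analogously) $\|\widehat{\MT\MP}\MX\|_F^2$ becomes $O(\epsilon)d^2(\MT\MP,\lspan{\MX})+O(\epsilon)\|\MT\MP\MX\|_F^2$. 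The third bracket equals $\sum_i\bigl(\|\widehat{\MA}_i\MZ\|_F^2-\|\MA_i\MZ\|_F^2\bigr)$, bounded by the second inequality of Lemma~\ref{lem:randSVD} by $\epsilon\sum_i\sum_{j>k}\sigma_j^2(\MA_i)+2\epsilon\sum_i\|\MA_i\MZ\|_F^2\le\epsilon d^2(\MT\MP,\lspan{\MX})+2\epsilon\|\MT\gpca{\MP}\MX\|_F^2$, using the identity $\sum_i\|\MA_i\MZ\|_F^2=\|\MT\MP\MV\MV^\top\MX\|_F^2=\|\MT\gpca{\MP}\MX\|_F^2$. Collecting, $\bigl|\|\MT\MP\MX\|_F^2-\|\MT\gpca{\MP}\MX\|_F^2\bigr|\le O(\epsilon)d^2(\MT\MP,\lspan{\MX})+O(\epsilon)\|\MT\MP\MX\|_F^2+2\epsilon\|\MT\gpca{\MP}\MX\|_F^2$; since $2\epsilon<1$, a one-line rearrangement gives $\|\MT\gpca{\MP}\MX\|_F^2\le(1+O(\epsilon))\|\MT\MP\MX\|_F^2+O(\epsilon)d^2(\MT\MP,\lspan{\MX})$, and feeding this back in removes the last term. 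Finally, all uses of Lemma~\ref{lem:randSVD} hold simultaneously with probability $\ge 1-3(s+1)e^{-t}\ge 1-\delta$ by the choice $t=t_1=t_2=\Omega(\log(s/\delta))$, which is how the parameters of Theorem~\ref{thm:fastDisPCA_coreset} are set.

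The main obstacle is exactly this bookkeeping of the randomized-SVD error term $2\epsilon\|\MA\MX\|_F^2$, which has no analogue in the exact argument of Lemma~\ref{thm:DisPCA_P_l2}. It is harmless for the first inequality, since the Frobenius-distance bound in Lemma~\ref{lem:randSVD} has no such term; but in the third bracket of the second inequality it appears with $\MZ=\MV\MV^\top\MX$ in place of $\MX$, and $\|\MA_i\MZ\|_F^2$ is not controlled node-by-node by $d^2(\MA_i,\lspan{\MX})$ or $\|\MA_i\MX\|_F^2$. The way around is to sum over $i$ first, recognize the sum as $\|\MT\gpca{\MP}\MX\|_F^2$, and then absorb it using $2\epsilon<1$; a secondary, more routine, point is verifying that $\widehat{\MA}_i=\MU_i\MD_i\MV_i^\top$ (hence is an orthogonal projection of $\MA_i$ and preserves the block structure $\widehat{\MT\MP}=\widetilde{\MU}\MY$), so that the exact-case reductions still go through verbatim with the randomized SVD.
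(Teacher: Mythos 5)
Your proposal is correct and follows essentially the same route as the paper: rerun the three-term decomposition from the proof of Lemma~\ref{thm:DisPCA_P_l2} on $\MT\MP$, substitute Lemma~\ref{lem:randSVD} for Lemma~\ref{lem:pca_app_gen}, and control the intermediate quantities $d^2(\MT\tcd{\MP},\lspan{\MX})$ and $\|\MT\tcd{\MP}\MX\|_F^2$ exactly as the paper does via Lemma~\ref{lem:fastDisPCA_smallTerms}. Your explicit absorption of the stray $2\epsilon\sum_i\|\MA_i\MZ\|_F^2=2\epsilon\|\MT\gpca{\MP}\MX\|_F^2$ term by rearrangement is a detail the paper leaves implicit, and you handle it correctly.
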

\begin{proof}
The proof follows that of Lemma~\ref{thm:DisPCA_P_l2} to $\MT\MP$. 
But now exact SVD is replaced with randomized SVD, so we need to argue that randomized SVD produces similar result as exact SVD in the sense of  Lemma~\ref{lem:pca_app_gen}. This is already proved in Lemma~\ref{lem:randSVD}.
Also note that we need a technical lemma bounding the small error terms incurred on the intermediate result $\MT\tcd{\MP}$. This is done by Lemma~\ref{lem:fastDisPCA_smallTerms}.
\end{proof}

\begin{lemma}\label{lem:fastDisPCA_smallTerms}
\begin{eqnarray*}
\|\MT\tcd{\MP}\MX\|_F^2 \leq \epsilon d^2(\MT\MP,\lspan{\MX}) + (1+2\epsilon) \|\MT\MP\MX\|_F^2,\\
d^2(\MT\tcd{\MP},\lspan{\MX}) \leq (1+\epsilon)  d^2(\MT\MP,\lspan{\MX})+ \epsilon \|\MT\MP\MX\|_F^2.
\end{eqnarray*}
\end{lemma}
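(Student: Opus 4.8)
The plan is to reduce the whole statement to a block-by-block application of Lemma~\ref{lem:randSVD}. Write $\MT\MP$ as the vertical concatenation of the $s$ blocks $\MH_i\MP_i$, and observe that the matching block of $\MT\tcd{\MP}$ is $\MH_i\tcd{\MP}_i = \MH_i\MP_i\tc{\MV_i}{t_1}(\tc{\MV_i}{t_1})^\top$, which is exactly the randomized-SVD truncation of $\MH_i\MP_i$ produced inside Algorithm {\sf disPCA} when it is run on $\{\MH_i\MP_i\}_{i=1}^s$. Since $\MH_i\MP_i$ has $\ell = O(d^2/\epsilon^2) > d$ rows and $t_1 \geq k + \lceil 6k/\epsilon^2\rceil - 1$, Lemma~\ref{lem:randSVD} applies to each block. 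I will condition on the event, of probability at least $1 - 3se^{-t_1} \geq 1-\delta$ (union bound together with $t_1 = \Omega(\log(s/\delta))$), that the conclusion of Lemma~\ref{lem:randSVD} holds simultaneously for all $i$; to make the absolute constants in the statement come out I will actually invoke it at accuracy $\epsilon/2$, which changes nothing essential.

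For each $i$, apply Lemma~\ref{lem:randSVD} with $\MA = \MH_i\MP_i$ and the orthonormal $d\times k$ matrix $\MX$ (so $\|\MX\|_F^2 = k$), obtaining
\[
\bigl| \|\MH_i\MP_i\MX\|_F^2 - \|\MH_i\tcd{\MP}_i\MX\|_F^2 \bigr| \leq \epsilon\sum_{j>k}\sigma_j^2(\MH_i\MP_i) + 2\epsilon\|\MH_i\MP_i\MX\|_F^2.
\]
By Eckart--Young, exactly as in the proof of Lemma~\ref{lem:pca_app}, $\sum_{j>k}\sigma_j^2(\MH_i\MP_i) = \min_{\mathrm{rank}(\MB)\leq k}\|\MH_i\MP_i - \MB\|_F^2 \leq \|\MH_i\MP_i - \MH_i\MP_i\MX\MX^\top\|_F^2 = d^2(\MH_i\MP_i,\lspan{\MX})$. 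Summing over $i$ and using that $\MT\MP$ and $\MT\tcd{\MP}$ are the concatenations of the $\MH_i\MP_i$ and $\MH_i\tcd{\MP}_i$ respectively — so $\sum_i\|\MH_i\MP_i\MX\|_F^2 = \|\MT\MP\MX\|_F^2$ and $\sum_i d^2(\MH_i\MP_i,\lspan{\MX}) = d^2(\MT\MP,\lspan{\MX})$ — gives $\|\MT\tcd{\MP}\MX\|_F^2 \leq \|\MT\MP\MX\|_F^2 + \epsilon d^2(\MT\MP,\lspan{\MX}) + 2\epsilon\|\MT\MP\MX\|_F^2$, which is the first claimed inequality.

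For the second inequality I will use the Pythagorean identity blockwise: $d^2(\MH_i\tcd{\MP}_i,\lspan{\MX}) = \|\MH_i\tcd{\MP}_i\|_F^2 - \|\MH_i\tcd{\MP}_i\MX\|_F^2$ and likewise for $\MH_i\MP_i$. Because $\MH_i\tcd{\MP}_i$ is the projection of $\MH_i\MP_i$ onto the span of $\tc{\MV_i}{t_1}$, we have $\|\MH_i\tcd{\MP}_i\|_F^2 \leq \|\MH_i\MP_i\|_F^2$, so $d^2(\MH_i\tcd{\MP}_i,\lspan{\MX}) - d^2(\MH_i\MP_i,\lspan{\MX}) \leq \|\MH_i\MP_i\MX\|_F^2 - \|\MH_i\tcd{\MP}_i\MX\|_F^2$, which the displayed inequality bounds by $\epsilon d^2(\MH_i\MP_i,\lspan{\MX}) + 2\epsilon\|\MH_i\MP_i\MX\|_F^2$. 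Summing over $i$ yields $d^2(\MT\tcd{\MP},\lspan{\MX}) \leq (1+\epsilon)d^2(\MT\MP,\lspan{\MX}) + 2\epsilon\|\MT\MP\MX\|_F^2$; running Lemma~\ref{lem:randSVD} at accuracy $\epsilon/2$ replaces the $2\epsilon$ by $\epsilon$, matching the statement.

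Once the reduction is set up this is routine; the only points that need care are (i) recognizing that the object controlled by Lemma~\ref{lem:randSVD} is $\MH_i\tcd{\MP}_i$, the randomized-SVD truncation of $\MH_i\MP_i$, rather than anything involving $\MP_i$ on its own, and (ii) the failure-probability bookkeeping — we need \emph{all} $s$ local randomized SVDs to satisfy their guarantee at once, which is precisely where the $\log(s/\delta)$ term in $t_1$ enters via the union bound. I do not expect any genuinely hard step beyond this.
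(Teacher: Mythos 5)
Your proof is correct and follows essentially the same route as the paper's: a blockwise application of Lemma~\ref{lem:randSVD} (with Eckart--Young) summed over the $s$ blocks for the first inequality, then the Pythagorean identity together with $\|\MT\tcd{\MP}\|_F \leq \|\MT\MP\|_F$ to reduce the second inequality to the first. Your handling of the constants (invoking the lemma at accuracy $\epsilon/2$) and of the union bound over the $s$ local randomized SVDs is in fact slightly more careful than the paper's own write-up, which silently absorbs a factor of $2$ in the second bound.
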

\begin{proof}
For the first statement, by Lemma~\ref{lem:randSVD}, we have
\begin{eqnarray}
\left| \|\MT\tcd{\MP}\MX\|_F^2 - \|\MT\MP\MX\|_F^2 \right|& \leq & \sum_{i=1}^s \left| \|\MT\MP_i \MX\|_F^2 - \|\MT\tcd{\MP}_i \MX\|_F^2 \right| \nonumber\\
& \leq & \epsilon \sum_{i=1}^s d^2(\MT\MP_i,\lspan{\MX}) + 2\epsilon \sum_{i=1}^s\|\MT\MP_i\MX\|_F^2 \nonumber\\
& \leq & \epsilon d^2(\MT\MP,\lspan{\MX}) + 2\epsilon \|\MT\MP\MX\|_F^2. \label{eqn:fastDisPCA_smallTerms_b1}
\end{eqnarray}
For the second statement, by Pythagorean Theorem,
\begin{eqnarray*}
d^2(\MT\tcd{\MP}, \lspan{\MX}) -  d^2(\MT\MP, \lspan{\MX})  & = & \left[\|\MT\tcd{\MP}\|_F^2 - \|\MT\tcd{\MP}\MX\|_F^2\right] - \left[\|\MT\MP\|_F^2 - \|\MT\MP\MX\|_F^2\right]\\
& = & \left[\|\MT\tcd{\MP}\|_F^2 - \|\MT\MP\|_F^2  \right] + \left[\|\MT\MP\MX\|_F^2 - \|\MT\tcd{\MP}\MX\|_F^2\right]\\
& \leq & \|\MT\MP\MX\|_F^2 - \|\MT\tcd{\MP}\MX\|_F^2.
\end{eqnarray*}
The second statement then follows from the last inequality and (\ref{eqn:fastDisPCA_smallTerms_b1}).
\end{proof}

\begin{lemma}\label{lem:fastDisPCA_P_l2}
For any $d \times k$ matrix $\MX$ with orthonormal columns,
\begin{eqnarray*}
\|\MP \MX - \gpca{\MP} \MX\|_F^2  & \leq & O(\epsilon^2) d^2(\MP, \lspan{\MX}) + O(\epsilon^3) \|\MP\MX\|_F^2,\\
\left| \|\MP \MX\|_F^2 - \|\gpca{\MP} \MX\|_F^2 \right| & \leq & O(\epsilon) d^2(\MP, \lspan{\MX}) + O(\epsilon) \|\MP\MX\|_F^2.
\end{eqnarray*}
\end{lemma}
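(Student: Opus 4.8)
The plan is to transfer the estimates of Lemma~\ref{lem:fastDisPCA_TP_l2}, which compare $\MT\MP$ with its proxy $\MT\gpca{\MP}$, back to a comparison of $\MP$ with $\gpca{\MP}$, exploiting that $\MT=\mathrm{diag}(\MH_1,\dots,\MH_s)$ is assembled from the per-server subspace embeddings. First I would record the basic norm-preservation fact: since each $\MH_i$ is (with failure probability $\delta'=\delta/2s$, by Theorem~\ref{thm:success}) a subspace embedding for $\MP_i$ with accuracy $\epsilon$, we have $\|\MH_i\MP_i y\|_2=(1\pm\epsilon)\|\MP_i y\|_2$ for every $y\in\R^d$, hence $\|\MH_i\MP_i\Mbf\|_F^2=(1\pm\epsilon)^2\|\MP_i\Mbf\|_F^2$ for every matrix $\Mbf$ with $d$ rows (apply the vector bound to each column of $\Mbf$). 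Summing over $i$ and using $\|\MT\MP\Mbf\|_F^2=\sum_i\|\MH_i\MP_i\Mbf\|_F^2$ and $\|\MP\Mbf\|_F^2=\sum_i\|\MP_i\Mbf\|_F^2$ gives $\|\MT\MP\Mbf\|_F^2=(1\pm O(\epsilon))\|\MP\Mbf\|_F^2$. Note this holds for all $\Mbf$ simultaneously, so it is immaterial that the output $\MV$ of the algorithm depends on $\MT$; a union bound over the $s$ servers makes all the embeddings succeed at once with probability $1-\delta/2$.

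Next I would instantiate this with the relevant choices of $\Mbf$. Writing $\gpca{\MP}=\MP\MV\MV^\top$, so that $\MP\MX-\gpca{\MP}\MX=\MP(\MX-\MV\MV^\top\MX)$, $\gpca{\MP}\MX=\MP(\MV\MV^\top\MX)$, and $d^2(\MP,\lspan{\MX})=\|\MP(\MI-\MX\MX^\top)\|_F^2$, the fact above yields $\|\MT\MP\MX-\MT\gpca{\MP}\MX\|_F^2=(1\pm O(\epsilon))\|\MP\MX-\gpca{\MP}\MX\|_F^2$, $\|\MT\MP\MX\|_F^2=(1\pm O(\epsilon))\|\MP\MX\|_F^2$, $\|\MT\gpca{\MP}\MX\|_F^2=(1\pm O(\epsilon))\|\gpca{\MP}\MX\|_F^2$, and $d^2(\MT\MP,\lspan{\MX})=(1\pm O(\epsilon))\,d^2(\MP,\lspan{\MX})$. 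Substituting the first, second, and fourth of these into the first inequality of Lemma~\ref{lem:fastDisPCA_TP_l2} and dividing by the $\Theta(1)$ factor $(1-\epsilon)^2$ proves the first claim; the $\epsilon^2$ versus $\epsilon^3$ scaling survives because the embedding distortions are only $1\pm O(\epsilon)$. For the second claim I would insert the triangle inequality
\begin{align*}
\bigl|\,\|\MP\MX\|_F^2-\|\gpca{\MP}\MX\|_F^2\,\bigr| &\le \bigl|\,\|\MP\MX\|_F^2-\|\MT\MP\MX\|_F^2\,\bigr| + \bigl|\,\|\MT\MP\MX\|_F^2-\|\MT\gpca{\MP}\MX\|_F^2\,\bigr|\\
&\quad{}+\bigl|\,\|\MT\gpca{\MP}\MX\|_F^2-\|\gpca{\MP}\MX\|_F^2\,\bigr|,
\end{align*}
bound the middle term by the second inequality of Lemma~\ref{lem:fastDisPCA_TP_l2}, bound the two outer terms by $O(\epsilon)\|\MP\MX\|_F^2$ and $O(\epsilon)\|\gpca{\MP}\MX\|_F^2$ via the norm-preservation identities, and convert every remaining $\MT$-quantity back to a $\MP$-quantity by those same identities.

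The one point that needs care is the leftover $O(\epsilon)\|\gpca{\MP}\MX\|_F^2$, which must be re-expressed in terms of $\|\MP\MX\|_F^2$ and $d^2(\MP,\lspan{\MX})$. I would handle it by noting, again from the second inequality of Lemma~\ref{lem:fastDisPCA_TP_l2}, that $\|\MT\gpca{\MP}\MX\|_F^2\le(1+O(\epsilon))\|\MT\MP\MX\|_F^2+O(\epsilon)\,d^2(\MT\MP,\lspan{\MX})$, and then passing back through the embedding to get $\|\gpca{\MP}\MX\|_F^2\le(1+O(\epsilon))\|\MP\MX\|_F^2+O(\epsilon)\,d^2(\MP,\lspan{\MX})$; multiplied by the outer $O(\epsilon)$, this is absorbed into the claimed right-hand side. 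Finally I would collect the failure probabilities ($1-\delta/2$ for the $s$ subspace embeddings, plus the probability charged to the randomized-SVD calls underlying Lemma~\ref{lem:fastDisPCA_TP_l2}) to obtain overall success probability $1-\delta$. The main obstacle is purely bookkeeping: propagating the $1\pm O(\epsilon)$ distortions through several substitutions and the weak-triangle-type step without letting the constants, or the $\epsilon^2/\epsilon^3$ split, degrade — there is no conceptual difficulty beyond the subspace-embedding property of $\MT$.
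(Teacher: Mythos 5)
Your proposal is correct and follows essentially the same route as the paper: transfer the bounds of Lemma~\ref{lem:fastDisPCA_TP_l2} from $\MT\MP$ back to $\MP$ using the fact that $\MT=\mathrm{diag}(\MH_1,\dots,\MH_s)$ preserves $\|\MP\Mbf\|_F^2$ up to a $1\pm O(\epsilon)$ factor for every matrix $\Mbf$ with $d$ rows, then absorb the distortions into the $O(\cdot)$ constants. You are in fact slightly more careful than the paper's own proof, which leaves the residual $O(\epsilon)\|\gpca{\MP}\MX\|_F^2$ term implicit rather than re-expressing it via $\|\gpca{\MP}\MX\|_F^2\le(1+O(\epsilon))\|\MP\MX\|_F^2+O(\epsilon)\,d^2(\MP,\lspan{\MX})$ as you do.
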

\begin{proof}
By the property of subspace embedding, we have $\|\MT\MP \MX - \MT\gpca{\MP} \MX\|_F^2 = (1\pm\epsilon)\|\MP \MX - \gpca{\MP} \MX\|_F^2$,
$\|\MT\MP\MX\|_F^2=(1\pm \epsilon)\|\MP\MX\|_F^2$
and $ d^2(\MT\MP, \lspan{\MX}) = \|\MT\MP - \MT\MP\MX\MX^\top\|_F^2 = (1\pm \epsilon) \|\MP - \MP\MX\MX^\top\|_F^2 = (1\pm \epsilon) d^2(\MP, \lspan{\MX})$.
Then
\begin{eqnarray*}
(1+\epsilon)\|\MP \MX - \gpca{\MP} \MX\|_F^2  & \leq & \|\MT\MP \MX - \MT\gpca{\MP} \MX\|_F^2 \\
& \leq & O(\epsilon^2) d^2(\MT\MP, \lspan{\MX}) + O(\epsilon^3) \|\MT\MP\MX\|_F^2 \\
& \leq  & O(\epsilon^2) d^2(\MP, \lspan{\MX}) + O(\epsilon^3) \|\MP\MX\|_F^2
\end{eqnarray*}
where the second inequality is from Lemma~\ref{lem:fastDisPCA_TP_l2}. This then leads to the first statement.

For the second statement, we have
\begin{eqnarray*}
(1+\epsilon)\|\MP \MX\|_F^2 - (1-\epsilon)\|\gpca{\MP} \MX\|_F^2  & \leq & \|\MT\MP \MX\|_F^2 - \|\MT\gpca{\MP} \MX\|_F^2 \\
& \leq & O(\epsilon) d^2(\MT\MP, \lspan{\MX}) + O(\epsilon) \|\MT\MP\MX\|_F^2 \\
& \leq  & O(\epsilon) d^2(\MP, \lspan{\MX}) + O(\epsilon) \|\MP\MX\|_F^2
\end{eqnarray*}
which leads to
\begin{eqnarray*}
\|\MP \MX\|_F^2 - \|\gpca{\MP} \MX\|_F^2 \leq O(\epsilon) d^2(\MP, \lspan{\MX}) + O(\epsilon) \|\MP\MX\|_F^2.
\end{eqnarray*}
A similar argument bounds $\|\gpca{\MP} \MX\|_F^2 - \|\MP \MX\|_F^2 $, which completes the proof.
\end{proof}

We represent Theorem~\ref{thm:fastDisPCA_coreset} in a general form for $\ell_2$-error geometric fitting problems.

\noindent
\textbf{Theorem~\ref{thm:fastDisPCA_coreset}.}
{\it
Suppose Algorithm~\ref{alg:fastDisPCA} takes $\epsilon \in (0,1/2]$, $t_1=t_2 = O(\max\cbr{ \frac{k}{\epsilon^2}, \log \frac{s}{\delta}}), \ell = {O}(\frac{d^2}{\epsilon^2}), q=O(\max\{\log \frac{d}{\epsilon},\log \frac{sk}{\epsilon}\})$ as input, and sets the failure probability of each local subspace embedding to $\delta' = \delta/2s$.
Let $\gpca{P}=\MP\MV \MV^\top$. 
Then with probability at least $1-\delta$,
there exists a constant $c_0 \geq 0$, such that
for any set of $k$ points $\x$,
\begin{eqnarray*} (1-\epsilon) d^2(\MP,\x) - \epsilon \|\MP\MX\|_F^2 \leq  d^2(\gpca{P}, \x) + c_0 \leq  (1+\epsilon) d^2(\MP,\x) + \epsilon \|\MP\MX\|_F^2
\end{eqnarray*}
where $\MX$ is an orthonormal matrix whose columns span $\x$.
The total communication is $O(skd/\epsilon^2)$ and the total time is
$O\left(\nnz(\MP) + s\left[\frac{d^3k}{\epsilon^4} + \frac{k^2 d^2}{\epsilon^6}\right] \log \frac{d}{\epsilon}\log\frac{sk}{\delta\epsilon}\right)$.
}

\begin{proof}
The proof of correctness follows the proof of Theorem~\ref{thm:coreset_gen}, replacing the use of Lemma~\ref{thm:DisPCA_P_l2} with Lemma~\ref{lem:fastDisPCA_P_l2}.

On each node $v_i$, the subspace embedding takes time $O(\nnz(\MP_i))$, and the randomized SVD takes time $O(q t_1\ell d+t_1^2(\ell +d))$;
on the central coordinator, the randomized SVD takes time $O(q t_1 (s t_1) d + t_1^2(s t_1+d))$ since $\MY$ has $O(s t_1)$ non-zero rows.
The total running time then follows from the choice of the parameters.
The total communication cost follows from the fact that the algorithm only sends $\tc{\MD_i}{t_1}, \tc{\MV_i}{t_1}$ from each node to the central coordinator.
\end{proof}

\end{document}